\documentclass[letterpaper]{article} 
\usepackage{aaai2027}  
\usepackage[hyphens]{url}  
\usepackage{graphicx} 
\graphicspath{{figures/}{../figures/}}
\usepackage{natbib}  
\usepackage{caption} 
\usepackage{algorithm}
\usepackage{algorithmic}
\usepackage{amsmath,amssymb,amsthm}
\usepackage{paralist}

\usepackage{newfloat}
\usepackage{listings}
\DeclareCaptionStyle{ruled}{labelfont=normalfont,labelsep=colon,strut=off} 
\floatstyle{ruled}
\newfloat{listing}{tb}{lst}{}
\floatname{listing}{Listing}

\usepackage{booktabs}

\usepackage{hyperref} 

\title{Quantitative Analysis of $\omega$-Regular Robust MDPs}
\author {
    Ali Asadi\textsuperscript{\rm 1},
    Krishnendu Chatterjee\textsuperscript{\rm 1},
    Ehsan Kafshdar Goharshady\textsuperscript{\rm 1},
    Mehrdad Karrabi\textsuperscript{\rm 1},
    Alipasha Montaseri\textsuperscript{\rm 1},
    Ali Shafiee\textsuperscript{\rm 1}
}
\affiliations {
    \textsuperscript{\rm 1}Institute of Science and Technology Austria\\
    ali.asadi@ist.ac.at, krichnendu.chatterjee@ist.ac.at, ehsan.goharshady@ist.ac.at, mehrdad.karrabi@ist.ac.at, alipasha.montaseri@ist.ac.at, ali.shafiee@ist.ac.at
}

\newtheorem{theorem}{Theorem}              
\newtheorem{lemma}[theorem]{Lemma}
\newtheorem{proposition}[theorem]{Proposition}
\newtheorem{corollary}[theorem]{Corollary}
\newtheorem{definition}[theorem]{Definition}

\newtheorem{remark}[theorem]{Remark}

\usepackage{thmtools}

\newcommand{\states}{\mathcal{S}}
\newcommand{\actions}{\mathcal{A}}
\newcommand{\trans}{\delta}
\newcommand{\mdp}{M}
\newcommand{\rmdp}{\mathcal{M}}
\newcommand{\agentpol}{\sigma}
\newcommand{\envpol}{\tau}
\newcommand{\uncert}{\mathcal{P}}
\newcommand{\allagentpols}{\Sigma}
\newcommand{\allenvpols}{\Gamma}
\newcommand{\sinit}{{s_0}}
\newcommand{\spec}{\varphi}
\newcommand{\val}{\boldsymbol{v}}
\newcommand{\Inf}{\mathsf{Inf}}
\newcommand{\coloring}{C}
\newcommand{\parity}{\mathsf{Parity}}
\newcommand{\reach}{\mathsf{Reach}}
\newcommand{\safe}{\mathsf{Safe}}
\newcommand{\target}{\mathcal{T}}
\newcommand{\rmc}{\mathcal{R}}
\newcommand{\AS}{\mathsf{AS}}

\newcommand{\cross}{\star}

\newcommand{\vertexgame}{\mathcal{G}}                 
\newcommand{\vertices}{\mathsf{Vert}}                 
\newcommand{\supp}{\mathsf{supp}}                     
\newcommand{\vc}[2]{\mathsf{VC}^{#1}(#2)}             
\newcommand{\tightacts}[2]{\actions^{#1}(#2)}         
\newcommand{\tightset}[3]{\uncert^{#1}(#2,#3)}        
\newcommand{\ProfSwitch}{\mathsf{Improve}}
\newcommand{\StratImp}{\mathsf{PI}}

\newcommand{\asoddsat}{\mathsf{OddEC}}
\newcommand{\mecdecomp}{\mathsf{MecDecomp}}
\newcommand{\uvector}{\boldsymbol{u}}
\newcommand{\recurrent}{\mathcal{RC}}

\nocopyright
\begin{document}

\maketitle

\begin{abstract}
Robust Markov Decision Processes (RMDPs) generalize classical MDPs by
allowing uncertainty in transition probabilities and optimizing against
their worst-case realization. We consider $(s,a)$-rectangular RMDPs with
\emph{linearly defined} uncertainty sets and study parity objectives, which are a
canonical representation of $\omega$-regular objectives. An uncertainty set
is linearly defined if it is described by linear inequalities over the
transition distribution together with auxiliary variables, which captures the
standard $L_1$ and $L_\infty$ balls as well as general polytopic uncertainty sets. The quantitative
value is the supremum, over all agent policies, of the satisfaction
probability guaranteed against the adversarial environment. Previous
work studied the qualitative analysis, namely the almost-sure (resp. positive) problem that asks whether a single agent policy guarantees
satisfaction with probability one (resp. positive probability) against every environment policy. In this work, we solve the exact quantitative problem.
Our contributions are threefold.
First, we show that both the agent and the environment admit pure
memoryless optimal policies.
Second, we give a polynomial-time algorithm for quantitative parity on
linearly defined robust Markov chains and use it as a subroutine in a
policy-iteration algorithm for RMDPs. The algorithm combines
quantitative one-step improvements with qualitative almost-sure
improvements. Finally, we report experiments comparing our
approach with the explicit reduction to stochastic games. 
\end{abstract}


\section{Introduction} \label{sec:intro}

\paragraph{Robust MDPs.}
Markov Decision Processes (MDPs) are a fundamental framework for
reasoning, decision making, and stochastic planning~\cite{puterman94}.
In a classical MDP, an agent chooses actions and the
next state is determined stochastically according to a known transition function. However,
transition probabilities are often estimated from data, and the exact
transition function is therefore not known. Robust Markov Decision
Processes (RMDPs) address this issue by associating every state-action
pair with an uncertainty set of possible transition
distributions~\cite{iyengar2005robust,nilim2005robust,wiesemann2013}. After observing
the action of the agent, an adversarial environment chooses a distribution
from the corresponding uncertainty set, and the goal of the agent is to
optimize against all such choices. This setting is known as
$(s,a)$-rectangular uncertainty. In this work, we consider \emph{linearly
defined} uncertainty sets, i.e.~sets described by rational linear inequalities
over the transition distribution together with auxiliary variables that are
existentially quantified. The auxiliary variables keep the description
succinct: the standard $L_1$ and $L_\infty$ balls around a nominal transition
distribution are linearly defined with linearly many inequalities, whereas an
$L_1$ ball has exponentially many facets in the transition probabilities alone.
Since every uncertainty set lies in the probability simplex, linearly defined
sets are precisely the convex polytopes, so this model subsumes general
polytopic uncertainty sets while admitting far more succinct descriptions.

\paragraph{Parity and $\omega$-Regular Objectives.}
Most of the RMDP literature considers quantitative objectives such as
discounted sum or long-run average reward. These objectives provide
performance guarantees, but they do not express the
logical correctness requirements that arise in safety-critical systems, e.g., an autonomous system is required to avoid unsafe configurations, eventually complete a task, or repeatedly provide a service. Such reachability, safety, and
liveness requirements are naturally expressed by $\omega$-regular
objectives. Parity objectives are a canonical representation of this
class. An $\omega$-regular objective can be represented by a finite
deterministic parity automaton. Combining this automaton with an RMDP yields a product
RMDP with a parity objective~\cite{model-checking-book}.
Hence, RMDPs with parity objectives provide a flexible and broad framework
for the analysis of probabilistic systems with logical objectives.

\paragraph{Quantitative Analysis.}
Given an RMDP and a logical objective, the value of a state is the
maximal satisfaction probability guaranteed by the agent 
against every environment choice. Qualitative analysis considers two 
questions. The almost-sure (resp. positive) problem asks whether there exists an agent policy
that guarantees satisfaction with probability one (resp. greater than zero) against every
environment choice. However, these qualitative analyses do not distinguish among policies when
no almost-sure guarantee exists. Hence, quantitative analysis is important
for finding optimal policies and computing the values.
The work of~\citet{AsadiCGKS26} gives algorithms for the positive and
almost-sure problems: polynomial time for reachability and safety objectives,
and quasi-polynomial time and polynomial space for all parity objectives.
However, the quantitative analysis remains open.

\paragraph{Contributions.} 
We solve the quantitative parity problem for RMDPs with linearly defined
uncertainty sets. Our contributions are threefold.
\begin{enumerate}
    \item We first consider robust Markov chains (RMCs), i.e., RMDPs in
    which the agent has a single action. For reachability objectives, we
    formulate a polynomial-size linear program. We then combine this formulation with an analysis
    of recurrent components to solve parity objectives, which
    yields a polynomial-time algorithm for the quantitative parity problem.

    \item We then present a policy-iteration algorithm for RMDPs. We
    establish that both the agent and the environment admit stationary
    memoryless optimal policies.
    Each iteration evaluates the RMC induced by the
    current agent policy and applies either a quantitative one-step
    improvement or a qualitative almost-sure improvement. 
    We show that the algorithm terminates
    with the exact value vector and optimal agent and environment policies.

    \item Finally, we implement our algorithms for reachability and
    parity objectives under $L_1$ and $L_\infty$ uncertainty. We compare
    our approach with the explicit reduction to a turn-based stochastic
    game. Our approach scales substantially better on the Garnet and
    Inventory benchmarks, whose branching factor grows with the instance
    size, whereas the explicit reduction is faster on the 
    Frozen Lake benchmark, which has a small branching factor.
\end{enumerate}

\paragraph{Related Work.}
Our policy-iteration algorithm is, in spirit, close to the
strategy-improvement method for stochastic parity and Rabin
games~\cite{chatterjee2005algorithms,chatterjee2006strategy}, but in our robust
setting the environment ranges over a continuum of distributions in an
uncertainty polytope rather than over the finitely many vertices of a stochastic
game. Our improvement therefore works directly with the linearly defined
uncertainty sets (through the value classes, tight actions, and tight faces of
Section~\ref{sec:rmdp}) and never constructs the exponentially larger game. The
closest prior work on logical objectives for RMDPs is~\citet{AsadiCGKS26}, which
solves the positive and almost-sure problems using uncertainty-set oracles. Our
qualitative improvement step invokes their almost-sure parity procedure as a
sub-routine, whereas the exact quantitative problem we solve here remained open.
An extended discussion of related work is deferred to Appendix~\ref{app:related}.

\section{Preliminaries and Model Description}

\paragraph{Notation.} For any finite set $A$ we use $\Delta(A)$ and $2^A$ to denote the set of all distributions over $A$ and the power-set of~$A$ respectively. We use $[n]$ as shorthand for $\{0,\dots,n\}$.

\paragraph{MDPs and MCs.}
\emph{A Markov decision process (MDP)} is a tuple $\mdp = (\states,\actions,\trans)$ where $\states$ is a finite set of states, $\actions$ is a finite set of actions and $\trans\colon \states \times \actions \to \Delta(\states)$ specifies the transition probabilities. A \emph{Markov chain (MC)} is an MDP whose action set is a singleton, i.e. $|\actions|=1$.

The semantics of MDPs are defined by \emph{policies}. An \emph{agent policy} is $\agentpol \colon (\states\times\actions)^*\times \states \to \Delta(\actions)$ that maps a history of states and actions to a distribution over actions. Given such a policy a token is placed on an initial state $s_0 \in \states$ and at the $i$-th step, the agent draws an action $a_i \sim \agentpol(s_0,a_0,\dots,s_{i-1},a_{i-1},s_i)$ and the next state $s_{i+1}$ is sampled from $\trans(s_i,a_i)$. Fixing an initial state $\sinit$, the cylinder construction of~\cite{model-checking-book} generates a probability distribution $\Pr^{\agentpol}_{\sinit}$ over infinite paths in the MDP. The semantics of MCs are defined analogously, but since there is no choice of actions, the next state is sampled directly from the transition probabilities.

\paragraph{RMDPs and RMCs.}
Robust Markov Decision Processes (RMDPs) extend MDPs by allowing uncertainty in the transition probabiliaties. Formally, an RMDP is a tuple $\rmdp = (\states,\actions,\uncert)$ where $\states$ and $\actions$ are as before, but $\uncert\colon \states \times \actions \to 2^{\Delta(\states)}$ maps each state-action pair to a set of possible transition distributions, called the uncertainty set. A Robust Markov Chain (RMC) is an RMDP where $|\actions|=1$, hence an RMC $\rmc$ is defined by a pair $(\states,\uncert)$. 


The semantics of RMDPs are defined by a pair of policies: an agent policy $\agentpol$ as before, and an environment policy $\envpol \colon (\states\times\actions)^* \times (\states\times\actions) \to \Delta(\states)$ that maps a history of states and actions to a distribution over next states where for every history $h \in (\states\times\actions)^*$ and $(s_i,a_i) \in \states \times \actions$, it holds that $\envpol(h,s_i,a_i) \in \uncert(s_{i},a_{i})$. As with MDPs, fixing an initial state $\sinit$ and a policy profile $(\agentpol,\envpol)$ in an RMDP $\rmdp$ induces a probability distribution $\Pr^{\agentpol,\envpol}$ over infinite paths in $\rmdp$. The semantics of RMCs are defined analogously.

\paragraph{Policies.} 
An agent policy $\agentpol$ is \emph{deterministic} if it maps every history to a single action and \emph{memoryless} if it only depends on the current state, i.e. $\agentpol(h,s) = \agentpol(h',s)$ for all histories $h,h'$ and state $s \in \states$. A \emph{positional} policy is both deterministic and memoryless. Conventionally, we assume positional policies are $\states \to \actions$ functions. We denote the set of all agent policies by $\allagentpols$ and the set of all environment policies by $\allenvpols$. Fixing a positional policy $\agentpol$ in an RMDP $\rmdp$, induces an RMC $\rmdp^\agentpol$ where the only available action at each state $s$ is $\agentpol(s)$ and the uncertainty set is defined as $\uncert^\agentpol(s) = \uncert(s,\agentpol(s))$.

\begin{remark}
    This work is concerned with $(s,a)$-rectangular RMDPs whose semantics are defined above: the environment observes the action taken by the agent and then chooses a distribution from the corresponding uncertainty set.
\end{remark}

\paragraph{Objectives and Values.}
An objective $\spec$ is a measurable set of infinite paths in $\rmdp$. Given an agent policy $\agentpol$ and an environment policy $\envpol$, we denote by $\Pr^{\agentpol,\envpol}_{\rmdp,s}[\spec]$ the probability that a path starting from state $s$ satisfies the objective $\spec$. The \emph{value} of an objective $\spec$ in an RMDP $\rmdp$ from $s$ is
$$\val^{\rmdp}_s(\spec) = \sup_{\agentpol \in \allagentpols} \inf_{\envpol \in \allenvpols} \Pr\nolimits^{\agentpol,\envpol}_{s}[\spec].$$
Values are defined analogously for RMCs.

We focus on \emph{parity} objectives. A parity objective $\parity(\coloring)$ is defined with respect to a coloring function $\coloring \colon \states \to [d]$. Particularly, $\parity(\coloring)$ is defined as the set of all infinite paths $\pi$ such that the maximum color that appears infinitely often in $\pi$ is even, i.e.
$$\parity(\coloring) = \{\pi \in \states^\omega \mid \max\{\coloring(s) \mid s \in \Inf(\pi)\} \text{ ~is~ even}\},$$
where $\Inf(\pi)$ denotes the set of states that occur infinitely often in $\pi$.
It is a classical result that every $\omega$-regular objective, including reachability, safety and liveness, can be reduced to a parity objective \cite{model-checking-book}.


\paragraph{Model Assumption.}
A \emph{linearly defined} uncertainty set $\uncert(s,a)$ is the projection of a polyhedron onto the probability coordinates, i.e.~there exist a matrix $A$ and a vector $b$, of dimensions compatible with $k$ auxiliary variables, where:
$$\uncert(s,a) = \Big\{ x \in \Delta(\states) \;\Big|\; \exists y \in \mathbb{R}^k \colon A \begin{bmatrix} x \\ y \end{bmatrix} \le b \Big\}.$$
Note that the auxiliary variables $y$ are existentially quantified, so a linearly defined set need not be cut out by linear inequalities over $x$ alone. This is strictly more expressive in terms of succinctness: an $L_1$ ball around a nominal distribution is linearly defined using only $O(|\states|)$ inequalities and auxiliary variables, whereas over $x$ alone it has exponentially many facets (see Appendix~\ref{app:uncertainty}). Finally, since $\uncert(s,a) \subseteq \Delta(\states)$ is bounded, every linearly defined uncertainty set is a convex polytope. Throughout this paper, we assume that every uncertainty set is linearly defined. This assumption is satisfied by the standard $L_1$ and $L_\infty$ balls (See Appendix~\ref{app:uncertainty}), as well as by the more general polytopic uncertainty sets.

\paragraph{Problem Statement.}
The main problem statement in this paper is summarized as follows:

\begin{center}
\fbox{%
\begin{minipage}{0.95\columnwidth}
Given a linearly defined RMDP $\rmdp$ and a parity objective $\parity(\coloring)$, the \emph{quantitative parity problem} asks to compute the value $\val^{\rmdp}_s(\parity(\coloring))$ for every state $s \in \states$.
\end{minipage}%
}
\end{center}

In Section~\ref{sec:rmc}, we present an algorithm for solving the quantitative parity problem in RMCs that runs in polynomial time. We will than use the RMC algorithm as a sub-procedure to solve the quantitative parity problem in RMDPs in Section~\ref{sec:rmdp}.
\section{Quantitative RMC Analysis Sub-Procedure} \label{sec:rmc}

We present a polynomial-time algorithm for solving the quantitative safety and parity objectives in linearly defined RMCs $\rmc = (\states,\uncert)$ where the environment's goal is to minimize the probability that the objective is satisfied. This is then used in the algorithm for solving RMDPs as a sub-procedure. In what follows, we first consider the case of safety objectives (Section~\ref{sec:RMC:reachability}) and present an LP-based algorithm for solving them. We then use the method for solving safety objectives and present a solution for the quantitative parity problem (Section~\ref{sec:RMC:parity}).

\subsection{Safety} \label{sec:RMC:reachability}
Given a set $\target \subseteq \states$, the \emph{safety} objective $\safe(\target)$ is the set of infinite paths that never leave $\target$. The environment is adversarial and \emph{minimizes} the probability of remaining safe, so the value at a state $s$ is $\val^{\rmc}_s(\safe(\target)) = \inf_{\envpol} \Pr\nolimits^{\envpol}_s[\safe(\target)]$. We compute this value for every $s$ with a single linear program, exploiting the duality of safety and reachability together with the flow-like characterization of reachability probabilities.


\paragraph{Pre-Processing.} We first dispose of the states whose safety value is $0$ or $1$. We use the polynomial-time method of~\cite{AsadiCGKS26} to compute the sets $\states^{=0}$ and $\states^{=1}$ of states with safety value $0$ and $1$, respectively. Our pre-processing step makes all states in $\states^{=0} \cup \states^{=1}$ absorbing. This leaves the states $\states^{?} = \states \setminus (\states^{=0} \cup \states^{=1})$, whose values are to be computed.

\paragraph{Flow-like Characterization.} Fix a source state $\sinit \in \states^{?}$ and picture injecting one unit of probability mass there. As the play unfolds, this mass is routed through the state space along the transition probabilities until it is absorbed in $\states^{=0} \cup \states^{=1}$. The adversarial environment moves this mass so as to \emph{minimize} the amount absorbed in $\states^{=1}$ and \emph{maximize} the amount absorbed in $\states^{=0}$. Recording, for each state $s$, the expected number of visits $x_s$, and for each edge $(s,s')$ the expected number of traversals $f_{s,s'}$, these quantities behave exactly like a conserved flow. They satisfy the following linear properties, which we later assemble into the LP:
\begin{compactitem}
  \item \textbf{Non-negativity.} No state is visited and no edge is traversed a negative number of times:
  \[
      \mathsf{NNeg} \;\equiv\; \bigwedge_{s \in \states^{?},\, s' \in \states} \big( x_s \ge 0 \;\wedge\; f_{s,s'} \ge 0 \big).
  \]

  \item \textbf{Conservation.} The expected number of visits to a state equals the mass injected there plus the total flow entering it, which is Kirchhoff's conservation law:
  \[
      \mathsf{Cons} \;\equiv\; \bigwedge_{s \in \states^{?}} \Big( x_s \;=\; \mathbf{1}[\,s = \sinit\,] \;+\; \sum_{s' \in \states^{?}} f_{s', s} \Big).
  \]

  \item \textbf{Splitting.} Each visit to a state is followed by exactly one outgoing edge, so the visits to a state are split among its outgoing flow:
  \[
      \mathsf{Split} \;\equiv\; \bigwedge_{s \in \states^{?}} \Big( \sum_{s' \in \states} f_{s, s'} \;=\; x_s \Big).
  \]

  \item \textbf{Admissibility.} The proportions in which the flow leaves a state form a distribution $\boldsymbol{f}_s / x_s$, where $\boldsymbol{f}_s = (f_{s,s'})_{s' \in \states}$, that the environment is allowed to pick, i.e. $\boldsymbol{f}_s / x_s \in \uncert(s)$. This membership is, at face value, nonlinear: it divides the flow variables by the visit variable. However, by the Assumption $\uncert(s) = \{p : A_s\, p \le b_s\}$, so the constraint reads $A_s (\boldsymbol{f}_s / x_s) \le b_s$; multiplying both sides by $x_s \ge 0$ preserves the inequality and clears the denominator, yielding the equivalent linear constraint
  \[
      \mathsf{Adm} \;\equiv\; \bigwedge_{s \in \states^{?}} \big( A_s\, \boldsymbol{f}_s \;\le\; x_s\, b_s \big).
  \]
  This is jointly linear in the flow variables $\boldsymbol{f}_s$ and the visit variable $x_s$, since $A_s$ and $b_s$ are constants.
\end{compactitem}
Since $\states^{=1}$ is absorbing, the probability of reaching $\states^{=1}$ from $\sinit$ is simply the total flow that crosses into $\states^{=1}$. Hence, the environment minimises $\sum_{\substack{s \in \states^{?} \\ w \in \states^{=1}}} f_{s, w}$, which is therefore the objective of the LP. The following lemma (proved in Appendix~\ref{app:rmc:reach}) formalises the above discussion and establishes the correctness of the LP:

\begin{lemma}\label{lem:rmc:reachability}
Given a linearly defined RMC $\rmc$, initial state $\sinit \in \states^{?}$ and with safe set $\target \subseteq \states$, let $\val^*$ be the solution to the following linear program:
\begin{equation}\label{eq:rmc:reach:lp}\tag{$\cross$}
    \begin{aligned}
        \min_{x,\, f} \quad & \sum_{\substack{s \in \states^{?},\, w \in \states^{=1}}} f_{s, w} \\[2pt]
        \text{s.t.} \quad & \mathsf{NNeg} \,\wedge\, \mathsf{Cons} \,\wedge\, \mathsf{Split} \,\wedge\, \mathsf{Adm},
    \end{aligned}
\end{equation}
then $\val^{\rmc}_{\sinit}(\safe(\target)) = \val^*$.
\end{lemma}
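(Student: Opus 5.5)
The plan is to prove the two inequalities $\val^* \le \val^{\rmc}_{\sinit}(\safe(\target))$ and $\val^* \ge \val^{\rmc}_{\sinit}(\safe(\target))$ separately. In both directions I pass between feasible LP solutions and memoryless environment policies $\envpol$, which after pre-processing induce finite Markov chains with absorbing set $\states^{=0}\cup\states^{=1}$. Two facts about $\states^{?}$ are used throughout. First, every state outside $\target$ has safety value $0$, so $\states^{?}\subseteq\target$. Second, since $\states^{=1}$ is absorbing, once a play reaches $\states^{=1}$ it is safe with probability one. Consequently, under any $\envpol$, the probability $\Pr^{\envpol}_{\sinit}[\safe(\target)]$ equals the probability of reaching $\states^{=1}$ plus the probability of staying in $\states^{?}$ forever.

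\textbf{Direction $\val^* \ge$ value.} Take any feasible $(x,f)$. Define a memoryless policy by $\envpol(s)=\boldsymbol{f}_s/x_s$ when $x_s>0$; this lies in $\uncert(s)$ by $\mathsf{Adm}$, after dividing by $x_s>0$ (and handling the auxiliary variables by carrying a scaled copy $y_s$ into the LP). When $x_s=0$, let $\envpol(s)$ be an arbitrary element of $\uncert(s)$. Let $P$ be the substochastic matrix of $\envpol$ restricted to $\states^{?}$. Then $\mathsf{Cons}$ and $\mathsf{Split}$ give $x = e_{\sinit} + P^{\top}x$ on the states with $x_s>0$. Rows with $x_s=0$ contribute nothing: $\mathsf{Adm}$ together with $\mathsf{Split}$ and $\mathsf{NNeg}$ force $\boldsymbol{f}_s=0$ there, which I will check from $\uncert(s)\subseteq\Delta(\states)$ containing the constraint $\sum p=1$.

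I then use the standard minimality argument: iterating $x = e_{\sinit}+P^{\top}x$ gives $x\ge\sum_{k<n}(P^{\top})^k e_{\sinit}$ for all $n$. Hence $x \ge N$, where $N$ is the vector of expected visit counts under $\envpol$. Two consequences follow.
\begin{enumerate}
\item $N$ is finite. Therefore no state of $\states^{?}$ is visited infinitely often with positive probability, and so the probability of staying in $\states^{?}$ forever is $0$.
\item The objective satisfies $\sum_s x_s\,\envpol(s)(\states^{=1}) \ge \sum_s N_s\,\envpol(s)(\states^{=1})$, and the right-hand side is exactly $\Pr^{\envpol}_{\sinit}[\text{reach } \states^{=1}]$.
\end{enumerate}
Combining these, the objective is at least $\Pr^{\envpol}_{\sinit}[\safe(\target)]$, which is at least the value.

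\textbf{Direction $\val^* \le$ value.} Here I need an optimal memoryless environment policy $\envpol^*$. I would obtain it either from the results of \citet{AsadiCGKS26} on RMCs, or by the reduction to the finite turn-based stochastic game in which the environment picks vertices of the polytopes $\uncert(s)$; safety games admit positional optimal strategies, and mixing vertices only yields points of the convex set $\uncert(s)$.

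The key step, which I expect to be the main obstacle, is showing that under $\envpol^*$ the play leaves $\states^{?}$ almost surely. Suppose instead that some bottom SCC $B$ of the chain induced by $\envpol^*$ lies inside $\states^{?}$. Every state of $B$ lies in $\target$, so from $B$ the play is safe with probability $1$ under $\envpol^*$. Since $\envpol^*$ is optimal from every state, the value of the states in $B$ would be $1$, which contradicts $B\subseteq\states^{?}$. I will need to make sure optimality holds uniformly from all states, which positional optimality in the game does give.

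With almost-sure absorption established, the visit counts $N$ under $\envpol^*$ are finite. Set $x=N$ and $f_{s,s'}=N_s\,\envpol^*(s)(s')$. This satisfies $\mathsf{NNeg}$, $\mathsf{Cons}$ and $\mathsf{Split}$ by the standard renewal equations, and $\mathsf{Adm}$ by scaling the witness $y$ by $N_s$. Its objective equals the probability of reaching $\states^{=1}$, which equals the safety probability because $\states^{?}$ is left almost surely, and this is the value. Hence $\val^*\le$ value, which completes the argument.
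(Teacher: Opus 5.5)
Your proof is correct and follows essentially the same route as the paper. Both directions pass between feasible LP points and positional policies via $\boldsymbol{f}_s/x_s$ and occupation measures: your Neumann-series domination $x \ge N$ is the paper's induction on truncated occupations, and a BSCC argument shows that an optimal positional policy leaves $\states^{?}$ almost surely. There are two small differences. First, you rule out a BSCC inside $\states^{?}$ via uniform optimality (its states would have value $1$), whereas the paper uses a policy-switching argument that needs optimality only at $\sinit$. Second, you argue in the pre-processed model rather than building the paper's continuation policy $\widehat{\envpol}$; there you should state explicitly that $\states^{=0}$ is made \emph{unsafe} absorbing, as in Lemma~\ref{lem:app:qual}, since your identity ``safety $=$ reach $\states^{=1}$ $+$ stay in $\states^{?}$'' depends on it.
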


Building on this, we obtain that the safety values of an RMC, together with optimal environment policies, can be computed in polynomial time.

\begin{theorem}\label{thm:rmc:safety-poly}
Given a linearly defined RMC $\rmc$ with safety objective $\safe(\target)$, for each fixed $s$, the value $\val^{\rmc}_{s}(\safe(\target))$, together with a positional environment policy that attains it, can be computed in time polynomial in the size of $\rmc$.
\end{theorem}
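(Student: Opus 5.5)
The plan is to combine the polynomial-time pre-processing of~\cite{AsadiCGKS26} with Lemma~\ref{lem:rmc:reachability}, and then extract a positional environment policy from an optimal LP solution.

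\textbf{Computing the value.} First compute $\states^{=0}$ and $\states^{=1}$ in polynomial time using~\cite{AsadiCGKS26}. If $s$ lies in either set, the value is $0$ or $1$. In those cases an attaining positional policy also comes from the qualitative analysis: inside $\states^{=0}$ the environment can almost surely force leaving $\target$ positionally, and inside $\states^{=1}$ any choice works.

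Otherwise $s \in \states^{?}$, and we build the LP~\eqref{eq:rmc:reach:lp} with $\sinit = s$. When $\uncert(s')$ carries auxiliary variables $y$, the admissibility constraint must be stated carefully. For each $s' \in \states^{?}$ we add homogenized auxiliary variables $z_{s'}$ (standing for $x_{s'} y$) and write
\[
A_{s'} \begin{bmatrix} \boldsymbol{f}_{s'} \\ z_{s'} \end{bmatrix} \le x_{s'}\, b_{s'}.
\]
Its correctness for $x_{s'}>0$ follows by dividing through by $x_{s'}$. For $x_{s'}=0$ it follows from boundedness of the projected polytope: the recession cone restricted to the $x$-coordinates is $\{0\}$, which forces $\boldsymbol{f}_{s'}=0$, as intended. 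The resulting LP has polynomially many variables and constraints with coefficients taken from the input, so the ellipsoid or interior-point method solves it exactly in polynomial time. By Lemma~\ref{lem:rmc:reachability}, its optimum equals $\val^{\rmc}_s(\safe(\target))$. Boundedness of the LP follows because the objective is nonnegative; feasibility follows from any environment choice, using the fact that mass is absorbed almost surely, which I address below.

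\textbf{Extracting a policy.} Take an optimal vertex solution $(x^*,f^*)$. For every $s'\in\states^{?}$ with $x^*_{s'}>0$, define $\envpol(s') = \boldsymbol{f}^*_{s'}/x^*_{s'}$. This belongs to $\uncert(s')$ by $\mathsf{Adm}$ together with $\mathsf{Split}$, since the entries sum to one. For states with $x^*_{s'}=0$, which are unreachable under this flow, pick any element of $\uncert(s')$ (computable by one LP feasibility call), or the qualitative policy on $\states^{=0}$. Under $\envpol$, the induced Markov chain restricted to the states reachable from $s$ is exactly described by the flow. Hence $x^*$ solves the expected-visit equations $x = e_s + P^\top x$ on the transient part. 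It follows that the absorption probability into $\states^{=1}$ equals the objective value $\val^*$, so $\envpol$ attains the value.

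\textbf{Main obstacle.} The delicate step is justifying that the flow truly corresponds to $\envpol$'s visit counts. The linear system $x = e_s + P^\top x$ may have non-unique or spurious solutions when $P$ has closed recurrent classes inside $\states^{?}$, where mass circulates forever. I would handle this using the pre-processing. A closed class inside $\states^{?}$ never reaches $\states^{=0}$, so those states would be safe with probability one under that policy. Moreover, finite $x^*$ is incompatible with positive mass trapped forever. I will show that $\mathsf{Cons}$ with finite $x^*$ forces all mass to be absorbed: summing $\mathsf{Cons}$ and $\mathsf{Split}$ over $\states^{?}$ gives total outflow into the absorbing sets equal to $1$. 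Then a standard argument, that finite expected visits implies transience, identifies $x^*$ with the minimal nonnegative solution, which equals the actual expected visit counts. This is the part I expect to be most subtle, and it is presumably also the core of the proof of Lemma~\ref{lem:rmc:reachability}, which I may reuse directly.
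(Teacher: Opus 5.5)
Your proposal is correct and follows essentially the paper's route. You compute $\states^{=0},\states^{=1}$ via \citet{AsadiCGKS26}, solve $(\cross)$ once per source with the homogenised auxiliary variables of Appendix~\ref{app:uncertainty}, invoke Lemma~\ref{lem:rmc:reachability}, and read off the positional policy $\boldsymbol{f}^*_{s}/x^*_{s}$ on $\states^{?}$, patched with the qualitative policies on $\states^{=0}\cup\states^{=1}$.

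Two small corrections. First, feasibility of $(\cross)$ is witnessed by the \emph{optimal} positional policy (Lemma~\ref{lem:app:preproc}), not by an arbitrary environment choice: an arbitrary choice may keep mass in $\states^{?}$ forever and so have infinite occupation. Second, for attainment you only need the inequality $x^{\envpol}_s\le x^*_s$, which the paper gets by induction on truncated occupations; fully identifying $x^*$ with the visit counts is more than is required.
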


\subsection{Parity} \label{sec:RMC:parity}

In this section, we solve the quantitative parity problem by reducing it to a safety computation. Recall that the environment is adversarial and \emph{minimizes} the probability of satisfying $\parity(\coloring)$. We first show a polynomial-time algorithm that computes the set $W_{\mathrm{odd}}$ of states lying in an end-component whose dominant color is odd implying that the environment can violate $\parity(\coloring)$ with probability $1$. We then show that the parity value at any state equals the value of $\safe(\states \setminus W_{\mathrm{odd}})$, i.e., the environment's goal is to minimize the probability of not reaching $W_{\mathrm{odd}}$. This value is then computed by the safety variant of the LP of Section~\ref{sec:RMC:reachability}.

\subsubsection{Almost-Sure Violation of Parity.}
We compute the set $W_{\mathrm{odd}}$ of states lying in an end-component from which the environment can violate $\parity(\coloring)$ with probability $1$. We note that the algorithm of~\cite{AsadiCGKS26} can be used to compute this qualitative information too, however, their method requires super-polynomial time in the worst case, while our goal is to provide a polynomial-time algorithm. 

We first define the notion of \emph{maximal end-components} (MECs) in RMCs, which are the analogue of MECs in MDPs: 

\begin{definition}\label{def:ec}
For a set $U \subseteq \states$ and a state $s \in U$, let
$\uncert_U(s) = \{\, p \in \uncert(s) \;:\; \supp(p) \subseteq U \,\}$
be the set of distributions at $s$ with which the environment can keep the play inside $U$.
A set $U \subseteq \states$ is an \emph{end-component} (EC) of an RMC $\rmc$ if the following conditions hold:
\begin{compactenum}
    \item For every state $s \in U$, we have $\uncert_U(s) \neq \emptyset$.
    \item The graph induced by $\uncert_U$ on $U$ is strongly connected.
\end{compactenum}
Furthermore, $U$ is a maximal EC (MEC) if there is no end-component $U' \subseteq \states$ such that $U \subset U'$.
\end{definition}

Intuitively, whenever the play enters a (maximal) EC, the environment can ensure that the play remains in the end-component forever. The environment's goal would then be to reach an end-component in which the parity objective can be violated almost-surely.

The $\mecdecomp$ procedure in Algorithm~\ref{algo:mec-decomp} of Appendix~\ref{app:rmc:mec-decomp} takes an RMC $\rmc$ as input and returns its MECs in polynomial time. The algorithm is based on the standard algorithm for computing MECs in MDPs~\cite{ChatterjeeH11}, and utilizes the polytopic structure of the uncertainty sets.

Algorithm~\ref{algo:as-odd-sat} illustrates the $\asoddsat$ procedure which utilizes $\mecdecomp$ to compute the set $W_{\mathrm{odd}}$ as the union of those ECs where the environment can almost-surely violate $\parity(\coloring)$ in them. Since the play can be confined to a MEC and visits all of its states infinitely often, the dominant color the environment can enforce in a MEC is its highest color $m_i$: if $m_i$ is odd, the environment stays in the MEC forever and makes the dominant color odd, so the whole MEC violates the objective and is added to $W_{\mathrm{odd}}$; otherwise the environment must avoid the even color $m_i$ becoming dominant, so we discard the states of color $m_i$ and recurse on the rest. 

\begin{algorithm}[t]
\caption{$\asoddsat(B)$}
\label{algo:as-odd-sat}
\begin{algorithmic}[1]
\REQUIRE A subset $B \subseteq \states$; the top-level call uses $B = \states$.
\ENSURE Union of ECs in $B$ whose dominant color is odd. 
\STATE $M_1, \dots, M_k \gets \mecdecomp(B)$
\STATE $W \gets \emptyset$
\FOR{$i = 1, \dots, k$}
    \STATE $m_i \gets \max_{s \in M_i} \coloring(s)$
    \STATE \algorithmicif\ $m_i$ is odd \algorithmicthen\ $W \gets W \cup M_i$
        \STATE \algorithmicelse\ $W \gets W \cup \asoddsat(M_i \setminus \coloring^{-1}(m_i))$
\ENDFOR
\STATE \textbf{return} $W$
\end{algorithmic}
\end{algorithm}

The following lemma (proved in Appendix~\ref{app:rmc:parity}) establishes the correctness and complexity of the $\asoddsat$ procedure.

\begin{lemma} \label{thm:rmc:almost-sure-parity}
    $\asoddsat$ computes $W_{\mathrm{odd}}$ in polynomial time.
\end{lemma}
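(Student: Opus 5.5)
We prove the two claims separately: correctness, i.e.\ that the returned set equals $W_{\mathrm{odd}}$, the union of all end-components $U$ (in the sense of Definition~\ref{def:ec}) with $\max_{s\in U}\coloring(s)$ odd; and a polynomial running-time bound. Throughout we rely on the stated properties of $\mecdecomp$: on input $B$ it returns, in polynomial time, the maximal ECs of the sub-RMC restricted to $B$. These are pairwise disjoint, because the union of two overlapping ECs is again an EC. To see this, note that each $\uncert_U(s)$ is a polytope, so a convex combination of distributions keeping the play in $U$ resp.\ $U'$ yields a distribution whose support is the union of both supports. The induced graph is therefore the union of the two strongly connected graphs, which is strongly connected since they share a vertex.

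\textbf{Soundness} ($W \subseteq W_{\mathrm{odd}}$). We argue by induction on $|B|$. If $m_i$ is odd, then $M_i$ is itself an EC with odd maximal color, so it lies in $W_{\mathrm{odd}}$. Otherwise the recursive call returns a union of ECs of the sub-RMC on $M_i\setminus\coloring^{-1}(m_i)$. Every EC of a restricted sub-RMC is an EC of $\rmc$, since $\uncert_U$ depends only on $U$. Hence these are ECs of $\rmc$ with odd top color. To justify the informal claim that such ECs allow almost-sure violation, we exhibit an environment policy. At each $s\in U$ pick a relative-interior point of $\uncert_U(s)$, for instance the average of one distribution realizing each edge of the induced graph. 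This gives a memoryless policy under which $U$ is a closed recurrent class that visits every state of $U$ infinitely often almost surely. The maximal color seen infinitely often is then the odd maximum, so parity is violated.

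\textbf{Completeness} ($W_{\mathrm{odd}} \subseteq W$). This is the main step. We show by induction on $|B|$ that every EC $U\subseteq B$ with odd maximal color is contained in the output of $\asoddsat(B)$. Such a $U$ lies inside a unique MEC $M_i$ of $B$. If $m_i$ is odd, $M_i\supseteq U$ is added entirely. If $m_i$ is even, then $m_i\neq\max\coloring(U)$, which is odd; since $\max\coloring(U)\le m_i$, we get $\max\coloring(U)<m_i$, so $U$ contains no state of color $m_i$. Thus $U\subseteq M_i\setminus\coloring^{-1}(m_i)$, and $U$ is still an EC there, so the induction hypothesis applies. The delicate point is making the EC notion relative to $B$ precise, so that the ECs of a subset are exactly the ECs of $\rmc$ contained in it. This follows directly from Definition~\ref{def:ec}, because the conditions refer only to $U$.

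\textbf{Complexity.} Each recursive call on a set $B'$ that came from a MEC $M_i$ with even top color removes at least one state, since some state has color $m_i$. The calls made at any given recursion depth act on pairwise disjoint sets. The recursion depth is therefore at most $\min(d+1,|\states|)$, as the top color strictly decreases with each level. Each level performs $\mecdecomp$ on disjoint sets whose total size is at most $|\states|$. Hence there are polynomially many calls to $\mecdecomp$, each polynomial, and the whole procedure runs in polynomial time.
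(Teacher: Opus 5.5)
Your proof is correct and follows the paper's argument. Completeness is the same induction through the recursion (the first part of the paper's Confinement lemma, Lemma~\ref{lem:app:parity:target}). Soundness rests on the same observation: the output is a union of MECs with odd top color, each an end-component of $\rmc$, and averaging edge witnesses turns each into a single recurrent class. The only difference is the complexity accounting. You compose the polynomial bound of $\mecdecomp$ as a black box over the $\asoddsat$ levels, which is enough for the statement. The paper instead counts the nested recursions of $\asoddsat$ and $\mecdecomp$ jointly, using disjoint sets per level and $O(|B|^2)$ queries per invocation, to get a total of $O(|\states|^3)$ linear programs, the sharper bound used later in Theorem~\ref{thm:complexity}.
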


\subsubsection{Quantitative Satisfaction of Parity.}
Having computed $W_{\mathrm{odd}}$, we reduce the quantitative parity problem to the safety computation of Section~\ref{sec:RMC:reachability}. From any state in $W_{\mathrm{odd}}$ the environment violates $\parity(\coloring)$ almost-surely, and, conversely, under every environment policy, almost every play settles in an EC. 
So, under every environment policy, the probability of violating
$\parity(\coloring)$ is at most the probability of reaching $W_{\mathrm{odd}}$,
and the environment attains it by playing optimally for reachability of
$W_{\mathrm{odd}}$ and then enforcing an odd dominant color. The following lemma establishes this reduction, and is proved in Appendix~\ref{app:rmc:parity}.

\begin{lemma} \label{lem:rmc:parity}
    Given a linearly defined RMC $\rmc = (\states, \uncert)$ it holds that $\val^{\rmc}_{s}(\parity(\coloring)) = \val^{\rmc}_{s}\big(\safe(\states \setminus W_{\mathrm{odd}})\big)$.
\end{lemma}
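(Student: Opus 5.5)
We prove the equality by showing the two inequalities separately. Throughout, $W_{\mathrm{odd}}$ is the set returned by $\asoddsat$ (Lemma~\ref{thm:rmc:almost-sure-parity}), and we use the following property of it: from every state $w \in W_{\mathrm{odd}}$ there is a positional environment policy $\envpol_{\mathrm{odd}}$ that keeps the play inside $W_{\mathrm{odd}}$ and violates $\parity(\coloring)$ with probability $1$. To build $\envpol_{\mathrm{odd}}$, take each EC $U$ that is added to $W$ with odd top color. Inside $U$, at every state $s\in U$, pick a distribution in the relative interior of $\uncert_U(s)$; since $\uncert_U(s)$ is a face-like convex subset of the polytope $\uncert(s)$, such a choice attains the maximal support within $U$. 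The induced Markov chain on $U$ is then strongly connected, so every state of $U$, including one of odd maximal color, is visited infinitely often almost surely. The recursive ECs produced on $M_i\setminus\coloring^{-1}(m_i)$ are closed under their own $\uncert_U$ and never touch the removed color, so the same argument applies to them. Since $W_{\mathrm{odd}}$ is a union of such sets, we fix one EC per state, prioritised by the order in which they are added.

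\textbf{Upper bound} ($\val_s(\parity)\le\val_s(\safe(\states\setminus W_{\mathrm{odd}}))$). By Theorem~\ref{thm:rmc:safety-poly}, fix a positional environment policy $\envpol_1$ that is optimal for safety, i.e.\ one that maximizes the probability of reaching $W_{\mathrm{odd}}$. Define $\envpol$ to play $\envpol_1$ until $W_{\mathrm{odd}}$ is reached and $\envpol_{\mathrm{odd}}$ afterwards. Every play that reaches $W_{\mathrm{odd}}$ violates parity almost surely. Hence
\[
\Pr^{\envpol}_s[\parity(\coloring)] \le 1-\Pr^{\envpol_1}_s[\text{reach } W_{\mathrm{odd}}] = \val_s(\safe(\states\setminus W_{\mathrm{odd}})).
\]

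\textbf{Lower bound}. Fix an arbitrary, possibly history-dependent, environment policy $\envpol$. We show that almost surely, if a play violates parity, then it visits $W_{\mathrm{odd}}$; this gives $\Pr^{\envpol}[\neg\parity]\le\Pr^{\envpol}[\text{reach }W_{\mathrm{odd}}]\le 1-\val_s(\safe(\states\setminus W_{\mathrm{odd}}))$.

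The key step, and the main obstacle, is an end-component lemma for RMCs: almost surely the set $\Inf(\pi)$ is an EC in the sense of Definition~\ref{def:ec}. The MDP proof does not transfer directly, because the environment chooses from a continuum of distributions, and those choices can have vanishing probability mass on an edge over time.

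The approach is to use the finitely many faces of the polytopes. For every $U\subseteq\states$ and $s\in U$, define $\eta>0$ as the minimum, over vertices $v$ of $\uncert(s)$ with $v\notin \uncert_U(s)$, of the mass that $v$ places outside $U$. By polytope structure, any $p\in\uncert(s)\setminus\uncert_U(s)$ decomposes into vertices and has positive mass outside $U$. The difficulty is that this mass is not uniformly bounded below: $p$ may be arbitrarily close to $\uncert_U(s)$. The expected route is therefore to argue through edges rather than exits. Almost surely, every edge $(s,t)$ taken infinitely often lies in $\Inf$. By a Borel–Cantelli / Lévy 0-1 argument, if $\Inf(\pi)=U$, then from some point on the play stays in $U$, and the states of $U$ are visited infinitely often. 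At each state $s\in U$ the chosen distributions put all their mass on $U$ eventually; more precisely, the total leak $\sum$ of mass outside $U$ is finite almost surely. This means the chosen distributions lie in $\uncert(s)$ with outside mass tending to $0$. By compactness and closedness, their limit points lie in $\uncert_U(s)$, which is therefore nonempty.

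Strong connectivity of the graph induced by $\uncert_U$ needs more care. Edges taken infinitely often must be supported by some distribution in $\uncert_U(s)$. For this, we use that the support of the relative interior of $\uncert_U(s)$ is the maximal support there, and that an edge can be used with non-summable probability only if the face it belongs to meets $\uncert_U(s)$. If this is delicate, a fallback is to appeal to the corresponding statement of \citet{AsadiCGKS26} for robust MDPs.

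Given the EC lemma, the remainder is short. If $\Inf(\pi)=U$ is an EC with odd maximal color, we follow the recursion of $\asoddsat$. At each level, $U$ is contained in some MEC $M_i$ of the current set $B$. If $m_i$ is odd, then $U\subseteq M_i\subseteq W_{\mathrm{odd}}$. If $m_i$ is even, then $U$ cannot contain color $m_i$, since its maximal color is odd and at most $m_i$; hence $U\subseteq M_i\setminus\coloring^{-1}(m_i)$, and we recurse. The recursion terminates, so $U\subseteq W_{\mathrm{odd}}$, and the play visits $W_{\mathrm{odd}}$.
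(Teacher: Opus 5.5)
\textbf{A different route, with a gap in its key lemma.} Your direction $\val^{\rmc}_s(\parity(\coloring))\le\val^{\rmc}_s(\safe(\states\setminus W_{\mathrm{odd}}))$ matches the paper: an optimal reachability policy for $W_{\mathrm{odd}}$ glued to a confining policy, with your relative-interior choice playing the role of the paper's averaging.

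The reverse direction is where you depart, and where the gap is. The paper does not treat history-dependent environments there. It uses the vertex-MDP reduction (Remark~\ref{rem:app:rmc-mdp}) to restrict $\sup_{\envpol}\Pr^{\envpol}_s[\neg\parity(\coloring)]$ to positional policies. For a positional policy the induced chain is finite, so each BSCC is trivially an EC (Lemma~\ref{lem:app:parity:bound}). You instead need an end-component theorem for RMCs under arbitrary policies, and you leave its strong-connectivity half unproved. That half says: on $\{\Inf(\pi)=U\}$, every edge $(s,t)$ traversed infinitely often lies in $E_U$, i.e.\ some $q\in\uncert_U(s)$ has $q(t)>0$.

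Your two stated reasons do not establish this. Maximal support of the relative interior only describes which edges $\uncert_U(s)$ offers, not that a traversed edge is one of them. The sentence ``an edge can be used with non-summable probability only if the face it belongs to meets $\uncert_U(s)$'' restates the claim. Without this half you cannot conclude that $\Inf(\pi)$ is an EC, so the $\asoddsat$ recursion of your last paragraph cannot be applied to it. The fallback citation is not an argument.

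\textbf{How to close it.} The $\eta$ you introduced and then set aside does the job, used as a ratio bound rather than as a lower bound on the leak. Suppose every $q\in\uncert_U(s)$ has $q(t)=0$, and write $p\in\uncert(s)$ as a convex combination of vertices.
\begin{itemize}
\item A vertex with no mass outside $U$ lies in $\uncert_U(s)$, so it puts no mass on $t$.
\item Every other vertex $v$ satisfies $v(t)\le 1\le v(\states\setminus U)/\eta$. If no vertex leaks, $p(t)=0$ outright.
\end{itemize}
Hence $p(t)\le p(\states\setminus U)/\eta$ for all $p\in\uncert(s)$.

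Now let $p_n$ be the distribution chosen at time $n$. By L\'evy's conditional Borel--Cantelli lemma, on $\{\Inf(\pi)=U\}$ the leak $\sum_n\mathbf{1}[X_n=s]\,p_n(\states\setminus U)$ is almost surely finite. By the ratio bound, $\sum_n\mathbf{1}[X_n=s]\,p_n(t)$ is then finite, so $(s,t)$ is traversed only finitely often, almost surely.

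The same bound also gives condition~1 of Definition~\ref{def:ec} without compactness: if $\uncert_U(s)=\emptyset$, every $p\in\uncert(s)$ leaks at least $\eta$. Strong connectivity then follows because the tail of the play uses only infinitely-often-traversed edges and visits all of $U$, so those edges make $G_U$ strongly connected. Alternatively, simulate $\envpol$ by a randomized policy of the vertex MDP and invoke the classical MDP end-component theorem.

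\textbf{Comparison.} Once repaired, your route gives a structural fact valid for every environment policy and needs no restriction to positional environments in this direction. The paper's route is shorter because the vertex reduction places everything inside a finite Markov chain.
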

The right-hand side is exactly the value computed by the LP $(\cross)$ of Section~\ref{sec:RMC:reachability}, instantiated with unsafe set $U = W_{\mathrm{odd}}$. This yields the parity values, and optimal positional environment policies, in polynomial time:
\begin{theorem} \label{thm:rmc:parity-poly}
    Given a linearly defined RMC $\rmc$ with parity objective $\parity(\coloring)$, for all $s \in \states$ the value $\val^{\rmc}_{s}(\parity(\coloring))$, together with a positional environment policy that attains it, can be computed in time polynomial in the size of $\rmc$.
\end{theorem}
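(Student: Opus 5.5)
Theorem~\ref{thm:rmc:parity-poly} follows by composing the three earlier results. Lemma~\ref{lem:rmc:parity} reduces parity to safety, Lemma~\ref{thm:rmc:almost-sure-parity} computes the target set, and Theorem~\ref{thm:rmc:safety-poly} solves the safety problem. The extra work is to produce a single positional environment policy that attains the parity value from every state simultaneously.

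\textbf{Step 1 (values).} Run $\asoddsat(\states)$ to obtain $W_{\mathrm{odd}}$ in polynomial time by Lemma~\ref{thm:rmc:almost-sure-parity}. By Lemma~\ref{lem:rmc:parity}, $\val^{\rmc}_s(\parity(\coloring)) = \val^{\rmc}_s(\safe(\states\setminus W_{\mathrm{odd}}))$ for every $s$. For each of the $|\states|$ states, apply Theorem~\ref{thm:rmc:safety-poly}, which solves the pre-processing and the LP~$(\cross)$ with $\states^{=1}$ and $\states^{=0}$ computed for $\target=\states\setminus W_{\mathrm{odd}}$. This gives all values in polynomial time. States in $W_{\mathrm{odd}}$ lie in $\states^{=0}$ and get value $0$.

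\textbf{Step 2 (policy off $W_{\mathrm{odd}}$).} Theorem~\ref{thm:rmc:safety-poly} yields a positional policy per source state, and these policies could differ across sources. To obtain one uniform policy, I would use an optimal LP solution for a source $\sinit$. At every $s$ with $x_s>0$, read off the distribution $\boldsymbol f_s/x_s\in\uncert(s)$. A cleaner route is to solve a single LP in which one unit of mass is injected at every state of $\states^{?}$ and the summed objective is minimized. Since the per-source LPs are independent apart from the shared policy, this yields one admissible positional choice per state.

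States with $x_s=0$ need some choice. Any choice is harmless if they are unreachable under the chosen policy. If they are reachable from another source, the summed LP already covers them, because each of them is itself a source with $x_s\ge 1$.

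Optimality of the induced positional policy then follows from Lemma~\ref{lem:rmc:reachability}. Under a positional policy, expected visit counts are exactly a feasible flow, and conversely an optimal flow defines the policy. The one thing to check is that mass does not circulate forever in $\states^{?}$. Finiteness of the $x_s$ in the optimal LP solution rules this out.

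\textbf{Step 3 (policy on $W_{\mathrm{odd}}$ and on $\states^{=0}$).} On each MEC added to $W_{\mathrm{odd}}$ with odd maximal color $m_i$, the environment must stay inside the EC and visit a state of color $m_i$ infinitely often. I would pick a positional policy that selects, at each state, a distribution in $\uncert_U(s)$ whose support realizes all edges of the strongly connected graph induced by $\uncert_U$. Such a distribution exists: for each edge, take a point of the polytope $\uncert_U(s)$ using that edge, and take a convex combination of these points. The resulting Markov chain on $U$ is irreducible, so every state, including one of color $m_i$, is visited infinitely often almost surely.

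For ECs found in recursive calls, apply the same construction inside the sub-EC where the top color is odd. The remaining states of $\states^{=0}$ use a positional policy for reaching $W_{\mathrm{odd}}$ almost surely, taken from the qualitative computation.

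\textbf{Main obstacle.} The hardest part is combining the Step~2 policy on $\states^{?}$ with the Step~3 policies into one positional policy and arguing that it attains the infimum. Step~2 alone guarantees only that the probability of reaching $W_{\mathrm{odd}}\cup\states^{=0}$ is maximal. Step~3 guarantees that, once such a state is reached, parity is violated almost surely, and this is what yields the parity value.
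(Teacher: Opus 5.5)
Your proposal is correct and follows the paper's proof. You compute $W_{\mathrm{odd}}$ with $\asoddsat$, reduce to the safety LP $(\cross)$ via Lemma~\ref{lem:rmc:parity}, and attain the value by playing an optimal policy for reaching $W_{\mathrm{odd}}$ and then the edge-covering policy $\envpol_{\mathrm{stay}}$ inside $W_{\mathrm{odd}}$, which is exactly the paper's $\envpol^{\star}$.

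Your summed LP, which gives a single policy that is optimal from every source, goes beyond the paper, which is content with one policy per source. It is sound, but the justification is not ``independence'' of the per-source LPs. The point is that $\mathsf{NNeg}$, $\mathsf{Split}$ and $\mathsf{Adm}$ are closed under addition, so the sum of per-source optimal flows is feasible and the summed optimum is at most the sum of the safety values. Meanwhile, the extracted policy reaches $\states^{=1}$ with probability at least the safety value from each source, which forces equality source by source.
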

\section{Quantitative Analysis of $\omega$-Regular RMDPs} \label{sec:rmdp}

Algorithm~\ref{alg:si} presents our policy-iteration algorithm for the quantitative parity
problem on linearly defined RMDPs. The algorithm ranges
over \emph{positional} agent policies and, in each iteration, tries to improve
the current policy by calling the $\ProfSwitch$ procedure of Algorithm~\ref{alg:switch}. 
Throughout this section we fix an RMDP $\rmdp$ and a parity objective $\parity(\coloring)$. In what follows we describe the algorithm and its ingredients, and state its correctness and complexity guarantees. The proofs are deferred to Appendix~\ref{app:rmdp}.



\paragraph{Positional Determinacy.}
Before diving into the algorithm, we establish that for parity objectives, positional optimal policies exist for both the agent and the environment. 
Fixing a positional agent policy $\agentpol$ induces the RMC
$\rmdp^{\agentpol}$ whose only action at each state $s$ is
$\agentpol(s)$ and whose uncertainty set is
$\uncert^{\agentpol}(s)=\uncert(s,\agentpol(s))$; we write
$\val^{\agentpol}$ for its value vector,
$\val^{\agentpol}_s=\inf_{\envpol}\Pr^{\agentpol,\envpol}_s[\spec]$. Our
analysis rests on the following determinacy result, which is where
polytopic properties of the uncertainty sets are used.

\begin{restatable}{proposition}{polytopicprop}
\label{prop:polytopic}
Let $\rmdp$ be a linearly defined RMDP with parity objective $\spec=\parity(\coloring)$. There exists a positional agent policy $\agentpol^\ast$
and a positional environment policy $\envpol^\ast$ where
  $\inf_{\envpol}\Pr^{\agentpol^\ast,\envpol}_s[\spec] = \sup_{\agentpol}\Pr^{\agentpol,\envpol^\ast}_s[\spec]
  =\val^{\rmdp}_s(\spec)$ for every $s$.
\end{restatable}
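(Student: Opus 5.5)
The plan is to reduce the linearly defined RMDP to a finite turn-based stochastic parity game and transfer positional determinacy from there. Each uncertainty set $\uncert(s,a)$ is a convex polytope (projection of a polyhedron intersected with the simplex), so it has finitely many vertices $\vertices(\uncert(s,a))$. Define the game $\vertexgame$ with agent vertices $\states$, environment vertices $\states\times\actions$, and random vertices $(s,a,p)$ for $p\in\vertices(\uncert(s,a))$. From $s$ the agent moves to $(s,a)$; from $(s,a)$ the environment moves to some $(s,a,p)$; from $(s,a,p)$ nature moves to $s'$ with probability $p(s')$. Colors are inherited from $s$ at all three kinds of vertices, so the parity condition on a play of $\vertexgame$ coincides with the parity condition on its projection to $\states$. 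By the classical positional determinacy of turn-based stochastic parity games (Chatterjee, Jurdzi\'nski, Henzinger; Zielonka), $\vertexgame$ has a value and positional optimal strategies $\agentpol^\ast$ for the agent and $\envpol^\ast$ for the environment. An environment positional strategy picks a vertex $p^\ast(s,a)\in\uncert(s,a)$, so it is directly a positional environment policy in $\rmdp$. Likewise $\agentpol^\ast$ is a positional agent policy.

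It remains to show that the game value equals $\val^{\rmdp}_s(\spec)$, and that $\agentpol^\ast$ and $\envpol^\ast$ remain optimal against the richer environment in $\rmdp$. One direction is immediate: $\envpol^\ast$ is a legal policy in $\rmdp$, and against it the agent faces exactly an MDP that also appears inside $\vertexgame$. Hence $\sup_{\agentpol}\Pr^{\agentpol,\envpol^\ast}_s[\spec]$ equals the game value $g_s$. The agent's strategy space is history-dependent in both settings, so the plays correspond measure-preservingly. Hence $\val^{\rmdp}_s\le g_s$.

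The main obstacle is the other direction, $\inf_{\envpol}\Pr^{\agentpol^\ast,\envpol}_s[\spec]\ge g_s$, since in $\rmdp$ the environment may pick non-vertex points of the polytope, possibly history-dependently. Fix $\agentpol^\ast$. This yields the RMC $\rmdp^{\agentpol^\ast}$ and, correspondingly, an MDP $\mdp'$ for the environment in $\vertexgame$ whose actions are vertices. I would show that every environment policy in the RMC can be simulated by a randomized environment policy in $\mdp'$ with the same path distribution over $\states^\omega$. At each history, write the chosen distribution $q\in\uncert(s,a)$ as a convex combination $\sum_i\lambda_i p_i$ of vertices (Carath\'eodory), and let the environment randomize among the $p_i$ with weights $\lambda_i$. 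The one-step next-state distribution is then $q$ again. By induction on cylinder sets, the projected path measures agree, so the parity probabilities agree. Since the game value of $\mdp'$ against positional $\agentpol^\ast$ is at least $g_s$ even for randomized history-dependent environment strategies, we get $\inf_{\envpol}\Pr^{\agentpol^\ast,\envpol}_s[\spec]\ge g_s$. One care point is measurability of the history-dependent choice of $\lambda$; over countably many finite histories this is unproblematic.

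Combining the two directions gives
\[
g_s\le\inf_{\envpol}\Pr^{\agentpol^\ast,\envpol}_s[\spec]\le\val^{\rmdp}_s(\spec)\le\sup_{\agentpol}\Pr^{\agentpol,\envpol^\ast}_s[\spec]=g_s,
\]
so all quantities equal $\val^{\rmdp}_s(\spec)$. This holds simultaneously for all $s$, which proves Proposition~\ref{prop:polytopic}.
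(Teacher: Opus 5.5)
Your proposal is correct and follows essentially the same route as the paper. You build the vertex game over the finitely many extreme points of each $\uncert(s,a)$, invoke positional determinacy of turn-based stochastic parity games, and pull the positional optimal strategies back to $\rmdp$, using the fact that randomizing over vertices realizes exactly $\uncert(s,a)$. The differences are cosmetic: you give the auxiliary vertices the colour of their state rather than colour $0$, and you organize value preservation as the sandwich $g_s\le\inf_{\envpol}\Pr^{\agentpol^\ast,\envpol}_s[\spec]\le\val^{\rmdp}_s(\spec)\le\sup_{\agentpol}\Pr^{\agentpol,\envpol^\ast}_s[\spec]=g_s$ rather than a separate value-equality lemma. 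Your version also spells out the one nontrivial direction, simulating arbitrary history-dependent non-vertex environment choices by randomized vertex choices, which the paper handles only briefly.
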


The proof uses the reduction introduced in~\cite{chatterjee2024solving} to reduce $\rmdp$ to a finite turn-based stochastic parity game 
in which the environment's choice is replaced by an adversarial choice of a corner of the uncertainty set.
The reduction is value-preserving and maps positional strategies of the game back to positional policies of $\rmdp$, so positional determinacy of stochastic games implies the proposition. The number of corners, and hence the size of the game, may however be exponential, so this route requires exponential space; our policy iteration algorithm below instead runs in polynomial space. Combined with the polynomial-time RMC solver of Section~\ref{sec:rmc}, Proposition~\ref{prop:polytopic} also settles the complexity of the decision problem.
\begin{corollary} \label{cor:np-conp}
    Given a linearly defined RMDP $\rmdp$, a state $s \in \states$ and a rational $\lambda \in [0,1]$, the problem of deciding whether $\val^{\rmdp}_s(\spec) \geq \lambda$ is (i)~in $\mathit{NP} \cap \mathit{coNP}$ and (ii)~at least as hard as the analogous problem in turn-based stochastic parity games.
\end{corollary}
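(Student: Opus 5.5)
We prove membership in $\mathit{NP}\cap\mathit{coNP}$ by guessing a positional policy for one of the two players and verifying it in polynomial time; hardness follows by viewing a stochastic game as a special RMDP.

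\textbf{(i) Membership in $\mathit{NP}$.} The certificate is a positional agent policy $\agentpol^\ast$, which has polynomial size. By Proposition~\ref{prop:polytopic}, if $\val^{\rmdp}_s(\spec)\ge\lambda$ then the optimal positional policy $\agentpol^\ast$ satisfies $\inf_{\envpol}\Pr^{\agentpol^\ast,\envpol}_s[\spec]=\val^{\rmdp}_s(\spec)\ge\lambda$. The verifier builds the induced RMC $\rmdp^{\agentpol^\ast}$, whose uncertainty sets $\uncert(s,\agentpol^\ast(s))$ are still linearly defined. It then computes $\val^{\agentpol^\ast}_s$ in polynomial time via Theorem~\ref{thm:rmc:parity-poly} and accepts iff this value is at least $\lambda$. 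Soundness is immediate, since for any $\agentpol$ we have $\val^{\agentpol}_s\le\val^{\rmdp}_s(\spec)$.

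\textbf{(i) Membership in $\mathit{coNP}$.} Here we certify $\val^{\rmdp}_s(\spec)<\lambda$. The certificate is a positional environment policy $\envpol^\ast$, given as a choice of distribution $\envpol^\ast(s,a)\in\uncert(s,a)$ for each pair $(s,a)$. Fixing $\envpol^\ast$ yields a finite MDP, and the verifier computes the maximal parity probability from $s$ in that MDP in polynomial time, using the standard MEC-plus-reachability-LP method. It accepts iff this probability is below $\lambda$. Soundness holds because $\val^{\rmdp}_s(\spec)\le\sup_{\agentpol}\Pr^{\agentpol,\envpol^\ast}_s[\spec]$, and completeness holds by Proposition~\ref{prop:polytopic}.

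\textbf{Main obstacle: polynomial-size environment certificates.} The step needing care is that $\envpol^\ast$ must have polynomial bit-size. We resolve this by taking $\envpol^\ast$ from the corner-game reduction of~\cite{chatterjee2024solving} used in the proof of Proposition~\ref{prop:polytopic}. There the optimal positional environment strategy picks a vertex of each polytope $\uncert(s,a)$. A vertex of the projection of $\{(x,y): A(x,y)^\top\le b\}$ is the projection of some vertex of the lifted polyhedron, possibly after restricting to a face to handle lineality in $y$. By Cramer's rule, such a vertex has bit-size polynomial in the description of $A$ and $b$. Alternatively, we can avoid explicit distributions altogether. The certificate is then just the agent's side of the argument, i.e.\ the unique MEC structure, and we use the observation that the agent's best response to a positional environment policy lives in a polynomial-size MDP. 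The vertex argument is the cleaner route, so we commit to it.

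\textbf{(ii) Hardness.} We reduce turn-based stochastic parity games to RMDPs. A state of player~1 (maximizer) becomes an RMDP state whose actions have singleton uncertainty sets, namely Dirac distributions on the successors. A random state becomes a state with one action whose uncertainty set is the singleton containing the given distribution. A state of player~2 (minimizer) becomes a state with one action whose uncertainty set is the simplex over its successors; this set is linearly defined with polynomially many constraints. Colors are copied unchanged. Any environment policy that mixes at player~2 states is dominated, because for every fixed agent policy the environment's infimum is attained by pure choices (Proposition~\ref{prop:polytopic}, or directly by the vertex argument). Hence the RMDP values equal the game values, and the reduction is polynomial.
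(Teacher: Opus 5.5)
Your proposal is correct and follows the paper's proof essentially step for step: NP membership uses positional agent certificates checked via Theorem~\ref{thm:rmc:parity-poly}; coNP membership uses positional environment certificates checked by a polynomial-time MDP parity solver, and Proposition~\ref{prop:polytopic} supplies vertex-choosing environment policies to keep these certificates small; hardness uses the same embedding of stochastic parity games, with the simplex over successors at minimizer vertices. The only differences are minor. You bound the bit-size of a vertex by lifting it to a minimal face of the lifted polyhedron and applying Cramer's rule, whereas the paper (Lemma~\ref{lem:app:vertex-size}) goes through the Fourier--Motzkin projection-cone description, and both routes are valid. Also, your $\mathit{coNP}$ verifier should explicitly check $\envpol^\ast(s,a)\in\uncert(s,a)$ (an LP feasibility test in the auxiliary variables $y$), because your soundness argument relies on the certificate being admissible.
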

Hardness implies that a polynomial-time algorithm here would also solve turn-based stochastic parity games, a long-standing open problem.

\paragraph{Policy Iteration.}
The policy iteration algorithm $\StratImp$ (Algorithm~\ref{alg:si}) starts from an arbitrary positional agent policy and iteratively calls the $\ProfSwitch$ procedure (Algorithm~\ref{alg:switch}) to improve it. Once the policy can not be improved further, the algorithm returns it together with its value vector. To this end, the main technical ingredient of the algorithm lies in the $\ProfSwitch$ procedure, explained next. 

\paragraph{Policy Improvement.} 
The $\ProfSwitch$ procedure (Algorithm~\ref{alg:switch}) takes as input a positional agent policy $\agentpol$, computes its value vector $\val^{\agentpol}$ (using the method of Section~\ref{sec:rmc}), and then tries to improve $\agentpol$ in one of the following two ways:
\begin{compactitem}
\item \emph{Quantitative} improvement: A quantitative improvement is possible when for some state $s$ an action $a$ exists such that its worst-case one-step value $\nu(s,a)=\min_{p\in\uncert(s,a)}p\cdot\val^{\agentpol}$ strictly exceeds the current value $\val^{\agentpol}_s$. In this case, the agent switches to such an action $a$ for every state $s$ where an improvement is possible and returns the new policy (Lines \ref{alg:switch:begin:quant}--\ref{alg:switch:end:quant} of Algorithm~\ref{alg:switch}). The following lemma shows that the resulting policy is strictly better than $\agentpol$:
\begin{restatable}{lemma}{valuelemma}
\label{lem:value}
Let $\agentpol'=\ProfSwitch(\rmdp,\agentpol)$ be obtained by a quantitative
improvement. Then $\val^{\agentpol'}_s\ge\val^{\agentpol}_s$ for all $s\in\states$, and
$\val^{\agentpol'}_s>\val^{\agentpol}_s$ for some $s\in \states$.
\end{restatable}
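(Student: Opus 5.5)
The plan is a supermartingale-style argument for the new policy $\agentpol'$ against an arbitrary positional environment, followed by a one-step argument for strictness. Let $\val=\val^{\agentpol}$ and let $I$ be the set of states where $\ProfSwitch$ switched, so $\agentpol'(s)=\agentpol(s)$ for $s\notin I$ and $\nu(s,\agentpol'(s))>\val_s$ for $s\in I$.

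\emph{Step 1: local inequality.} I first show that $\val_s\le \min_{p\in\uncert(s,\agentpol(s))} p\cdot\val$ for every $s$. The argument is by contradiction: if some $p\in\uncert(s,\agentpol(s))$ had $p\cdot\val<\val_s$, the environment in $\rmdp^{\agentpol}$ could play $p$ for one step and then play $\varepsilon$-optimally from the successor. This would push the value at $s$ below $\val_s$, which is impossible. Combining this with the definition of $I$ gives, for every $s$ and every $p\in\uncert(s,\agentpol'(s))$,
\[
p\cdot\val\ \ge\ \val_s,
\]
with strict inequality whenever $s\in I$.

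\emph{Step 2: $\val^{\agentpol'}\ge\val$.} By Proposition~\ref{prop:polytopic} applied to the RMC $\rmdp^{\agentpol'}$ (or directly by Theorem~\ref{thm:rmc:parity-poly}), it suffices to consider a positional environment $\envpol$. Fixing $\envpol$ turns $\rmdp^{\agentpol'}$ into a finite Markov chain. By Step 1, the process $\val(X_n)$ is a bounded submartingale in this chain, so $\mathbb{E}[\lim_n \val(X_n)]\ge\val_s$.

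Almost surely the play is absorbed in a bottom strongly connected component $R$ of this chain. I claim two facts about every such $R$:
\begin{compactenum}
\item $R\cap I=\emptyset$, and $\val$ is constant on $R$, say with value $c_R$.
\item If the maximal colour in $R$ is odd, then $c_R=0$.
\end{compactenum}
For (1), let $\pi$ be the stationary distribution on $R$. Then $\sum_{t\in R}\pi_t\big(\envpol(t)\cdot\val-\val_t\big)=0$. Every term is nonnegative, and the term at any state of $R\cap I$ is strictly positive, so $R\cap I=\emptyset$. All terms are therefore zero, so $\val$ is harmonic on the irreducible class $R$ and hence constant.
For (2), since $R\cap I=\emptyset$, the choices $\envpol|_R$ are legal in $\rmdp^{\agentpol}$ and keep the play in $R$ while visiting all of $R$ infinitely often. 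An odd maximal colour then violates parity almost surely, which forces $\val^{\agentpol}=0$ on $R$.

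Putting these together,
\[
\Pr^{\agentpol',\envpol}_s[\spec]\ \ge \sum_{R\text{ even}}\Pr[\text{absorb in }R]\cdot c_R \;=\; \mathbb{E}\big[\lim_n\val(X_n)\big]\ \ge\ \val_s .
\]
Taking the infimum over $\envpol$ gives $\val^{\agentpol'}\ge\val^{\agentpol}$.

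\emph{Step 3: strictness.} Pick any $s\in I$ (nonempty since an improvement occurred). Let $\envpol'$ be an optimal positional environment for $\rmdp^{\agentpol'}$, and let $p=\envpol'(s)$. In the Markov chain induced by $\envpol'$, the value satisfies the one-step equation $\val^{\agentpol'}_s=p\cdot\val^{\agentpol'}$. Using Step 2 and then the definition of $\nu$,
\[
\val^{\agentpol'}_s=p\cdot\val^{\agentpol'}\ \ge\ p\cdot\val\ \ge\ \nu(s,\agentpol'(s))\ >\ \val_s .
\]

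I expect Step 2, specifically the bottom-component analysis, to be the main obstacle. A submartingale alone says nothing about the parity condition, and the stationary-distribution argument is exactly what rules out improved states being recurrent. The delicate point is that the restriction $\envpol|_R$ must be a legal environment for the \emph{old} policy, which holds only because $R$ avoids $I$.
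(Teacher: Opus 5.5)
Your proposal is correct and follows essentially the same route as the paper. In both, $\val^{\agentpol}$ is a submartingale under $(\agentpol',\envpol)$, the stationary-distribution argument keeps $I$ out of every closed recurrent class, those classes are transferred back to $\rmdp^{\agentpol}$ to show that the ones of positive value are even, and strictness comes from the one-step optimality equation for $\rmdp^{\agentpol'}$ plus monotonicity. The only cosmetic difference is that you derive $\val^{\agentpol}_s\le\min_{p\in\uncert(s,\agentpol(s))}p\cdot\val^{\agentpol}$ directly via $\varepsilon$-optimal continuations, whereas the paper cites the Bellman identity~\eqref{eq:rmc-bellman}, proved with the optimal positional environment policy of Proposition~\ref{prop:polytopic}.
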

\item \emph{Qualitative} improvement: 
If a quantitative improvement is not available, the algorithm checks whether a qualitative improvement is possible. To this end, the algorithm iterates over the value classes of $\val^{\agentpol}$:
\paragraph{Value Classes, Tight Actions, Tight Faces.} 
For $r\in[0,1]$, the \emph{value class}
$\vc{\agentpol}{r}=\{s\in\states\mid\val^{\agentpol}_s=r\}$ is the set of
states of value $r$ under $\agentpol$. There are at most $|\states|$ non-empty value classes.
An action $a\in\actions$ is \emph{tight} at $s$
(with respect to $\val^{\agentpol}$) if $\nu(s,a)=\val^{\agentpol}_s$; we
write $\tightacts{\val^{\agentpol}}{s}$ for the set of tight actions at
$s$, and call $\tightset{\val^{\agentpol}}{s}{a}$ the \emph{tight face}
at $(s,a)$ when $a$ is tight at $s$. If $a$ is tight at $s$ then every
$p\in\uncert(s,a)$ satisfies
$p\cdot\val^{\agentpol}\ge\val^{\agentpol}_s$; if $a$ is not tight at
$s$ and no action improves at $s$, then
$\min_p p\cdot\val^{\agentpol}<\val^{\agentpol}_s$.

For every value class $U_r = \vc{\agentpol}{r}$, the algorithm will then construct an RMDP $\rmdp_r$ together with a coloring $\coloring_r$ defined as follows: (i) the states of $\rmdp_r$ are those of $U_r$ together with a fresh terminal state $\bot$, (ii) the agent's actions are its tight actions, (iii) the uncertainty sets of $\rmdp_r$ are the tight faces of $\rmdp$ inside $U_r$ and the singleton $\{\delta_\bot\}$ (dirac distribution over $\bot$) outside $U_r$, and (iv) the coloring $\coloring_r$ is the same as $\coloring$ inside $U_r$ and assigns a fresh dominating odd color to $\bot$. 

The algorithm then calls the almost-sure parity procedure of~\cite{AsadiCGKS26} on $(\rmdp_r,\parity(\coloring_r))$ to compute the set $W_r \subseteq U_r$ as the set of state where the agent can guarantee $\parity(\coloring_r)$'s satisfaction with probability 1 together with its corresponding optimal policy $\agentpol_r$ (lines~\ref{alg:switch:begin:qual}--\ref{alg:switch:end:qual}). If $W_r\neq\emptyset$, then the agent can improve its policy by switching to $\agentpol_r$ on $W_r$. The following lemma shows that the resulting policy is strictly better than $\agentpol$:

\begin{restatable}{lemma}{quallemma}
\label{lem:qual}
Let $\agentpol'=\ProfSwitch(\rmdp,\agentpol)$ be obtained by a
qualitative improvement, i.e. a quantitative improvement is not possible and $W:=\bigcup_{r<1}W_r\neq\emptyset$. Then
$\val^{\agentpol'}_s\ge\val^{\agentpol}_s$ for all $s\in\states$, and
$\val^{\agentpol'}_s>\val^{\agentpol}_s$ for all $s\in W$.
\end{restatable}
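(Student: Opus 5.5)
Write $\val=\val^{\agentpol}$, and let $\agentpol'$ agree with $\agentpol_r$ on each $W_r$ ($r<1$) and with $\agentpol$ elsewhere. By Proposition~\ref{prop:polytopic} applied to the RMC $\rmdp^{\agentpol'}$ (a degenerate RMDP), it suffices to fix an arbitrary positional environment policy $\envpol$ and show $\Pr^{\agentpol',\envpol}_s[\spec]\ge\val_s$ for all $s$, with strict inequality for $s\in W$. Once $\envpol$ is fixed, $(\agentpol',\envpol)$ induces a finite Markov chain $\mdp'$, so the argument reduces to finite Markov chain reasoning.

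\textbf{Step 1 (super-martingale).} At every state $s$, the action $\agentpol'(s)$ is tight, and the distribution $p=\envpol(s,\agentpol'(s))$ satisfies $p\cdot\val\ge\val_s$. For $s\notin W$ this holds because no quantitative improvement exists, so $\val_s=\min_q q\cdot\val\le p\cdot\val$ for the action $\agentpol(s)$; indeed $\agentpol(s)$ is tight, since $\val$ is a fixed point of the RMC one-step operator. For $s\in W_r$ it holds because $\agentpol_r$ only uses tight actions of $\rmdp_r$. Consequently $\val(X_n)$ is a bounded sub-martingale under $\mdp'$, and $\Pr_s[\spec]=\mathbb{E}_s[\text{value of the bottom SCC reached}]$, where each bottom SCC $B$ has parity value $1$ or $0$.

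\textbf{Step 2 (bottom SCCs).} I claim every bottom SCC $B$ of $\mdp'$ satisfies $\mathbf{1}[B\text{ wins}]\ge\val_t$ for $t\in B$. The sub-martingale, together with the fact that $B$ is closed and recurrent, forces $\val$ to be constant on $B$, say equal to $r$, and forces the support of every distribution used in $B$ to stay inside $\vc{\agentpol}{r}$. Hence $\envpol$ picks from tight faces, since $p\cdot\val=r$ with support in the class means $p$ minimizes. If $r=0$ there is nothing to show. If $B$ meets $W_r$, then $B\subseteq W_r$, because the almost-sure winning policy keeps play inside $W_r$ against tight-face choices (leaving $W_r$ would lead to $\bot$ with positive probability). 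The environment behaviour on $B$ is then a legal environment policy in $\rmdp_r$, so $B$ is winning. If $B\cap W=\emptyset$ and $0<r$, then $B$ is a closed recurrent set of $\rmdp^{\agentpol}$ within a value class, played by tight faces. If $B$ were losing, the environment could reach $B$ and violate parity there with probability $1$, giving $\val^{\agentpol}_t=0$ for $t\in B$, which contradicts $r>0$. The case $r=1$ is handled the same way. Thus winning bottom SCCs have $\val\le1$ and losing ones have $\val=0$.

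\textbf{Step 3 (conclusion).} By Steps 1 and 2 and optional stopping in the limit, $\Pr_s[\spec]\ge\mathbb{E}_s[\lim\val(X_n)]\ge\val_s$. For strictness when $s\in W_r$ with $r<1$, the path either stays in $W_r$ forever, in which case it wins almost surely with payoff $1>r$, or leaves $\vc{\agentpol}{r}$ through a non-tight choice of $\envpol$. For strictness I need that any exit from the class through a tight face goes to strictly higher value on average, since a tight face with support outside the class must put mass on higher values. Hence the limiting expectation of $\val$ satisfies $\ge r$, and exceeds $r$ unless play stays in $W_r$ forever; but staying forever yields $1>r$. Both cases give a strict gain. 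The main obstacle is Step 2: showing that a losing bottom SCC cannot occur in a positive-value class outside $W$. I would handle it by invoking the $W_{\mathrm{odd}}$ characterization of Lemma~\ref{lem:rmc:parity}: a losing closed set under tight faces of $\agentpol$ is an odd end-component of $\rmdp^{\agentpol}$, so its states lie in $W_{\mathrm{odd}}$ and have value $0$.
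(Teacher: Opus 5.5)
Your soundness argument (Steps~1--2 and the first inequality of Step~3) follows the paper's proof. Tightness of every action of $\agentpol'$ makes $\val$ a bounded submartingale. It is constant on each closed recurrent class, and there the environment is confined to tight faces supported in the class. Your two cases are exactly the paper's Cases~2 and~1: $B$ meets $W_r$ (transfer to $\rmdp_r$), or $B\cap W=\emptyset$ (then $\agentpol'=\agentpol$ on $B$, so an odd $B$ would force $\val^{\agentpol}=0$ there). One correction: play cannot leave $W_r$ because of invariance of almost-sure regions (Lemma~\ref{lem:invariance}, giving Lemma~\ref{lem:as-facts}), not because leaving would reach $\bot$. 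A successor outside $W_r$ may lie in $U_r\setminus W_r$.

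The genuine problem is the strictness paragraph. The claim that ``any exit from the class through a tight face goes to strictly higher value on average'' is false. Every $p$ in a tight face satisfies $p\cdot\val=\val_t$ exactly, so a tight-face step never raises $\val$ in expectation, however its support is spread across value classes. Your conclusion that the limiting expectation of $\val$ exceeds $r$ unless play stays in $W_r$ therefore rests on an invalid step.

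The fix is that such exits never happen. By Lemma~\ref{lem:as-facts}, every tight-face choice at a state of $W_r$ against $\agentpol_r$ is supported in $W_r$. So the play can leave $W_r$ (not necessarily $\vc{\agentpol}{r}$) only through a loose step, i.e.\ at a state $t$ with $\envpol(t,\agentpol'(t))\cdot\val>\val_t$. Let $L$ be the set of such states and $\varepsilon>0$ the least gain over $L$. Summing the nonnegative one-step drifts of the bounded submartingale gives $\mathbb{E}_s[\lim_n\val(X_n)]\ge r+\varepsilon\Pr_s[\text{the play visits }L]$. Combined with $\Pr_s[\spec]\ge\mathbb{E}_s[\lim_n\val(X_n)]$ from Step~2, this gives a strict gain whenever $L$ is visited with positive probability. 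Otherwise the play stays in $W_r$ making only tight-face choices, so it is a play of $\rmdp_r$ consistent with $\agentpol_r$ and wins almost surely.

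This repaired dichotomy is equivalent to the paper's argument. The paper fixes the optimal positional $\envpol^\ast$ of $\rmdp^{\agentpol'}$ and splits on the first visit time $T$ to $L$. It then bounds $\Pr[\spec\mid T<\infty]\ge r+\varepsilon$ by applying the already proved inequality $\val^{\agentpol'}\ge\val^{\agentpol}$ at $X_{T+1}$, obtaining $\min\{1,r+\varepsilon\}>r$. Your drift bound avoids that second appeal to soundness. Contrary to your closing remark, Step~2 is routine; strictness is where the care is needed.
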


Intuitively, Lemma~\ref{lem:qual} follows by the fact that if the environment tries to keep the run inside $U_r$, then the agent can guarantee $\parity(\coloring_r)$ with probability 1, and if the environment tries to escape $U_r$, then the agent can guarantee a value strictly larger than $r$ by the definition of tight faces. The full proof is given in Appendix~\ref{app:rmdp}.
\end{compactitem}

\begin{algorithm}[t]
\caption{$\StratImp(\rmdp)$}
\label{alg:si}
\begin{algorithmic}[1]
\REQUIRE Linearly Defined RMDP $\rmdp$, coloring $\coloring$
\ENSURE optimal agent policy $\agentpol^*$ and value vector $\val^{\rmdp}(\spec)$
\STATE pick an arbitrary positional policy $\agentpol$
\REPEAT
  \STATE $\agentpol_{\mathrm{old}} \gets \agentpol$
  \STATE $\agentpol \gets \ProfSwitch(\rmdp,\agentpol_{\mathrm{old}})$
\UNTIL{$\agentpol = \agentpol_{\mathrm{old}}$}
\STATE \textbf{return} $\agentpol$ and $\val^{\agentpol}$
\end{algorithmic}
\end{algorithm}

\begin{algorithm}[t]
\caption{$\ProfSwitch(\rmdp, \agentpol)$}
\label{alg:switch}
\begin{algorithmic}[1]
\REQUIRE RMDP $\rmdp$, coloring $\coloring$, agent
         policy $\agentpol$
\ENSURE policy $\agentpol'$ with $\agentpol' = \agentpol$ or
        $\val^{\agentpol'} \gneq \val^{\agentpol}$
\STATE $\val^{\agentpol} \gets \textsc{SolveRMC}(\rmdp^{\agentpol},
        \parity(\coloring))$
        \quad{\footnotesize$\triangleright$ Section~\ref{sec:rmc}} \label{alg:switch:solvermc}
\FOR{each $(s,a) \in \states\times\actions$} \label{alg:switch:begin:quant}
  \STATE $\nu(s,a) \gets \min_{p\in\uncert(s,a)} p\cdot\val^{\agentpol}$
         \quad{\footnotesize$\triangleright$ one linear program}
\ENDFOR
\STATE $I \gets \{\, s\in\states \mid \max_{a\in\actions}\nu(s,a)
        > \val^{\agentpol}_s \,\}$
\IF{$I \neq \emptyset$}
  \STATE $\agentpol'(s) \gets
          \begin{cases}
            \arg\max_{a\in\actions}\nu(s,a) & \text{if } s \in I,\\
            \agentpol(s) & \text{otherwise.}
          \end{cases}$
  \STATE \textbf{return} $\agentpol'$
\ENDIF \label{alg:switch:end:quant}
\STATE $W \gets \emptyset$ \label{alg:switch:begin:qual}
\FOR{each value $r < 1$ of $\val^{\agentpol}$ with
     $U_r = \vc{\agentpol}{r} \neq \emptyset$}
  \STATE Construct $\rmdp_r$ and coloring $\coloring_r$
         from $\nu$
  \STATE $(W_r, \agentpol_r) \gets \textsc{AlmostSureParity}(\rmdp_r,
          \parity(\coloring_r))$
   \STATE $W \gets W\cup W_r$
\ENDFOR \label{alg:switch:end:qual}
\IF{$W \neq \emptyset$}
  \STATE $\agentpol'(s) \gets
          \begin{cases}
            \agentpol_r(s) & \text{if } s \in W_r \text{ for some } r,\\
            \agentpol(s) & \text{otherwise.}
          \end{cases}$
  \STATE \textbf{return} $\agentpol'$
\ENDIF
\STATE \textbf{return} $\agentpol$
\end{algorithmic}
\end{algorithm}

\paragraph{Correctness and Complexity.} 
It follows directly from Lemma~\ref{lem:value} and Lemma~\ref{lem:qual} that every call to $\ProfSwitch$ either returns a strictly better policy or the same policy, in which case a fixpoint is reached. Hence, the policy is indeed improved in every iteration of $\StratImp$. Since there are only finitely many positional policies, the algorithm must terminate. The following theorem establishes the correctness and complexity of the algorithm:

\begin{restatable}{theorem}{totalcor}
\label{cor:total}
Given a linearly defined RMDP $\rmdp$ and a parity objective $\spec$:
\begin{compactitem}
\item $\StratImp(\rmdp)$ terminates after at most $|\actions|^{|\states|}$ calls to $\ProfSwitch$.
\item $\StratImp(\rmdp)$ returns an optimal positional agent policy $\agentpol^*$ together with the value vector $\val^{\rmdp}(\spec)$.
\item $\StratImp(\rmdp)$ runs in polynomial space.
\end{compactitem}
\end{restatable}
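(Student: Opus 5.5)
We prove the three items in turn; the second, optimality at the fixpoint, carries most of the work.

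\textbf{Termination.} By Lemma~\ref{lem:value} and Lemma~\ref{lem:qual}, every call to $\ProfSwitch$ that changes the policy produces $\val^{\agentpol'}\gneq\val^{\agentpol}$. Along the run of $\StratImp$ the value vectors therefore strictly increase in the componentwise partial order, so no positional policy is visited twice. There are $|\actions|^{|\states|}$ positional policies, which bounds the number of calls.

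\textbf{Optimality at the fixpoint.} Let $\agentpol$ be returned, so no quantitative improvement exists and $W_r=\emptyset$ for all $r<1$. Clearly $\val^{\agentpol}\le\val^{\rmdp}$. For the converse, by Proposition~\ref{prop:polytopic} it suffices to build a positional environment policy $\envpol$ with $\sup_{\agentpol''}\Pr^{\agentpol'',\envpol}_s[\spec]\le\val^{\agentpol}_s$ for all $s$. The construction depends on whether the agent's action is tight.
\begin{compactitem}
\item At $(s,a)$ with $a$ not tight, no improvement means $\min_p p\cdot\val^{\agentpol}<\val^{\agentpol}_s$. The environment picks a minimizer, so $p\cdot\val^{\agentpol}<\val^{\agentpol}_s$.
\item At tight actions inside $U_r$ with $r<1$, the environment uses the positional spoiling policy for $\rmdp_r$. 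Because $W_r=\emptyset$, the agent has no almost-sure winning state in $\rmdp_r$. By positional determinacy of the qualitative problem (via the finite game reduction behind Proposition~\ref{prop:polytopic}), there is a positional environment policy in $\rmdp_r$ under which every agent policy satisfies $\parity(\coloring_r)$ with probability $<1$ from every state.
\end{compactitem}
One should check this yields a positive-probability violation that can be iterated. It may be cleaner to take instead the environment policy that is optimal for the game restricted to tight faces and argue with values directly.

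Fix any agent policy $\agentpol''$. Under the combined $\envpol$, the process $\val^{\agentpol}(X_n)$ is a supermartingale, since every chosen distribution $p$ satisfies $p\cdot\val^{\agentpol}\le\val^{\agentpol}_s$. It is bounded, so it converges almost surely. A non-tight action loses a fixed positive amount in expectation, so almost surely only finitely many non-tight actions are taken. Likewise, the play changes value class only finitely often with positive gap. Eventually the play stays within one class $U_r$ using tight actions and tight-face distributions.

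It remains to show that satisfying $\spec$ while remaining forever in $U_r$ with $r<1$ has probability zero. This is where $W_r=\emptyset$ is used. The spoiling policy ensures that from every state of $U_r$ the agent either exits (towards $\bot$) or violates parity with probability bounded below. A standard Lévy 0--1 argument then gives that the probability of satisfying parity while staying in $U_r$ forever is $0$. For $r=1$ the bound is trivial. Hence $\Pr[\spec]\le\mathbb{E}[\lim\val^{\agentpol}(X_n)\cdot\mathbf 1]\le\val^{\agentpol}_s$ by optional stopping.

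I expect the main obstacle to be this last step: a single positional $\envpol$ must be simultaneously spoiling in every $\rmdp_r$ and compatible with the supermartingale choices, and the uniform lower bound on violation needs finite-state positional arguments.

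\textbf{Polynomial space.} Store only the current policy ($|\states|\log|\actions|$ bits) and a counter. Each $\ProfSwitch$ call uses polynomial time for the RMC (Theorem~\ref{thm:rmc:parity-poly}), LPs for $\nu$, and the almost-sure procedure of~\cite{AsadiCGKS26}, which runs in polynomial space. Space is reused across iterations.
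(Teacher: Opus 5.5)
Your termination and polynomial-space arguments match the paper's: value vectors strictly increase, and the space bound rests on the polynomial-space almost-sure subroutine (Remark~\ref{rem:as-space}). For optimality you build essentially the environment policy of Theorem~\ref{thm:fixpoint}, but you verify it by a genuinely different route, and that route is valid.

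\textbf{The paper's route.} It first reduces to \emph{positional} agent policies: once the positional environment policy is fixed, the agent faces a finite MDP. It then argues in the induced finite Markov chain. There $\uvector=\val^{\agentpol}$ is a supermartingale, hence constant, with equality, on every closed recurrent class. An even class of value $r<1$ would then be a play of $\rmdp_r$ under $\envpol_r$ that never reaches $\bot$. This contradicts Proposition~\ref{prop:spoil}, which gives the \emph{strong} spoiling fact: $\envpol_r$ holds the satisfaction probability in $\rmdp_r$ to $0$ against every agent policy.

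\textbf{Your route.} You treat arbitrary history-dependent agent policies directly through bounded-supermartingale convergence. A Doob-decomposition count gives finitely many non-tight steps, and a convergent sequence in the finite set of values is eventually constant. You need only the weak bound $\sup_{\agentpol''}\Pr^{\agentpol'',\envpol_r}_t[\parity(\coloring_r)]\le 1-\eta$, which L\'evy's 0--1 law upgrades to probability zero.

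\textbf{Trade-off.} Your route does not need positional optimality on the agent's side. The price is martingale machinery and a coupling of history-dependent plays of $\rmdp$ with plays of $\rmdp_r$; the paper's Lemma~\ref{lem:transfer} only covers positional pairs on a closed set. The paper's argument stays finite and reuses Lemma~\ref{lem:mart} from the improvement lemmas.

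\textbf{Your flagged obstacles dissolve.}
\begin{itemize}
\item The value classes are disjoint, so the policies $\envpol_r$ combine state by state.
\item You leave implicit that each $\envpol_r(s,a)$ must be pulled back into the tight face (Lemma~\ref{lem:transfer}(2)). Doing so gives $p\cdot\uvector=\uvector_s$ exactly, so the supermartingale property is automatic.
\item The uniform $\eta$ comes from Proposition~\ref{prop:polytopic} applied to $\rmdp_r$. It yields a positional optimal $\envpol_r$ with $\val^{\rmdp_r}_t<1$ for all $t\in U_r$. Indeed, value $1$ at $t$ would make the positional optimal agent policy almost-sure winning from $t$, contradicting $W_r=\emptyset$.
\end{itemize}
Pushing this one step further, as Proposition~\ref{prop:spoil} does, gives value $0$ and lets you drop the L\'evy argument altogether.
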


The polynomial space requirement is specifically important because the reduction in the proof of Proposition~\ref{prop:polytopic} explodes the state-space exponentially.

\section{Experimental Results}
\begin{figure*}[ht]
  \centering
  \includegraphics[width=0.8\textwidth]{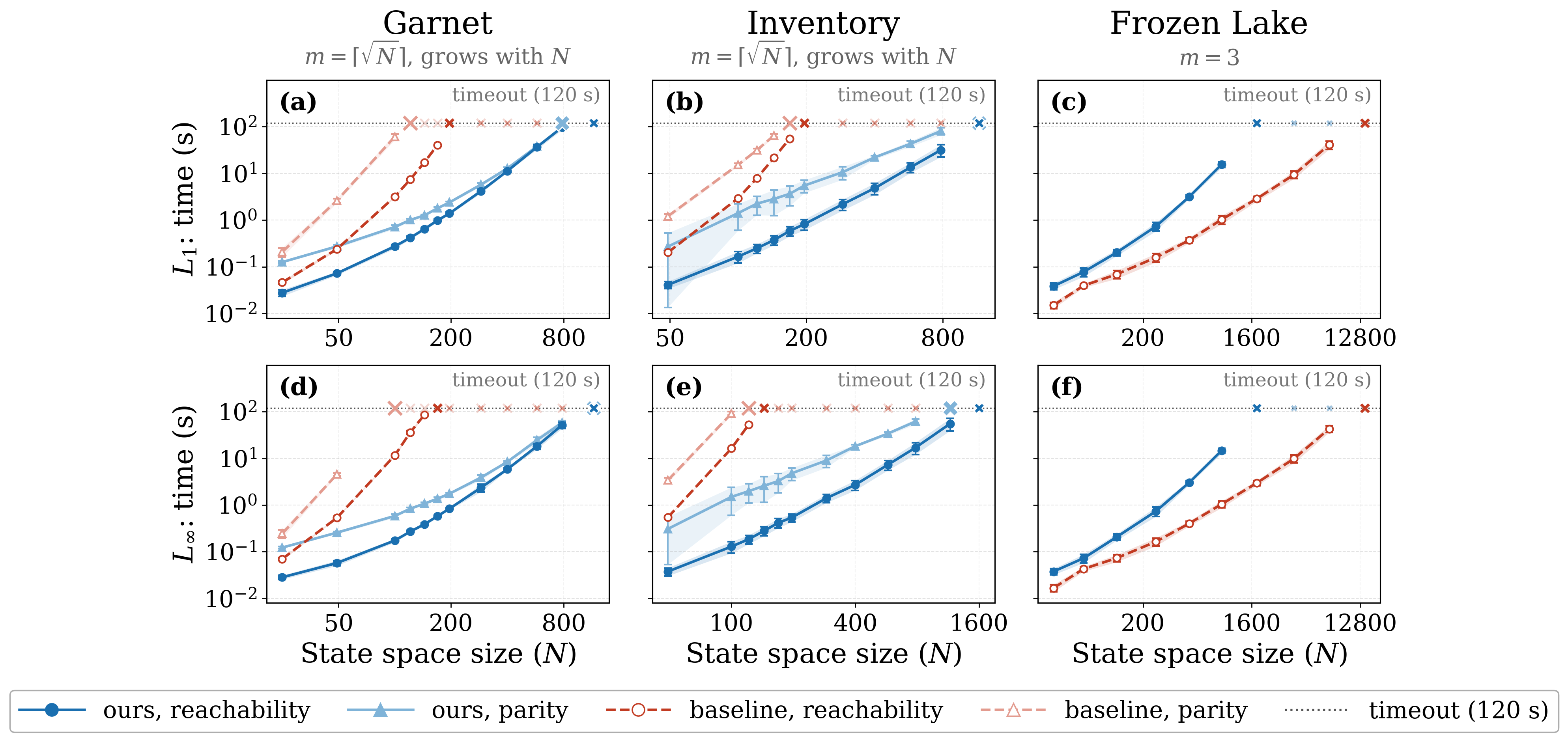}
  \caption{%
    Solve time as a function of the number of states~$N$, with one
    benchmark per column and one uncertainty norm per row: $L_1$
    (a--c) and $L_\infty$ (d--f). Blue solid lines show our algorithm and
    red dashed lines the baseline; within each color, dark circles mark
    the reachability objective and light triangles the parity objective
    (Frozen Lake is reachability-only). Each point is the mean over $10$
    random seeds, with error bars and shaded bands giving one standard
    deviation. The dotted horizontal line is the 120s
    timeout, and a~$\times$ marks a timed-out.
  }
  \label{fig:scaling}
\end{figure*}

We implemented our proposed method and conducted a series of experiments to evaluate its performance. The experiments were designed to assess the effectiveness and applicability of our approach for quantitative analysis of both general parity objectives and the special class of reachability objectives. We compare our method against the technique that reduces the RMDP into a stochastic game (as in Appendix~\ref{app:polytopic}) and solves it using the policy iteration algorithm of~\cite{chatterjee2005algorithms}. 

The experiments are designed to answer the following research questions:
\begin{compactitem}
    \item \textbf{RQ1:} How does our method perform in terms of computational efficiency compared to the baseline approach?
    \item \textbf{RQ2:} How does our method scale with increasing problem size and complexity?
    \item \textbf{RQ3:} What are the limitations of our method, and in what scenarios does it outperform the baseline approach?
\end{compactitem}

\paragraph{Experimental Setup.} We implemented our approach and the baseline method in Python, and used them to compute the quantitative value of the benchmarks explained next. All experiments were conducted on a machine with an Intel Core Ultra 5 225U processor and 16GB of RAM. We used a timeout of 120 seconds for each experiment.

\paragraph{Benchmarks.} We conducted experiments on three classes of benchmarks, where the branching factor $m$ denotes the number of possible successors of a state under a fixed action. Garnet and Inventory Management support both objectives, while Frozen Lake supports only reachability. In all cases, the uncertainty sets are $L_\infty$ or $L_1$ balls around the nominal distributions. The benchmarks are summarized as follows (full details are given in Appendix~\ref{app:experiments}):
\begin{compactitem}
    \item \textbf{Garnet} (adapted from~\cite{archibald1995generation}) is a family of randomly generated MDPs. The state-space consists of $N$ states partitioned into $k$ subsets, $G_0,\dots,G_k$. Every state has three available actions, each leading to $m=\lceil\sqrt{N}\rceil$ possible successors. Taking an action from a state in $G_0$ will either lead to a state in $G_0$ or to a state in one of the other subsets, while taking an action from a state in $G_i$ for $i>0$ will lead to a state in $G_i$. The transition probabilities are drawn uniformly at random from the probability simplex. As a reachability objective, each one of the subsets is considered to have a target and an trap state; the goal is to reach the target state of some subset. As a parity objective, each state is assigned a random color and the goal is to satisfy the respective parity condition. 
    
    \item \textbf{Inventory Management} (adapted from~\cite{iyengar2005robust}) models a warehouse with capacity $N$ whose state is the stock level. At each step the agent orders new units against an uncertain demand. Specifically, the agent will choose to buy $a \in \{0,\dots,a_{\max}\}$ new units and the demand $d$ is drawn uniformly from $\{0,\dots,d_{\max}\}$. The state will then transition from $s$ to $s+a-d$; if the demand exceeds the available stock ($d>s+a$), or the delivery pushes the stock over the capacity ($s+a-d>N$), the run instead ends in one of two absorbing failure states.
    We fix $d_{\max} = \lceil\sqrt{N}\rceil$ and $a_{\max} = \lceil0.75\,d_{\max}\rceil$. Once the stock reaches a threshold $T$, a larger order option becomes available. The reachability objective is to reach a stock level above a threshold, while the parity objective is to keep the stock level above a threshold infinitely often.

    \item \textbf{Frozen Lake} \cite{brockman2016gym} is the classic grid world where the agent walks on a $k\times k$ grid from the top-left cell toward the bottom-right. Each move proceeds in the intended direction or in one of the two perpendicular ones with probability $\tfrac13$ each, and a random $20\%$ of the cells are holes, which are absorbing failures. 
\end{compactitem}

\paragraph{Results.}
Figure \ref{fig:scaling} reports the solve times of both algorithms against
the state-space size, on a log--log scale; every configuration is
grown until each algorithm, in turn, exceeds the $120$-second timeout,
marked by a~$\times$. The returned values of both approaches were checked to match on all benchmarks. The picture cleanly separates two regimes,
governed by the branching factor $m$.

On \textbf{Garnet} and \textbf{Inventory Management}, where $m$ grows with the instance size, our algorithm scales substantially better than the baseline, across both norms and both objectives (panels a, b, d, e). The baseline's cost is driven by the number of extreme points of the uncertainty set, which grows rapidly with $m$: it already exceeds the timeout at a few hundred states, whereas our algorithm keeps solving an order of magnitude larger instances (up to $n\approx1200$ on Inventory). At the largest sizes where the baseline still completes, our algorithm is faster by roughly one to two orders of magnitude, and this gap widens as the instances grow. Consistent with the extra work of the parity decomposition, parity (light curves) is somewhat more expensive than reachability (dark curves) for both algorithms, so the baseline times out slightly earlier on parity than on reachability.

\textbf{Frozen Lake} is the opposite regime (panels c, f). Here the branching factor is fixed at $m\le3$, so each uncertainty set has only a handful of extreme points and the reduced stochastic game stays close in size to the original RMDP. The baseline therefore has the advantage: it is roughly an order of magnitude faster than our algorithm at matched sizes and scales to far larger grids (beyond $n\approx10^4$), while the overhead of our linear-programming inner solver dominates.

These results answer our research questions directly. Our method is the more efficient choice (RQ1) and scales to markedly larger instances (RQ2) precisely when the branching factor $m$ is large, since its per-iteration cost is polynomial in the size of the RMDP rather than in the number of extreme points of the uncertainty sets. Its limitation (RQ3) shows up when $m$ is small: when the uncertainty sets have few extreme points, the reduction to a stochastic game produces a compact game that the baseline solves faster.

Finally, we note that the number of iterations both in our approach and the baseline is observed to be smaller than the theoretical worst-case bound, which is exponential in the number of states. This is consistent with the known behavior of policy iteration on stochastic games, where the number of iterations is often observed to be polynomial in practice, despite the worst-case exponential bound. This suggests that the practical performance of our algorithms may be significantly better than the theoretical worst-case analysis would suggest (See Appendix~\ref{app:experiments} for details). 

\section{Conclusion and Future Works}

In this paper, we study the quantitative parity problem for $(s,a)$-rectangular
RMDPs with linearly defined uncertainty sets. We give a polynomial-time
algorithm for robust Markov chains, casting robust safety as a linear program
over occupation measures and reducing parity to safety through a
maximal-end-component analysis, and use it as a subroutine in a policy-iteration
algorithm for RMDPs that combines quantitative one-step improvements with
qualitative almost-sure improvements, runs in polynomial space, and computes the
exact values together with optimal positional policies for both the agent and
the environment. The decision problem for linearly defined RMDPs is in
$\mathit{NP}\cap\mathit{coNP}$ and as hard as turn-based stochastic parity games.
An interesting future question is whether the exact value is computable in
polynomial time. Other promising directions are to extend beyond linearly
defined sets to nonlinear convex uncertainty, such as ellipsoidal or
divergence-based balls, to relax $(s,a)$-rectangularity, and to study the
learning setting in which the uncertainty sets are estimated from samples.

\bibliography{bibliography}
\clearpage
\appendix
\setcounter{secnumdepth}{2} 
\onecolumn

\section{Related Work}
\label{app:related}

\paragraph{Robust MDPs.}
RMDPs were introduced for sequential decision making under uncertain
transition probabilities, with the foundational works
of~\cite{iyengar2005robust,nilim2005robust} establishing robust dynamic
programming for rectangular uncertainty. We refer to~\citet{suilen2024robust}
for a recent survey. Subsequent works studied general robust policy-iteration
methods~\cite{wiesemann2013,DBLP:journals/informs/KaufmanS13}, as well as faster
algorithms for specific uncertainty classes such as $L_1$ and
$L_\infty$~\cite{ho2021partial,behzadian2021fast}. Recent works study exact
computation and the complexity of policy iteration for discounted
RMDPs~\cite{asadi2026strongly}, while others consider the average-reward and
bounded-parameter settings~\cite{wang2023robust,chatterjee2024solving,tewari2007bounded,DBLP:journals/ai/GivanLD00}.
All of these results consider quantitative objectives such as discounted-sum,
average reward, and finite horizon, whereas our focus is logical objectives.

\paragraph{Parity Objectives and Strategy Improvement.}
Parity objectives and their quantitative analysis have been studied extensively
for MDPs and two-player stochastic games. We refer
to~\citet{model-checking-book,chatterjee2012survey} for an overview, and
to~\citet{zielonka1998infinite,parys2019parity} for algorithms solving parity
games. Our policy-iteration algorithm is, in spirit, close to the
strategy-improvement paradigm for stochastic parity and Rabin
games~\cite{chatterjee2005algorithms,chatterjee2006strategy}, which likewise
iterates over positional strategies and alternates value improvements with
qualitative sub-computations. The essential difference is that in our robust
setting the environment ranges over a continuum of distributions in an
uncertainty polytope rather than over the finitely many vertices of a stochastic
game. Our improvement therefore works directly with the linearly defined
uncertainty sets, through the value classes, tight actions, and tight faces of
Section~\ref{sec:rmdp}, and never constructs the exponentially larger game.

\paragraph{Logical Objectives on Uncertain MDPs.}
The closest prior work considers temporal-logic control of uncertain MDPs under a
constant-support assumption and approximates satisfaction probabilities using
value iteration~\cite{wolff2012robust}. \citet{AsadiCGKS26} remove structural
assumptions and solve the positive and almost-sure reachability and parity
problems using uncertainty-set oracles. Our qualitative improvement step invokes
their almost-sure parity procedure as a sub-routine, whereas the exact
quantitative problem we solve here remained open.

\paragraph{Reinforcement Learning.}
RMDPs are the model underlying robust reinforcement learning, where the goal is
to learn policies that hedge against misspecification of the transition model.
Adversarial and model-uncertainty formulations optimize the worst-case return
over an uncertainty set~\cite{pinto2017robust,wang2021online}, and we refer
to~\citet{moos2022robust} for a survey. This line targets quantitative reward
objectives, whereas we compute exact values of $\omega$-regular objectives.
Orthogonally, a body of work learns policies for $\omega$-regular and
temporal-logic objectives in the non-robust setting, typically by reducing the
objective to a reachability reward on a product with an
automaton~\cite{hahn2019omega,bozkurt2020control}. Our setting combines the two
axes: exact quantitative analysis of $\omega$-regular objectives against
adversarial transition uncertainty.

\section{Linear Descriptions of $L_\infty$ and $L_1$ Uncertainty Sets}
\label{app:uncertainty}

Recall the Assumption of Section~\ref{sec:rmc}: the uncertainty set of each state $s$ is a polytope $\uncert(s) = \{ p \in \Delta(\states) : A_s\, p \le b_s \}$. Here we show that the two most common uncertainty models, the $L_\infty$ and $L_1$ balls around a nominal distribution, fit this form (the $L_1$ case after introducing auxiliary variables). Fix a state $s$ with nominal distribution $\hat{p}_s \in \Delta(\states)$ and radius $\delta_s \ge 0$; write $\hat{p}_{s,s'}$ for the nominal probability of the successor $s'$.

\paragraph{$L_\infty$ ball.} The set
\[
    \uncert(s) = \big\{\, p \in \Delta(\states) : \| p - \hat{p}_s \|_\infty \le \delta_s \,\big\}
\]
is cut out, with no auxiliary variables, by the box constraints
\[
    -\delta_s \;\le\; p_{s'} - \hat{p}_{s,s'} \;\le\; \delta_s
    \qquad \text{for every } s' \in \states,
\]
i.e. $p_{s'} \le \hat{p}_{s,s'} + \delta_s$ and $-p_{s'} \le \delta_s - \hat{p}_{s,s'}$. Stacking these $2|\states|$ inequalities gives $A_s = \left[\begin{smallmatrix} I \\ -I \end{smallmatrix}\right]$ and $b_s = \left[\begin{smallmatrix} \hat{p}_s + \delta_s \mathbf{1} \\ -\hat{p}_s + \delta_s \mathbf{1} \end{smallmatrix}\right]$, so $\uncert(s) = \{ p \in \Delta(\states) : A_s\, p \le b_s \}$ exactly as in the Assumption.

\paragraph{$L_1$ ball.} The set
\[
    \uncert(s) = \big\{\, p \in \Delta(\states) : \| p - \hat{p}_s \|_1 \le \delta_s \,\big\}
    = \Big\{\, p \in \Delta(\states) : \textstyle\sum_{s' \in \states} | p_{s'} - \hat{p}_{s,s'} | \le \delta_s \,\Big\}
\]
has, in general, exponentially many facets, but it becomes linear after introducing one auxiliary variable $y_{s'}$ per successor to model $| p_{s'} - \hat{p}_{s,s'} |$:
\[
    y_{s'} \ge p_{s'} - \hat{p}_{s,s'},
    \qquad
    y_{s'} \ge \hat{p}_{s,s'} - p_{s'},
    \qquad (s' \in \states),
    \qquad
    \sum_{s' \in \states} y_{s'} \le \delta_s .
\]
At an optimum each $y_{s'}$ equals $| p_{s'} - \hat{p}_{s,s'} |$, so the projection onto the $p$-coordinates of the polytope defined by these $2|\states| + 1$ inequalities (over the $2|\states|$ variables $(p, y)$) is exactly the $L_1$ ball. Writing the constraints as $A_s \left[\begin{smallmatrix} p \\ y \end{smallmatrix}\right] \le b_s$ presents $\uncert(s)$ as the projection of a polytope, using only a linear number of auxiliary variables and inequalities.

\paragraph{Extending the LP to auxiliary variables.} The $L_1$ description uses auxiliary variables, so $\uncert(s)$ is a \emph{projected} polytope rather than the plain form $\{ p : A_s p \le b_s \}$. The program~$(\cross)$ and its correctness extend to this case without change. Indeed, the linearisation of Lemma~\ref{lem:app:lin} rewrites the admissibility constraint $p \in \uncert(s)$, with $p = \boldsymbol{f}_s / x_s$, by multiplying through by $x_s \ge 0$; applying the \emph{same} scaling to the auxiliary variables (replacing $y$ by $y' = x_s\, y$) turns $A_s \left[\begin{smallmatrix} p \\ y \end{smallmatrix}\right] \le b_s$ into the linear constraint $A_s \left[\begin{smallmatrix} \boldsymbol{f}_s \\ y' \end{smallmatrix}\right] \le x_s\, b_s$ over the flow and (scaled) auxiliary variables. Thus, for $L_\infty$, $L_1$, and general (possibly projected) polytopes alike, the value $\val^{\rmc}_{s}(\safe(\target))$ is computed by a single polynomial-size linear program, and Lemma~\ref{lem:rmc:reachability} applies uniformly.

\section{Proofs of Lemma~\ref{lem:rmc:reachability} and Theorem~\ref{thm:rmc:safety-poly}}
\label{app:rmc:reach}

Throughout this section we fix a linearly defined RMC $\rmc = (\states, \uncert)$ with, for every $s \in \states$, $\uncert(s) = \{ p \in \Delta(\states) : A_s\, p \le b_s \}$, a safe set $\target \subseteq \states$, and an initial state $\sinit$. A \emph{positional} environment policy is a map $\envpol \colon \states \to \Delta(\states)$ with $\envpol(s) \in \uncert(s)$ for all $s$; we write $p^{\envpol}_{s,s'} = \envpol(s)(s')$ for its one-step probabilities. Since an RMC has a single action, the safety value from Section~\ref{sec:RMC:reachability} is
\[
    \val^{\rmc}_{s}(\safe(\target)) = \inf_{\envpol} \Pr\nolimits^{\envpol}_{s}[\safe(\target)],
\]
the infimum ranging over all environment policies. The preprocessing computes the sets
\[
    \states^{=1} = \{\, s \in \states : \val^{\rmc}_{s}(\safe(\target)) = 1 \,\}, \quad
    \states^{=0} = \{\, s \in \states : \val^{\rmc}_{s}(\safe(\target)) = 0 \,\}, \quad
    \states^{?} = \states \setminus (\states^{=0} \cup \states^{=1}),
\]
and makes the states of $\states^{=0} \cup \states^{=1}$ absorbing sinks. Every unsafe state (outside $\target$) has safety value $0$, so $\states \setminus \target \subseteq \states^{=0}$ and hence $\states^{?} \subseteq \target$. We prove that for $\sinit \in \states^{?}$ the optimum of $(\cross)$ equals $\val^{\rmc}_{\sinit}(\safe(\target))$; for $\sinit \in \states^{=1}$ (resp. $\states^{=0}$) the value is $1$ (resp. $0$) by definition. 

A \emph{reachability} objective $\reach(T)$ specifies the set of all runs that reach a state in $T$ eventually, i.e.
\[
\reach(T) = \{\pi \in \states^\omega | \exists i: \pi_i \in T\}
\]

\subsection{Occupation Measures}

\begin{definition}[Expected visits and flows]\label{def:app:visits}
Fix a positional environment policy $\envpol$. For $s, s' \in \states$, the \emph{expected number of visits} to $s$ and the \emph{expected flow} along $(s,s')$, both from $\sinit$, are
\[
    x_s^{\envpol} = \sum_{t=0}^{\infty} \Pr\nolimits^{\envpol}_{\sinit}[s_t = s], \qquad
    f_{s,s'}^{\envpol} = \sum_{t=0}^{\infty} \Pr\nolimits^{\envpol}_{\sinit}[s_t = s,\, s_{t+1} = s'].
\]
\end{definition}

By the Markov property of a positional $\envpol$, $\Pr^{\envpol}_{\sinit}[s_t = s,\, s_{t+1} = s'] = p^{\envpol}_{s,s'}\, \Pr^{\envpol}_{\sinit}[s_t = s]$; summing over $t$ gives the factorisation
\begin{equation}\label{eq:app:factor}
    f_{s,s'}^{\envpol} = x_s^{\envpol}\, p_{s,s'}^{\envpol},
    \qquad\text{hence}\qquad
    \sum_{s' \in \states} f_{s,s'}^{\envpol} = x_s^{\envpol}
    \quad \text{whenever } x_s^{\envpol} < \infty.
\end{equation}
Whenever $x_s^\envpol < \infty$ for all $s \in \states^?$, conditioning on the state at time $t-1$ and summing gives $\mathsf{Cons}$, and $\mathsf{Split}$ is the second identity in~\eqref{eq:app:factor}; together with $\mathsf{NNeg}$ (immediate) the pair $(x^\envpol, f^\envpol)$ then satisfies the constraints of $(\cross)$ by Lemma~\ref{lem:app:lin}. Moreover, since $\states^{=1}$ is absorbing, a play crosses from $\states^{?}$ into $\states^{=1}$ at most once and exactly when it reaches $\states^{=1}$, so the objective of $(\cross)$ evaluated at this occupation is
\begin{equation}\label{eq:app:obj}
    \sum_{s \in \states^{?},\, w \in \states^{=1}} f_{s,w}^{\envpol}
    = \Pr\nolimits^{\envpol}_{\sinit}[\reach(\states^{=1})].
\end{equation}

\subsection{Correctness of the Preprocessing}

\begin{lemma}[Qualitative preprocessing]\label{lem:app:qual}
The sets $\states^{=1}$ and $\states^{=0}$ are computable in polynomial time, and making states in $\states^{=0}$ unsafe absorbing and states in $\states^{=1}$ safe absorbing does not change the safety value of any state in $\states^?$.
\end{lemma}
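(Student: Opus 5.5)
The proof has two parts: computability of $\states^{=1}$ and $\states^{=0}$, and invariance of the values on $\states^{?}$ under the modification.

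\emph{Computability.} We invoke the polynomial-time qualitative algorithms of~\citet{AsadiCGKS26}, specialised to a single agent action. The environment minimises safety, so safety is dual to reachability of $\states\setminus\target$:
\begin{itemize}
\item $\states^{=1}$ is the set of states from which the environment cannot reach $\states\setminus\target$ with positive probability.
\item $\states^{=0}$ is the set of states from which the environment can reach $\states\setminus\target$ almost surely.
\end{itemize}
Both are positive/almost-sure reachability questions for a single player, and these are covered by that work in polynomial time. As a sanity check for the first set, $\states^{=1}$ is the greatest set $Z\subseteq\target$ such that every $p\in\uncert(s)$ has $\supp(p)\subseteq Z$ for each $s\in Z$. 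Equivalently, $s$ is removed from $Z$ whenever some $p\in\uncert(s)$ puts positive mass outside $Z$. This test is a single LP per state: maximise $\sum_{s'\notin Z}p_{s'}$ over $\uncert(s)$, including the auxiliary variables. Iterating it yields polynomial time.

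\emph{Value invariance.} Let $\rmc'$ be the modified RMC. We show $\val^{\rmc}_s=\val^{\rmc'}_s$ for $s\in\states^{?}$ by comparing plays stopped at the first hitting time $T$ of $\states^{=0}\cup\states^{=1}$. Plays that never hit this set behave identically in both models.

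For ``$\le$'' (the environment in $\rmc$ does no better than in $\rmc'$), take any environment policy in $\rmc$. After time $T$, if the play is in $\states^{=1}$ it stays safe with probability one under every continuation, by definition of value $1$ and the fact that no $p\in\uncert$ leaves $\states^{=1}$. So the safe mass in $\rmc$ dominates that in $\rmc'$, where hitting $\states^{=1}$ counts as safe and hitting $\states^{=0}$ counts as unsafe.

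For ``$\ge$'', take an environment policy $\envpol'$ in $\rmc'$. Construct a policy in $\rmc$ that follows $\envpol'$ up to $T$. Upon hitting $t\in\states^{=0}$, it switches to a policy whose safety probability from $t$ is below $\varepsilon$. Such a policy exists because $\inf=0$, and in fact an almost-sure reachability policy exists, which gives exactly $0$. The resulting safety probability is at most that of $\envpol'$ in $\rmc'$ plus $\varepsilon$. Taking the infimum over $\envpol'$ and letting $\varepsilon\to 0$ gives the inequality.

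\emph{Main obstacle.} The delicate point is the $\states^{=0}$ side: making a value-$0$ state unsafe requires that the environment can actually realise value $0$ after switching. The history-dependent switch handles this, because RMC safety values with respect to general environment policies allow concatenation of policies at a stopping time. Alternatively, we can use the almost-sure attractor policy of~\citet{AsadiCGKS26}, which avoids the $\varepsilon$-argument entirely.
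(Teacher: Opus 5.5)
Your proposal is correct and follows the same route as the paper: cite \citet{AsadiCGKS26} for computing the qualitative sets, then observe that plays absorbed in $\states^{=1}$ are safe under every continuation, while from $\states^{=0}$ the environment can force a violation. Absorbing these states therefore only relabels plays whose outcome is already decided. Your two-sided hitting-time comparison (with the $\varepsilon$-switch at $\states^{=0}$) and the greatest-fixpoint LP for $\states^{=1}$ make explicit what the paper's two-line proof leaves implicit.
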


\begin{proof}
The qualitative sets $\states^{=1}$ and $\states^{=0}$ are computable in polynomial time by \citep{AsadiCGKS26}. From a state of $\states^{=1}$ the play is safe under \emph{every} policy, and from a state of $\states^{=0}$ some policy is unsafe almost surely; making these states absorbing therefore only relabels play that has already committed to value $1$ or $0$, and does not change any safety value.
\end{proof}



\begin{lemma}\label{lem:app:preproc}
Let $\envpol^{\star}$ be a positional environment policy attaining $\val^{\rmc}_{\sinit}(\safe(\target))$. Then from $\sinit \in \states^{?}$ the play reaches $\states^{=0} \cup \states^{=1}$ almost surely; consequently $x_s^{\envpol^{\star}} < \infty$ for every $s \in \states^{?}$, and under $\envpol^{\star}$ a play is safe if and only if it reaches $\states^{=1}$, so that $\Pr^{\envpol^{\star}}_{\sinit}[\safe(\target)] = \Pr^{\envpol^{\star}}_{\sinit}[\reach(\states^{=1})]$.
\end{lemma}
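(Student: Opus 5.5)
The plan is to prove that, under $\envpol^{\star}$, the set $\states^{?}$ is transient from $\sinit$. Everything else then follows easily.

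\textbf{Step 1 (no recurrence inside $\states^{?}$).} Under the positional $\envpol^{\star}$, the induced Markov chain is finite. From $\sinit$, almost every play eventually stays in, and visits infinitely often, some bottom strongly connected component (BSCC) $B$ of the chain restricted to the states it reaches. Suppose some BSCC $B$ reachable from $\sinit$ lies entirely inside $\states^{?}$. We have $\states^{?} \subseteq \target$, so once a play enters $B$ it stays in $\target$ forever. We then derive a contradiction in one of two ways. First, we could argue that the states of $B$ are in $\states^{=1}$. But that requires all policies to be safe, not just $\envpol^{\star}$, so instead we use optimality. Second, and this is the route we take, we construct a positional $\envpol'$ that agrees with $\envpol^{\star}$ outside $B$. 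Inside $B$, it plays a policy that is optimal for the states of $B$ in the RMC, which has value $<1$ there. This strictly lowers the safety probability from any state that reaches $B$ with positive probability, including $\sinit$, contradicting that $\envpol^{\star}$ attains the value.

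To make this rigorous, I would take $\envpol^{\star}$ to be \emph{uniformly} optimal, i.e.\ attaining $\val^{\rmc}_s(\safe(\target))$ at every $s$ simultaneously. Positional uniformly optimal policies exist, for instance via the reduction to finite stochastic games behind Proposition~\ref{prop:polytopic}. Then each $s \in B$ satisfies $\val_s = \Pr^{\envpol^{\star}}_s[\safe(\target)] = 1$, because from $B$ the play never leaves $\target$. This contradicts $s \in \states^{?}$. If the lemma is meant for an optimal policy only at $\sinit$, the surgery argument above handles that case.

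\textbf{Step 2 (consequences).} Since no reachable BSCC lies in $\states^{?}$, every state of $\states^{?}$ reachable from $\sinit$ is transient. Hence the expected number of visits $x^{\envpol^{\star}}_s$ is finite, by the geometric bound on return probabilities, which are $<1$ for transient states. It also follows that the play leaves $\states^{?}$ almost surely. It cannot exit to a state outside $\states^{=0}\cup\states^{=1}\cup\states^{?}$, since these sets partition $\states$, so it reaches the absorbing set $\states^{=0}\cup\states^{=1}$ almost surely.

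\textbf{Step 3 (safety equals reaching $\states^{=1}$).} After preprocessing, states in $\states^{=1}$ are safe absorbing and states in $\states^{=0}$ are unsafe absorbing, by Lemma~\ref{lem:app:qual}. A play that reaches $\states^{=1}$ stays in $\target$ forever (it was in $\states^{?}\subseteq\target$ before), so it is safe. A play that reaches $\states^{=0}$ is unsafe. By Step 2, up to a null set every play does one of the two, which gives $\Pr^{\envpol^{\star}}_{\sinit}[\safe(\target)] = \Pr^{\envpol^{\star}}_{\sinit}[\reach(\states^{=1})]$.

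The main obstacle is Step 1. It requires justifying that the optimal positional policy can be chosen uniformly optimal, or else carrying out the policy-surgery argument carefully so that it contradicts optimality at $\sinit$.
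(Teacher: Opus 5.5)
Your Steps~2 and~3 match the paper and are fine, and the uniform-optimality argument in Step~1 is correct as far as it goes. The gap is that it proves a different statement. The lemma only assumes that $\envpol^{\star}$ attains $\val^{\rmc}_{\sinit}(\safe(\target))$ at $\sinit$, so you cannot use $\Pr^{\envpol^{\star}}_s[\safe(\target)]=\val^{\rmc}_s(\safe(\target))$ for $s\in B$. Swapping in a uniformly optimal policy establishes the conclusion for that policy, not for the given one. This weaker version would suffice where the lemma is used in the proof of Lemma~\ref{lem:rmc:reachability}, since there one may pick $\envpol^{\star}$ uniformly optimal, but it is not the lemma as stated.

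For the stated hypothesis you fall back on the positional splice, and that step is false in general. A positional $\envpol'$ that copies $\envpol^{\star}$ outside $B$ hands control back to $\envpol^{\star}$ as soon as the inserted policy leaves $B$, and $\envpol^{\star}$ may send the play straight back. For instance, let $\target=\{u,w,z_1\}$, let $z_0,z_1$ be absorbing, and let $\uncert(u)=\mathrm{conv}\{\delta_u,\delta_w\}$ and $\uncert(w)=\mathrm{conv}\{\delta_u,\tfrac12\delta_{z_0}+\tfrac12\delta_{z_1}\}$. Then $u$ and $w$ both have value $\tfrac12$, so they lie in $\states^{?}$. Take $\envpol^{\star}(u)=\envpol^{\star}(w)=\delta_u$, so that $B=\{u\}$ is a BSCC inside $\states^{?}$. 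The optimal policy plays $\delta_w$ at $u$, so your splice plays $\delta_w$ at $u$ and $\envpol^{\star}(w)=\delta_u$ at $w$. The resulting chain cycles $u\to w\to u\to\cdots$ inside $\target$ and is safe with probability $1$. Nothing is lowered, and no contradiction arises.

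The missing idea is that the deviation must be history-dependent, which is exactly what the paper does. Let $\widehat{\envpol}$ follow $\envpol^{\star}$ until the first visit to some $u\in B$, and from then on play a policy $\envpol'$ with $\Pr^{\envpol'}_u[\safe(\target)]<1$; such a policy exists because $\val^{\rmc}_u(\safe(\target))<1$. The states outside $\states^{?}$ are absorbing, so every $\envpol^{\star}$-play that visits $u$ has its prefix in $\states^{?}\subseteq\target$ and afterwards stays in $B\subseteq\target$. Hence all these plays are safe under $\envpol^{\star}$, and together they have some probability $q>0$. It follows that $\Pr^{\widehat{\envpol}}_{\sinit}[\safe(\target)]=\Pr^{\envpol^{\star}}_{\sinit}[\safe(\target)]-q\,(1-\Pr^{\envpol'}_u[\safe(\target)])<\val^{\rmc}_{\sinit}(\safe(\target))$. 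This contradicts the value being an infimum over all policies; that $\widehat{\envpol}$ is not positional is irrelevant. With this replacement for Step~1, the rest of your proof goes through.
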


\begin{proof}
Suppose the induced Markov chain has a BSCC $B$ reachable from $\sinit$ with $B \subseteq \states^{?}$. Since $\states^{?} \subseteq \target$ and $B$ is a BSCC, the play from any $u \in B$ stays in $B \subseteq \target$ forever and is therefore safe with probability $1$. Fix such a $u$. As $u \in \states^{?}$, we have $\val^{\rmc}_{u}(\safe(\target)) < 1$, so some policy $\envpol'$ is unsafe from $u$ with positive probability. Consider the policy that follows $\envpol^{\star}$ until $u$ is first visited and then switches to $\envpol'$. Since $\states^{=0} \cup \states^{=1}$ is absorbing, $u$ is reached from $\sinit$ with some probability $q > 0$ along a path inside $\states^{?} \subseteq \target$, and under $\envpol^{\star}$ these are exactly the safe plays that visit $u$; replacing their safe continuation by $\envpol'$ hence lowers the safety probability from $\sinit$ strictly below $\Pr^{\envpol^{\star}}_{\sinit}[\safe(\target)]$, contradicting the optimality of $\envpol^{\star}$ at $\sinit$. Hence every BSCC reachable from $\sinit$ meets $\states^{=0} \cup \states^{=1}$, so the play reaches $\states^{=0} \cup \states^{=1}$ almost surely, the hitting time $\tau$ has finite expectation, and $x_s^{\envpol^{\star}} \le \mathbb{E}^{\envpol^{\star}}_{\sinit}[\tau] < \infty$ for every $s \in \states^{?}$. Finally, once absorbed in $\states^{=1}$ the play is safe, whereas from $\states^{=0}$ the optimal $\envpol^{\star}$ is unsafe almost surely; hence, under $\envpol^{\star}$, being safe coincides with reaching $\states^{=1}$, giving $\Pr^{\envpol^{\star}}_{\sinit}[\safe(\target)] = \Pr^{\envpol^{\star}}_{\sinit}[\reach(\states^{=1})]$.
\end{proof}

\subsection{Linearization of the Uncertainty}

\begin{lemma}\label{lem:app:lin}
Fix $s \in \states^{?}$, a scalar $x_s \ge 0$, and a vector $\boldsymbol{f}_s = (f_{s,s'})_{s' \in \states}$. The constraints
$\boldsymbol{f}_s \ge 0$, $\sum_{s' \in \states} f_{s,s'} = x_s$, and $A_s\, \boldsymbol{f}_s \le x_s\, b_s$ hold if and only if there exists $p \in \uncert(s)$ with $\boldsymbol{f}_s = x_s\, p$.
\end{lemma}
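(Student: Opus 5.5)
The plan is a direct two-directional verification, split according to whether $x_s$ is positive or zero. The only subtle point is the degenerate case $x_s = 0$. Throughout I use the plain description $\uncert(s) = \{ p \in \Delta(\states) : A_s\, p \le b_s \}$ fixed at the start of this appendix, together with the standing assumption that every uncertainty set is nonempty (otherwise the RMC has no environment policy at all).

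For the direction ($\Leftarrow$), suppose $\boldsymbol{f}_s = x_s\, p$ with $p \in \uncert(s)$ and $x_s \ge 0$.
\begin{compactitem}
\item Nonnegativity $\boldsymbol{f}_s \ge 0$ follows from $p \ge 0$.
\item Summing over successors gives $\sum_{s'} f_{s,s'} = x_s \sum_{s'} p_{s'} = x_s$, since $p \in \Delta(\states)$.
\item Linearity gives $A_s \boldsymbol{f}_s = x_s (A_s p)$. Multiplying the inequality $A_s p \le b_s$ by the nonnegative scalar $x_s$ preserves it, so $A_s \boldsymbol{f}_s \le x_s b_s$.
\end{compactitem}
This direction is routine.

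For the direction ($\Rightarrow$), I split on $x_s$.
\begin{compactitem}
\item If $x_s > 0$, set $p := \boldsymbol{f}_s / x_s$. Then $p \ge 0$ and its entries sum to $1$, so $p \in \Delta(\states)$. Dividing $A_s \boldsymbol{f}_s \le x_s b_s$ by $x_s > 0$ gives $A_s p \le b_s$, so $p \in \uncert(s)$, and $\boldsymbol{f}_s = x_s p$ by construction.
\item If $x_s = 0$, then $\boldsymbol{f}_s \ge 0$ together with $\sum_{s'} f_{s,s'} = 0$ forces $\boldsymbol{f}_s = 0$. For any $p \in \uncert(s)$ we then have $\boldsymbol{f}_s = 0 \cdot p$. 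The constraint $A_s \boldsymbol{f}_s \le 0 = x_s b_s$ is automatically satisfied, so no further check is needed.
\end{compactitem}
The $x_s = 0$ case is where I expect the only real obstacle. The equivalence needs some witness $p$ to exist, so it genuinely relies on $\uncert(s) \neq \emptyset$, and I will state that dependence explicitly in the proof.

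Finally, I will extend the argument to projected polytopes, since the LP $(\cross)$ is also used for $L_1$ balls. In that setting the admissibility constraint is $A_s \left[\begin{smallmatrix} \boldsymbol{f}_s \\ y' \end{smallmatrix}\right] \le x_s b_s$ for some auxiliary vector $y'$. The same two cases apply, with the auxiliary variables scaled alongside $p$:
\begin{compactitem}
\item For $x_s > 0$, use $y' = x_s y$ and recover $y = y'/x_s$.
\item For $x_s = 0$, take $y' = 0$ in the ($\Leftarrow$) direction. In the ($\Rightarrow$) direction, again pick any $p$ in the nonempty set $\uncert(s)$.
\end{compactitem}
The algebra in each step is unchanged from the plain case.
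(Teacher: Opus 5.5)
Your proof is correct and follows the paper's argument essentially verbatim. The ($\Leftarrow$) direction multiplies $A_s p \le b_s$ by $x_s \ge 0$, and the ($\Rightarrow$) direction normalises $\boldsymbol{f}_s / x_s$ when $x_s > 0$ and falls back on $\uncert(s) \neq \emptyset$ when $x_s = 0$. Your extension to projected polytopes via the scaling $y' = x_s y$ is also correct and matches the paper's remark on auxiliary variables in Appendix~\ref{app:uncertainty}.
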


\begin{proof}
($\Leftarrow$) Given $p \in \uncert(s)$ with $\boldsymbol{f}_s = x_s\, p$: non-negativity and $\sum_{s'} f_{s,s'} = x_s \sum_{s'} p_{s'} = x_s$ follow from $p \in \Delta(\states)$, and multiplying $A_s\, p \le b_s$ by $x_s \ge 0$ gives $A_s\, \boldsymbol{f}_s \le x_s\, b_s$.

($\Rightarrow$) If $x_s > 0$, set $p = \boldsymbol{f}_s / x_s$. Then $p \ge 0$, $\sum_{s'} p_{s'} = 1$, so $p \in \Delta(\states)$; dividing $A_s\, \boldsymbol{f}_s \le x_s\, b_s$ by $x_s$ gives $A_s\, p \le b_s$, hence $p \in \uncert(s)$ and $\boldsymbol{f}_s = x_s\, p$. If $x_s = 0$, then $\boldsymbol{f}_s \ge 0$ and $\sum_{s'} f_{s,s'} = 0$ force $\boldsymbol{f}_s = 0 = x_s\, p$ for any $p \in \uncert(s)$, and $\uncert(s) \ne \emptyset$.
\end{proof}

In particular, $\mathsf{Split} \wedge \mathsf{NNeg} \wedge \mathsf{Adm}$ at a state $s \in \states^{?}$ holds exactly when the outgoing flow is $\boldsymbol{f}_s = x_s\, p$ for some admissible distribution $p \in \uncert(s)$.

\subsection{Proof of Lemma~\ref{lem:rmc:reachability}}

\begin{proof}[Proof of Lemma~\ref{lem:rmc:reachability}]
Fix $\sinit \in \states^{?}$. We show that the optimum of $(\cross)$ equals $\val^{\rmc}_{\sinit}(\safe(\target))$, via two inequalities.

\emph{The optimum is at most the value.}
Let $\envpol^{\star}$ be an optimal positional policy; by Lemma~\ref{lem:app:preproc} it has finite occupation. By~\eqref{eq:app:factor} and Lemma~\ref{lem:app:lin}, the occupation $(x^{\envpol^{\star}}, f^{\envpol^{\star}})$ satisfies $\mathsf{NNeg} \wedge \mathsf{Cons} \wedge \mathsf{Split} \wedge \mathsf{Adm}$, so it is feasible for $(\cross)$. By~\eqref{eq:app:obj} and Lemma~\ref{lem:app:preproc}, its objective is $\Pr^{\envpol^{\star}}_{\sinit}[\reach(\states^{=1})] = \Pr^{\envpol^{\star}}_{\sinit}[\safe(\target)] = \val^{\rmc}_{\sinit}(\safe(\target))$. Since $(\cross)$ minimises, its optimum is at most the value.

\emph{The value is at most the optimum.}
Let $(x^{\star}, f^{\star})$ be any feasible solution of $(\cross)$. Define a positional environment policy $\envpol^{\circ}$ by
\[
    \envpol^{\circ}(s) =
    \begin{cases}
        \boldsymbol{f}^{\star}_s / x^{\star}_s & \text{if } s \in \states^{?} \text{ and } x^{\star}_s > 0,\\
        p^{\dagger} \in \uncert(s) \text{ (arbitrary)} & \text{otherwise.}
    \end{cases}
\]
By Lemma~\ref{lem:app:lin}, $\envpol^{\circ}(s) \in \uncert(s)$, so $\envpol^{\circ}$ is a valid positional policy, and $x^{\star}_s\, p^{\envpol^{\circ}}_{s,s'} = f^{\star}_{s,s'}$ for all $s \in \states^{?}$ (both sides vanish when $x^{\star}_s = 0$, as then $\boldsymbol{f}^{\star}_s = 0$).

We show $x^{\envpol^{\circ}}_s \le x^{\star}_s$ for every $s \in \states^{?}$ by proving $x^{\envpol^{\circ}, N}_s \le x^{\star}_s$ for the truncated occupation $x^{\envpol^{\circ}, N}_s = \sum_{t=0}^{N} \Pr^{\envpol^{\circ}}_{\sinit}[s_t = s]$, by induction on $N$. For $N = 0$, $x^{\envpol^{\circ},0}_s = \mathbf{1}[s=\sinit] \le x^{\star}_s$ by $\mathsf{Cons}$ and $f^{\star} \ge 0$; and the inductive step uses $x^{\star}_{s'} p^{\envpol^{\circ}}_{s',s} = f^{\star}_{s',s}$ and $\mathsf{Cons}$,
\[
    x^{\envpol^{\circ},N}_s = \mathbf{1}[s=\sinit] + \sum_{s' \in \states^{?}} x^{\envpol^{\circ},N-1}_{s'}\, p^{\envpol^{\circ}}_{s',s}
    \le \mathbf{1}[s=\sinit] + \sum_{s' \in \states^{?}} f^{\star}_{s',s} = x^{\star}_s.
\]
Letting $N \to \infty$ gives $x^{\envpol^{\circ}}_s \le x^{\star}_s < \infty$; in particular $\envpol^{\circ}$ has finite occupation, so it reaches $\states^{=0} \cup \states^{=1}$ almost surely from $\sinit$. Using~\eqref{eq:app:obj}, $x^{\envpol^{\circ}}_s \le x^{\star}_s$, and $x^{\star}_s p^{\envpol^{\circ}}_{s,w} = f^{\star}_{s,w}$,
\begin{equation}\label{eq:app:objbound}
    \Pr\nolimits^{\envpol^{\circ}}_{\sinit}[\reach(\states^{=1})]
    = \sum_{\substack{s \in \states^{?} \\ w \in \states^{=1}}} x^{\envpol^{\circ}}_s\, p^{\envpol^{\circ}}_{s,w}
    \le \sum_{\substack{s \in \states^{?} \\ w \in \states^{=1}}} x^{\star}_s\, p^{\envpol^{\circ}}_{s,w}
    = \sum_{\substack{s \in \states^{?} \\ w \in \states^{=1}}} f^{\star}_{s,w}.
\end{equation}
It remains to bound the safety value by $\Pr^{\envpol^{\circ}}_{\sinit}[\reach(\states^{=1})]$. Consider the policy $\widehat{\envpol}$ that follows $\envpol^{\circ}$ until the play first reaches $\states^{=0} \cup \states^{=1}$ (which happens almost surely), then, from a state of $\states^{=1}$, plays to stay safe (possible, since safety value $1$ means safe under every policy), and from a state of $\states^{=0}$, plays an unsafe-almost-surely continuation (possible, since safety value $0$). Then a $\widehat{\envpol}$-play is safe exactly when it is absorbed in $\states^{=1}$, so
\[
    \val^{\rmc}_{\sinit}(\safe(\target)) \;\le\; \Pr\nolimits^{\widehat{\envpol}}_{\sinit}[\safe(\target)] \;=\; \Pr\nolimits^{\envpol^{\circ}}_{\sinit}[\reach(\states^{=1})]
    \;\le\; \sum_{\substack{s \in \states^{?} \\ w \in \states^{=1}}} f^{\star}_{s,w},
\]
using~\eqref{eq:app:objbound}. As this holds for every feasible solution, the value is at most the optimum of $(\cross)$.

Combining the two inequalities, the optimum of $(\cross)$ equals $\val^{\rmc}_{\sinit}(\safe(\target))$.
\end{proof}
\subsection{Proof of Theorem~\ref{thm:rmc:safety-poly}}
\begin{proof}[Proof of Theorem~\ref{thm:rmc:safety-poly}]
The sets $\states^{=1}, \states^{=0}$ are computable in polynomial time \citep{AsadiCGKS26}. The program $(\cross)$ has $O(|\states|^2)$ variables and, by the Assumption, a number of constraints polynomial in the encoding of $\rmc$, and linear programs of polynomial size are solvable in polynomial time. States in $\states^{=1}$ and $\states^{=0}$ have known safety values $1$ and $0$; for the remaining states, solving $(\cross)$ once per source $\sinit \in \states^{?}$ yields all values $\val^{\rmc}_{s}(\safe(\target))$ and, corresponding optimal positional policies, all in polynomial time.
\end{proof}

\section{$\mecdecomp$ Procedure for Computing MEC Decompositions of RMCs} \label{app:rmc:mec-decomp}

Throughout, fix an RMC $\rmc = (\states, \uncert)$ with linearly defined uncertainty. We recall the notion of maximal end-component from Definition~\ref{def:ec} and give the $\mecdecomp$ procedure that computes them; its correctness is established in Theorem~\ref{thm:rmc:mec-decomp}.

\paragraph{Internal transitions.} For a set $B \subseteq \states$ and a state $s \in B$, let
\[
    \uncert_B(s) \;=\; \{\, p \in \uncert(s) \;:\; \supp(p) \subseteq B \,\}
\]
be the set of admissible distributions at $s$ whose support stays inside $B$, i.e. the one-step choices with which the environment can keep the play within $B$. We say $B$ is \emph{closed} if $\uncert_B(s) \ne \emptyset$ for every $s \in B$; equivalently, from every state of $B$ the environment has at least one distribution that does not leave $B$. The \emph{internal transition graph} of $B$ is the directed graph $G_B = (B, E_B)$ whose edges record the transitions the environment can take while remaining in $B$,
\[
    E_B \;=\; \{\, (s, s') \in B \times B \;:\; \exists\, p \in \uncert_B(s),\ p_{s'} > 0 \,\}.
\]

This is exactly the notation of Definition~\ref{def:ec}: the end-components of $\rmc$ are the non-empty closed sets $B$ whose internal transition graph $G_B$ is strongly connected, and a \emph{maximal end-component} (MEC) is an end-component that is not strictly contained in another.

\paragraph{Algorithm.} Algorithm~\ref{algo:mec-decomp} computes the MEC decomposition of $\rmc$. Following the standard MDP procedure, it repeatedly \emph{closes} a candidate set $B$ by discarding every state from which the environment cannot avoid leaving $B$, and then \emph{splits} the closed set into the strongly connected components of $G_B$, recursing on each proper component. Both operations reduce to polynomially many linear-programming queries against the description of $\uncert$.

\begin{algorithm}[ht]
\caption{$\mecdecomp(B)$: MEC decomposition of an RMC}
\label{algo:mec-decomp}
\begin{algorithmic}[1]
\REQUIRE A subset $B \subseteq \states$ of an RMC $\rmc$ with linearly defined uncertainty; the top-level call uses $B = \states$.
\ENSURE The set of MECs contained in $B$.
\REPEAT
    \STATE $\mathsf{Leak} \gets \{\, s \in B \;:\; \uncert_B(s) = \emptyset \,\}$
    \STATE $B \gets B \setminus \mathsf{Leak}$
\UNTIL{$\mathsf{Leak} = \emptyset$}
\IF{$B = \emptyset$}
    \STATE \textbf{return} $\emptyset$
\ENDIF
\STATE Compute the edge set $E_B$ and the SCCs $C_1, \dots, C_k$ of $G_B$
\IF{$k = 1$}
    \STATE \textbf{return} $\{B\}$ \COMMENT{$B$ is a MEC}
\ENDIF
\STATE $\mathsf{Result} \gets \emptyset$
\FOR{$i = 1, \dots, k$}
    \STATE $\mathsf{Result} \gets \mathsf{Result} \cup \mecdecomp(C_i)$
\ENDFOR
\STATE \textbf{return} $\mathsf{Result}$
\end{algorithmic}
\end{algorithm}

\paragraph{LP primitives.} The two operations on $\uncert$ are realised as linear programs:
\begin{itemize}
    \item \emph{Closure test $\uncert_B(s) \ne \emptyset$.} Add the equalities $p_{s'} = 0$ for every $s' \notin B$ to the linear description $A_s\, p \le b_s$ of $\uncert(s)$ and test feasibility over the simplex; the set $\uncert_B(s)$ is non-empty iff this program is feasible.
    \item \emph{Edges of $G_B$.} For each ordered pair $(s, s') \in B \times B$, maximise $p_{s'}$ subject to $p \in \uncert_B(s)$; the edge $(s,s')$ belongs to $E_B$ iff the optimum is strictly positive.
\end{itemize}
The strongly connected components of $G_B$ are then computed by Tarjan's algorithm in time linear in $|B| + |E_B|$.

\begin{lemma}\label{lem:rmc:mec-disjoint}
The maximal end-components of $\rmc$ are pairwise disjoint.
\end{lemma}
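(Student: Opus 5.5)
The plan is the standard union argument. Suppose $U_1$ and $U_2$ are maximal end-components with $U_1 \cap U_2 \neq \emptyset$. We show that $U = U_1 \cup U_2$ is itself an end-component. Maximality then forces $U_1 = U = U_2$, so any two distinct MECs must be disjoint.

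\emph{Closedness.} Fix $s \in U$. Without loss of generality $s \in U_1$. Because $U_1$ is an EC, some $p \in \uncert_{U_1}(s)$ exists. Since $\supp(p) \subseteq U_1 \subseteq U$, we get $p \in \uncert_U(s)$, so $\uncert_U(s) \neq \emptyset$.

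\emph{Strong connectivity of $G_U$.} The key observation is that the edge sets are monotone in the set: $\uncert_{U_i}(s) \subseteq \uncert_U(s)$ for $s \in U_i$, and hence $E_{U_i} \subseteq E_U$. Each $G_{U_i}$ is therefore a subgraph of $G_U$. Pick $t \in U_1 \cap U_2$. Any $u \in U_1$ reaches $t$ and is reached from $t$ inside $G_{U_1} \subseteq G_U$, and the same holds for any $v \in U_2$ via $G_{U_2}$. Composing these paths through $t$ connects every pair of states of $U$ in both directions, so $G_U$ is strongly connected.

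Thus $U$ is an EC containing both $U_1$ and $U_2$. If, say, $U_1 \subsetneq U$, this contradicts the maximality of $U_1$, so $U_1 = U$, and symmetrically $U_2 = U$.

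The only step that needs care is the monotonicity of edges under enlarging the set. It holds here because the edge $(s,s')$ is witnessed by a distribution whose support lies in the smaller set and so also in the larger one. No convex combinations are needed for this direction, so the polytopic structure of $\uncert$ plays no role in this lemma.
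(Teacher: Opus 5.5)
Your proposal is correct and follows essentially the same argument as the paper. Both show that the union of two intersecting MECs is an end-component, which contradicts maximality: it is closed because a witness supported in $U_i$ is also supported in the union, and it is strongly connected because each $G_{U_i}$ is a subgraph of $G_{U}$ and the two are joined at the shared state.
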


\begin{proof}
Let $M_1, M_2$ be MECs with a common state $s \in M_1 \cap M_2$; we show that $M_1 \cup M_2$ is an end-component, which, since two distinct MECs cannot contain one another, strictly contains $M_1$ and contradicts its maximality. For closure, take any $s' \in M_1 \cup M_2$, say $s' \in M_i$. As $M_i$ is closed, $\uncert_{M_i}(s') \ne \emptyset$; and since $M_i \subseteq M_1 \cup M_2$, a distribution supported in $M_i$ is also supported in $M_1 \cup M_2$, so $\uncert_{M_i}(s') \subseteq \uncert_{M_1 \cup M_2}(s')$ and the latter is non-empty. For strong connectivity of $G_{M_1 \cup M_2}$, note that relaxing the support constraint from $\subseteq M_i$ to $\subseteq M_1 \cup M_2$ only adds edges, so $G_{M_i}$ is a subgraph of $G_{M_1 \cup M_2}$ restricted to $M_i$; hence each $M_i$ is internally strongly connected in $G_{M_1 \cup M_2}$, and the shared state $s$ links the two parts, making $M_1 \cup M_2$ strongly connected. Thus $M_1 \cup M_2$ is an end-component.
\end{proof}

\begin{theorem}\label{thm:rmc:mec-decomp}
Algorithm~\ref{algo:mec-decomp} returns the set of maximal end-components of $\rmc$, using $O(|\states|^3)$ linear-programming queries each of size polynomial in the encoding of $\rmc$, and hence runs in polynomial time. Moreover, a \emph{single} invocation $\mecdecomp(B)$, excluding the work of its recursive calls, performs $O(|B|^2)$ linear-programming queries.
\end{theorem}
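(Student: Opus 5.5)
The plan splits into a correctness part and a complexity part. Correctness mirrors the classical MDP argument of \cite{ChatterjeeH11}. Complexity counts the linear-programming queries made along the recursion tree.

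\emph{Invariant.} Every MEC $M$ of $\rmc$ with $M \subseteq B$ survives in $B$ through the closing loop. Indeed, for $s \in M \subseteq B$ we have $\uncert_M(s) \subseteq \uncert_B(s)$, and $\uncert_M(s) \neq \emptyset$, so $s$ is never placed in $\mathsf{Leak}$. By induction over the iterations of the repeat loop, $M$ stays inside $B$ after closing. Next, $G_M$ is a subgraph of $G_B$ restricted to $M$, because enlarging the support constraint only adds edges. Hence $M$ is strongly connected in $G_B$ and lies inside a single SCC $C_i$. So every MEC contained in the input is contained in the input of some recursive call.

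\emph{Soundness.} When the algorithm returns $\{B\}$, the set $B$ is non-empty, closed, and $G_B$ is strongly connected, so $B$ is an end-component. It is maximal among end-components contained in the original input: any end-component $B' \supseteq B$ inside the input is contained in a MEC $M$, and by the invariant $M$ is inside the current $B$, so $B' = B$. At top level the input is $\states$, which gives maximality outright.

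\emph{Completeness.} The recursive calls are on disjoint sets, and each call either returns a MEC or strictly shrinks its set. By the invariant, every MEC $M$ is eventually returned by the unique call whose set is $M$. This relies on the fact that when $B$ equals $M$ after closing and splitting, there is a single SCC; it also relies on Lemma~\ref{lem:rmc:mec-disjoint}, so that no returned set can be a proper subset of a MEC. A short argument is needed here. If a call returns $B$ with $B \subsetneq M$, then $M$ would have been contained in this same call's $B$ by the invariant, which is a contradiction.

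\emph{Complexity.} A single invocation on $B$ needs at most $|B|$ closing rounds, each round testing at most $|B|$ states, so $O(|B|^2)$ closure LPs. Computing the edges needs one LP per ordered pair, again $O(|B|^2)$. Each LP has size polynomial in the description of $\rmc$, since we only add equalities $p_{s'}=0$ or an objective. For the recursion tree, note two facts. The sets at any fixed depth are pairwise disjoint. Each child is a strict subset of its parent, since a split happens only when $k \ge 2$, so the depth is at most $|\states|$. The work per level is then $\sum |B|^2 \le |\states|^2$, and the total is $O(|\states|^3)$ LP queries, which gives polynomial time.

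The main obstacle I expect is the invariant step in the split phase. It requires care that edges of $G_B$ are defined via existence of some $p \in \uncert_B(s)$ with $p_{s'}>0$, so that monotonicity in $B$ holds. This is what lets us transfer the MDP proof even though the environment chooses from a convex set rather than a finite set of actions.
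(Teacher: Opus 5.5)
Your proposal is correct and follows essentially the same route as the paper: closure preserves end-components, each end-component lies in one SCC of $G_{B^{\star}}$, a single-SCC node returns an end-component that must equal the MEC containing it, and the $O(|\states|^3)$ bound comes from $O(|B|^2)$ queries per invocation over at most $|\states|$ levels of pairwise disjoint sets. One step needs tightening: ``by the invariant $M$ is inside the current $B$'' and ``contained in this same call's $B$'' do not follow from the invariant as you state it (contained in $B$ implies contained in some child); they need the paper's stronger form, that every MEC which merely \emph{intersects} a call's set is contained in it, which you get from your version by noting that the SCCs of $G_{B^{\star}}$ are pairwise disjoint, so the child containing $M$ is exactly the one $M$ meets.
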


\begin{proof}
We first argue correctness and then bound the number of LP queries. Both parts rely on the invariant that at every call $\mecdecomp(B)$, every MEC of $\rmc$ that intersects $B$ is contained in $B$. The invariant holds for the top-level call with $B = \states$, since every MEC intersects $\states$ and is contained in it.

\emph{Closure preserves end-components.}
The closure loop removes a state $s$ from $B$ only when $\uncert_B(s) = \emptyset$. Let $E$ be any end-component with $E \subseteq B$ and $s \in E$. Since $E$ is closed, $\uncert_E(s) \ne \emptyset$, and $\uncert_E(s) \subseteq \uncert_B(s)$ because $E \subseteq B$; hence $\uncert_B(s) \ne \emptyset$ and $s$ is not removed. Therefore every end-component contained in $B$ survives the closure loop. Let $B^{\star}$ be the closed set obtained when the loop terminates.

\emph{Every end-component lies in a single SCC of $G_{B^{\star}}$.}
For an end-component $E \subseteq B^{\star}$, the support condition $\supp(p) \subseteq E$ is stronger than $\supp(p) \subseteq B^{\star}$, so $G_E$ is a subgraph of $G_{B^{\star}}$ restricted to $E$. Hence $E$ is strongly connected inside $G_{B^{\star}}$ and is contained in a single strongly connected component.

\emph{Base case: one SCC.}
Suppose $G_{B^{\star}}$ is strongly connected (a single SCC). Then $B^{\star}$ is non-empty, closed, and strongly connected, so it is an end-component. By Lemma~\ref{lem:rmc:mec-disjoint} it lies in a unique MEC $M$ of $\rmc$; since $M$ intersects $B$, the invariant gives $M \subseteq B$, and $M$ survives the closure loop, so $M \subseteq B^{\star}$. Combined with $B^{\star} \subseteq M$ (as $B^{\star}$ is an end-component contained in the MEC $M$), we get $B^{\star} = M$. Thus the algorithm correctly returns the single MEC $\{B^{\star}\}$.

\emph{Recursive case: several SCCs.}
If $G_{B^{\star}}$ has SCCs $C_1, \dots, C_k$ with $k \ge 2$, then by the previous paragraph every end-component inside $B^{\star}$ lies in some $C_i$, and the algorithm recurses on each $C_i$. The invariant is maintained for each recursive call: any MEC $M$ intersecting $C_i$ also intersects $B$, so $M \subseteq B$ by the parent invariant, $M \subseteq B^{\star}$ by closure, and $M \subseteq C_i$ since it lies in a single SCC. By induction on the recursion, the calls collectively return exactly the MECs of $\rmc$ contained in $B$; for the top-level call $B = \states$ this is the full MEC decomposition.

\emph{Complexity.}
A single call, excluding its recursive calls, performs $O(|B|^2)$ LP queries: the closure loop runs at most $|B|$ rounds, each testing the $|B|$ feasibility programs $\uncert_B(s) \ne \emptyset$, and building $G_B$ uses one maximisation LP per ordered pair in $B \times B$; the SCC computation is linear in $|B| + |E_B|$ and uses no LPs. This is the second claim of the statement.

We bound the total by counting the recursion level by level. There are at most $|\states|$ levels, because each child $C_i$ is a \emph{proper} subset of its parent: $C_i \subseteq B^{\star} \subseteq B$, and recursion happens only when $B^{\star}$ splits into $k \ge 2$ SCCs. The sets processed at any fixed level are pairwise disjoint, since the children of a node are the pairwise-disjoint SCCs of $B^{\star}$ and children of distinct nodes stay inside the disjoint sets of their parents. Hence each level costs at most $\sum_{B \text{ at that level}} O(|B|^2) \le O\big( \big( \sum_{B} |B| \big)^2 \big) = O(|\states|^2)$ LP queries, and the at most $|\states|$ levels cost $O(|\states|^3)$ in total. Each query is a linear program of size polynomial in the encoding of $\rmc$ and is solvable in polynomial time, so the whole algorithm runs in polynomial time.
\end{proof}

\section{Proofs of Lemmas~\ref{thm:rmc:almost-sure-parity} and~\ref{lem:rmc:parity}} \label{app:rmc:parity}

Throughout this section we fix a linearly defined RMC $\rmc = (\states, \uncert)$, a coloring $\coloring \colon \states \to [d]$, and an initial state $\sinit$. Recall from Section~\ref{sec:RMC:parity} that a play $\pi$ \emph{violates} $\parity(\coloring)$ exactly when its dominant color $\max\{\coloring(s) : s \in \Inf(\pi)\}$ is odd, and write
\[
    W_{\mathrm{odd}} = \asoddsat(\states)
\]
for the output of Algorithm~\ref{algo:as-odd-sat}; the internal transition graph $G_B$ and the restricted uncertainty sets $\uncert_B$ are those of Appendix~\ref{app:rmc:mec-decomp}. The proofs rest on two properties of $W_{\mathrm{odd}}$: the environment can confine a play to it while forcing an odd dominant color (Lemma~\ref{lem:app:parity:target}), and every violating play must reach it (Lemma~\ref{lem:app:parity:bound}).

\begin{lemma}[Confinement]\label{lem:app:parity:target}
Every end-component $E$ of $\rmc$ whose dominant color $\max_{s \in E} \coloring(s)$ is odd satisfies $E \subseteq W_{\mathrm{odd}}$. Moreover, there is a positional environment policy $\envpol_{\mathrm{stay}}$ under which $W_{\mathrm{odd}}$ is closed and, from every $s \in W_{\mathrm{odd}}$, the dominant color visited infinitely often is odd almost surely.
\end{lemma}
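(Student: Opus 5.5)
The proof has two parts: an induction on the recursion of $\asoddsat$ for the inclusion claim, and an explicit positional policy built from interior points of the uncertainty sets for the confinement claim.

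\emph{Inclusion.} I will prove, by induction on $|B|$, that for every set $B$ on which $\asoddsat(B)$ is called, every end-component $E\subseteq B$ with odd $\max_{s\in E}\coloring(s)$ satisfies $E\subseteq\asoddsat(B)$. The call $\asoddsat(\states)$ then gives the first claim.

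Fix such an $E\subseteq B$. The correctness proof of $\mecdecomp$ (Theorem~\ref{thm:rmc:mec-decomp}) shows that every end-component contained in $B$ lies inside a maximal end-component $M_i$ of the sub-RMC restricted to $B$. The argument is that closure never removes states of $E$, and $E$ stays inside a single SCC at each recursion level. Let $m_i=\max_{s\in M_i}\coloring(s)$.

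If $m_i$ is odd, then $M_i\subseteq W$ and we are done. If $m_i$ is even, then since the dominant color of $E$ is odd, it differs from $m_i$. Hence $E$ contains no state of color $m_i$, so $E\subseteq M_i\setminus\coloring^{-1}(m_i)$. This is a strictly smaller set because $M_i$ does contain a state of color $m_i$. The induction hypothesis then puts $E$ inside the recursive output. The base case is $B=\emptyset$, which contains no end-components.

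\emph{Confinement policy.} I first show that $W_{\mathrm{odd}}$ is a disjoint union of end-components $E_1,\dots,E_\ell$, each with odd maximal color. Each such $E_j$ is a set $M_i$ returned as a MEC in some recursive call, and the recursion visits pairwise disjoint sets. Such an $M_i$ is an end-component of the restricted RMC, hence of $\rmc$, because $\uncert_{M_i}$ does not depend on the ambient set.

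On each $E_j$, I define $\envpol_{\mathrm{stay}}(s)$ to be a relative-interior point of $\uncert_{E_j}(s)$, for instance the average of the maximizers of $p_{s'}$ over $s'\in E_j$. Note that $\uncert_{E_j}(s)$ is a polytope because $\uncert(s)$ is linearly defined. By convexity, this single point puts positive mass on every edge $(s,s')\in E_{G_{E_j}}$, and its support stays inside $E_j$.

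The Markov chain induced on $E_j$ therefore has $G_{E_j}$ as its underlying graph, which is strongly connected. So $E_j$ is a single recurrent class. Almost surely every state of $E_j$ is visited infinitely often, and the dominant color is $\max_{E_j}\coloring$, which is odd. Closure of $W_{\mathrm{odd}}$ follows because every $E_j$ is closed under the policy. Outside $W_{\mathrm{odd}}$ the policy can be arbitrary.

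\emph{Main obstacle.} The delicate step is the claim that an end-component contained in $B$ sits inside one MEC of the restricted RMC. This needs the support of $E$'s internal distributions to remain admissible after the restriction to $B$, which holds because $\uncert_E\subseteq\uncert_B$. The recurrence argument is routine once the relative-interior choice is in place.
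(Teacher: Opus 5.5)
Your proposal is correct and follows the paper's proof essentially step for step: the same induction over the $\asoddsat$ recursion (measured by $|B|$ instead of by depth), the same even/odd case split on $m_i$, and the same averaging of witnessing distributions into a positional $\envpol_{\mathrm{stay}}$ under which each odd $M_i$ is a single recurrent class. Your route to ``$E$ lies in some set returned by $\mecdecomp(B)$'' goes through the closure and single-SCC invariants. That is slightly more careful than the paper's ``MECs are maximal'', because the recursive calls run on subsets where $\mecdecomp$ returns maximal ECs of the restriction, not MECs of $\rmc$. One small inaccuracy: the average of the coordinate maximizers need not lie in the relative interior of $\uncert_{E_j}(s)$, but you only use that it charges every $G_{E_j}$-successor of $s$, and that does hold.
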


\begin{proof}
\emph{First part.} We show $E \subseteq W_{\mathrm{odd}}$ by induction on the recursion depth of the call $\asoddsat(B)$ that first has $E \subseteq B$ (the top-level call has $B = \states \supseteq E$). Since $E$ is an end-component contained in $B$ and MECs are maximal, $E$ is contained in some MEC $M_i$ of $B$ computed by $\mecdecomp(B)$. Let $m_i = \max_{s \in M_i} \coloring(s)$.
If $m_i$ is odd, the algorithm adds $M_i$ to its output, so $E \subseteq M_i \subseteq W_{\mathrm{odd}}$.
If $m_i$ is even, then, as $E \subseteq M_i$, its dominant color satisfies $\max_{s \in E}\coloring(s) \le m_i$; being odd it cannot equal the even $m_i$, so $\max_{s \in E}\coloring(s) < m_i$ and no state of $E$ has color $m_i$. Hence $E \cap U_i = \emptyset$ for $U_i = \{ s \in M_i : \coloring(s) = m_i \}$, i.e. $E \subseteq M_i \setminus U_i$, and the algorithm recurses on $M_i \setminus U_i$ at strictly greater depth; the inductive hypothesis gives $E \subseteq W_{\mathrm{odd}}$.

\emph{Second part.} Each set added to the output is a MEC $M_i$ (of some subset $B$) with odd dominant color $m_i$; such an $M_i$ is an end-component of $\rmc$, since every $s \in M_i$ admits $p \in \uncert_{M_i}(s) \subseteq \uncert(s)$ with $\supp(p) \subseteq M_i$. Fix any such $M_i$. Its internal transition graph $G_{M_i}$ is strongly connected, and for each $s \in M_i$ the convexity of $\uncert(s)$ lets us average the distributions witnessing the outgoing $G_{M_i}$-edges of $s$ into a single $\envpol_{\mathrm{stay}}(s) \in \uncert(s)$ with $\supp(\envpol_{\mathrm{stay}}(s)) \subseteq M_i$ that covers all $G_{M_i}$-successors of $s$. Under $\envpol_{\mathrm{stay}}$ the set $M_i$ is thus closed and strongly connected, hence a single BSCC, so a play started anywhere in $M_i$ visits every state of $M_i$ infinitely often almost surely; its dominant color is therefore $m_i$, which is odd. As $W_{\mathrm{odd}}$ is the disjoint union of these MECs, defining $\envpol_{\mathrm{stay}}$ componentwise makes $W_{\mathrm{odd}}$ closed with the stated property.
\end{proof}

\begin{lemma}[Reaching $W_{\mathrm{odd}}$]\label{lem:app:parity:bound}
For every positional environment policy $\envpol$,
\[
    \Pr\nolimits^{\envpol}_{\sinit}[\neg\,\parity(\coloring)] \;\le\; \Pr\nolimits^{\envpol}_{\sinit}[\reach(W_{\mathrm{odd}})].
\]
\end{lemma}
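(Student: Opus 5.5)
Fix a positional environment policy $\envpol$. Then $\envpol$ induces a finite Markov chain $\rmc^{\envpol}$ on $\states$, with transition probabilities $p^{\envpol}_{s,s'}$ and $\envpol(s)\in\uncert(s)$. The plan is to use the standard bottom-strongly-connected-component argument for finite Markov chains. Almost surely, a play of $\rmc^{\envpol}$ eventually enters some BSCC $B$ and then visits every state of $B$ infinitely often. Hence, almost surely, $\Inf(\pi)=B$ for the BSCC $B$ the play ends up in, and the dominant color of $\pi$ is $\max_{s\in B}\coloring(s)$. It therefore suffices to show the following: every BSCC $B$ of $\rmc^{\envpol}$ whose maximal color is odd satisfies $B\subseteq W_{\mathrm{odd}}$. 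Given this, any violating play outside a null set must have entered such a $B$, so it reached $W_{\mathrm{odd}}$. The event $\neg\parity(\coloring)$ is then contained, up to a null set, in $\reach(W_{\mathrm{odd}})$, which gives the inequality.

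The key step is to show that every BSCC $B$ of $\rmc^{\envpol}$ is an end-component of $\rmc$ in the sense of Definition~\ref{def:ec}. I will check the two conditions in turn. For closure, take $s\in B$. Since $B$ is bottom, $\supp(\envpol(s))\subseteq B$, so $\envpol(s)\in\uncert_B(s)$ and this set is nonempty. For strong connectivity, every edge $(s,s')$ of $\rmc^{\envpol}$ inside $B$ has $p^{\envpol}_{s,s'}>0$ and is witnessed by the distribution $\envpol(s)\in\uncert_B(s)$. So the graph of $\rmc^{\envpol}$ restricted to $B$ is a subgraph of the internal transition graph $G_B$. That subgraph is strongly connected because $B$ is an SCC, and therefore $G_B$ is strongly connected as well.

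With $B$ an end-component whose dominant color $\max_{s\in B}\coloring(s)$ is odd, the first part of Lemma~\ref{lem:app:parity:target} gives $B\subseteq W_{\mathrm{odd}}$ directly. The remaining point is measure-theoretic, and I expect it to be the only place needing care. I will invoke the classical fact that in a finite Markov chain the set of plays that eventually stay in a BSCC $B$ and visit all of $B$ infinitely often has probability one. The violating plays are then, up to a null set, a subset of the plays that reach a BSCC with odd maximal color. Each such play reaches a state of $W_{\mathrm{odd}}$, and monotonicity of $\Pr^{\envpol}_{\sinit}$ concludes the proof.

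I do not expect a real obstacle, since the restriction to positional $\envpol$ makes the chain finite and time-homogeneous. If the lemma were needed for general history-dependent policies, I would instead use the RMC analogue of the fact that $\Inf(\pi)$ is almost surely an end-component. That version would require a Borel–Cantelli style argument showing that edges leaving an EC candidate infinitely often carry positive probability, which is more delicate. The statement as given only asks for positional policies, so the BSCC argument above suffices.
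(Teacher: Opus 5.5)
Your proposal is correct and follows the paper's proof essentially step for step. The induced finite chain is absorbed in a BSCC almost surely. Each BSCC is an end-component of $\rmc$; you verify closure and strong connectivity of $G_B$ a bit more explicitly than the paper does. The first part of Lemma~\ref{lem:app:parity:target} then places every BSCC with odd maximal color inside $W_{\mathrm{odd}}$, so $\neg\,\parity(\coloring)\subseteq\reach(W_{\mathrm{odd}})$ up to a null set.
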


\begin{proof}
Fix a positional $\envpol$. The Markov chain it induces from $\sinit$ reaches a bottom strongly connected component (BSCC) almost surely. Any such BSCC $C$ is closed under $\envpol$ and strongly connected, and each $s \in C$ has $\envpol(s) \in \uncert(s)$ with $\supp(\envpol(s)) \subseteq C$, so $C$ is an end-component of $\rmc$. On the event $\neg\,\parity(\coloring)$ the play visits exactly the states of its BSCC infinitely often, i.e. $\Inf(\pi) = C$, and its dominant color $\max_{s \in C}\coloring(s)$ is odd; by Lemma~\ref{lem:app:parity:target} this forces $C \subseteq W_{\mathrm{odd}}$, so the play reaches $W_{\mathrm{odd}}$. Hence $\neg\,\parity(\coloring) \subseteq \reach(W_{\mathrm{odd}})$ up to a null set, giving the inequality.
\end{proof}

\begin{proof}[Proof of Lemma~\ref{thm:rmc:almost-sure-parity}]
By the first part of Lemma~\ref{lem:app:parity:target}, the output $W_{\mathrm{odd}} = \asoddsat(\states)$ contains every end-component with an odd dominant color; conversely, by construction it is a union of MECs $M_i$ with odd dominant color $m_i$, each an end-component of $\rmc$. Thus $W_{\mathrm{odd}}$ is exactly the set of states lying in an end-component whose dominant color is odd, and by the second part of the lemma the environment forces an odd dominant color almost surely from every state of $W_{\mathrm{odd}}$ via $\envpol_{\mathrm{stay}}$, so each such state has parity value $0$, as required.

For the complexity, we count the LP queries of the top-level call $\asoddsat(\states)$ level by level, over the two nested recursions of $\asoddsat$ and $\mecdecomp$ \emph{at once}. This is the right way to account for them: $\asoddsat$ calls $\mecdecomp$ afresh at every node of its own recursion, so composing the two bounds as black boxes ($O(|B|^3)$ queries per MEC decomposition, times an $\asoddsat$ recursion of depth $\Omega(|\states|)$) would give only $O(|\states|^4)$. But the cubic factor of Theorem~\ref{thm:rmc:mec-decomp} is not the cost of one MEC decomposition: a single invocation of $\mecdecomp(B)$ costs only $O(|B|^2)$ queries, the extra factor $|\states|$ being itself the number of levels of $\mecdecomp$'s recursion. Counting the levels of the two recursions together therefore adds these factors instead of multiplying them.

So consider the combined recursion, whose nodes are all the invocations of $\mecdecomp$ made during $\asoddsat(\states)$, each labelled by the set $B$ it is invoked on. A node $\mecdecomp(B)$ has as children either the invocations $\mecdecomp(C_1), \dots, \mecdecomp(C_k)$ on the SCCs of $B^{\star}$, or, if $\mecdecomp(B)$ returns a single MEC $M = B^{\star}$ whose dominant colour $m$ is even, the single invocation $\mecdecomp(M \setminus \coloring^{-1}(m))$ made by the recursive call of line~6 of Algorithm~\ref{algo:as-odd-sat}. This covers every invocation of $\mecdecomp$, since each MEC $M_i$ processed by $\asoddsat$ is returned by exactly one node, the one with $B^{\star} = M_i$.

There are at most $|\states|$ levels, because a child's set is a proper subset of its parent's: $C_i \subsetneq B^{\star} \subseteq B$ as $B^{\star}$ splits into $k \ge 2$ SCCs, and $M \setminus \coloring^{-1}(m) \subsetneq M \subseteq B$ as the maximal-colour states of $M$ are removed before recursing. The sets at any fixed level are pairwise disjoint, because siblings are disjoint --- they are distinct SCCs of $B^{\star}$ in the first case, and there is only one child in the second --- and children of distinct nodes stay inside the disjoint sets of their parents. Hence each level costs at most $O(|\states|^2)$ LP queries by the per-invocation bound of Theorem~\ref{thm:rmc:mec-decomp}, and $O(|\states|^3)$ queries suffice in total. The remaining work of $\asoddsat$ --- computing each $m_i$, testing its parity, and forming $M_i \setminus \coloring^{-1}(m_i)$ and the union $W$ --- is $O(|B|)$ per node, hence $O(|\states|)$ per level and $O(|\states|^2)$ in total. Each LP query has size polynomial in the encoding of $\rmc$, so $W_{\mathrm{odd}}$ is computed in polynomial time.
\end{proof}

\begin{proof}[Proof of Lemma~\ref{lem:rmc:parity}]
Write $R = \sup_{\envpol} \Pr^{\envpol}_{\sinit}[\reach(W_{\mathrm{odd}})]$ for the maximal probability of reaching $W_{\mathrm{odd}}$. Since the environment minimizes the probability of $\parity(\coloring)$,
\[
    \val^{\rmc}_{\sinit}(\parity(\coloring))
    = \inf_{\envpol} \Pr\nolimits^{\envpol}_{\sinit}[\parity(\coloring)]
    = 1 - \sup_{\envpol} \Pr\nolimits^{\envpol}_{\sinit}[\neg\,\parity(\coloring)],
\]
and this value is attained by a positional environment policy, so the last supremum may be restricted to positional policies. Indeed, the parity problem in a linearly defined RMC reduces to a parity problem in an MDP controlled by the environment: at each state $s$ let the actions be the (finitely many) vertices of the polytope $\uncert(s)$, each with the transition distribution of the corresponding corner. Every environment policy of the RMC is a policy of this MDP whose one-step choice $p \in \uncert(s)$ is a convex combination of vertex actions, and conversely; the two induce the same distributions over plays, so the parity values coincide. Since parity MDPs attain their optimal values by positional policies mapping each state to a single action, the environment has a positional optimal policy that selects one vertex of $\uncert(s)$ at each state. We show the resulting supremum equals $R$.

\emph{Upper bound.} For every positional $\envpol$, Lemma~\ref{lem:app:parity:bound} gives $\Pr^{\envpol}_{\sinit}[\neg\,\parity(\coloring)] \le \Pr^{\envpol}_{\sinit}[\reach(W_{\mathrm{odd}})] \le R$.

\emph{Lower bound.} Applying Theorem~\ref{thm:rmc:safety-poly} to the safe set $\states \setminus W_{\mathrm{odd}}$ (equivalently, to the unsafe set $W_{\mathrm{odd}}$) yields a positional environment policy $\envpol_{\mathrm{reach}}$ attaining $\Pr^{\envpol_{\mathrm{reach}}}_{\sinit}[\reach(W_{\mathrm{odd}})] = R$. Define the positional policy
\[
    \envpol^{\star}(s) =
    \begin{cases}
        \envpol_{\mathrm{stay}}(s) & s \in W_{\mathrm{odd}},\\
        \envpol_{\mathrm{reach}}(s) & s \notin W_{\mathrm{odd}},
    \end{cases}
\]
with $\envpol_{\mathrm{stay}}$ from Lemma~\ref{lem:app:parity:target}. Since $\envpol^{\star}$ agrees with $\envpol_{\mathrm{reach}}$ until $W_{\mathrm{odd}}$ is first hit, it reaches $W_{\mathrm{odd}}$ with probability $R$; and once inside, $W_{\mathrm{odd}}$ is closed under $\envpol_{\mathrm{stay}}$ and the dominant color is odd almost surely, so $\reach(W_{\mathrm{odd}})$ implies $\neg\,\parity(\coloring)$ under $\envpol^{\star}$. Hence $\Pr^{\envpol^{\star}}_{\sinit}[\neg\,\parity(\coloring)] \ge \Pr^{\envpol^{\star}}_{\sinit}[\reach(W_{\mathrm{odd}})] = R$.

Combining the two bounds, $\sup_{\envpol} \Pr^{\envpol}_{\sinit}[\neg\,\parity(\coloring)] = R$, so, using the safety--reachability duality with unsafe set $U = W_{\mathrm{odd}}$,
\[
    \val^{\rmc}_{\sinit}(\parity(\coloring)) = 1 - R = \val^{\rmc}_{\sinit}\big(\safe(\states \setminus W_{\mathrm{odd}})\big). \qedhere
\]
\end{proof}

\begin{proof}[Proof of Theorem~\ref{thm:rmc:parity-poly}]
By Lemma~\ref{thm:rmc:almost-sure-parity}, $W_{\mathrm{odd}}$ is computable in polynomial time. By Lemma~\ref{lem:rmc:parity} the parity value at every state equals the safety value of $\safe(\states \setminus W_{\mathrm{odd}})$, which, together with an optimal positional environment policy, is computed in polynomial time by Theorem~\ref{thm:rmc:safety-poly}. The policy $\envpol^{\star}$ built in the proof of Lemma~\ref{lem:rmc:parity} (reaching $W_{\mathrm{odd}}$ optimally and then applying $\envpol_{\mathrm{stay}}$) is a corresponding optimal positional environment policy for $\parity(\coloring)$.
\end{proof}

\section{Proof of Proposition~\ref{prop:polytopic} and Corollary~\ref{cor:np-conp}}
\label{app:polytopic}

This appendix proves Proposition~\ref{prop:polytopic}: every linearly defined
RMDP satisfies the positional-determinacy condition of
Proposition~\ref{prop:polytopic}. The argument reduces a linearly defined RMDP
$\rmdp$ to a finite turn-based stochastic parity game
$\vertexgame$ (the \emph{vertex game} of $\rmdp$) in which the
environment's choice of a distribution from an uncertainty polytope is
replaced by an adversarial choice of a \emph{vertex} (corner) of that
polytope followed by a probabilistic step. The reduction is
value-preserving and carries positional strategies of $\vertexgame$
back to positional policies of $\rmdp$; since turn-based stochastic
parity games are positionally determined, so is $\rmdp$. Specialising
to the single-action case yields the reduction of a linearly defined RMC to an
MDP used in Sections~\ref{sec:rmc} and~\ref{sec:RMC:parity}. Throughout
we fix a linearly defined RMDP $\rmdp=(\states,\actions,\uncert)$ and a
coloring $\coloring\colon\states\to[d]$, and write
$\spec=\parity(\coloring)$.

\subsection{Vertices of the Uncertainty Polytopes}

For a state-action pair $(s,a)$ the uncertainty set
$\uncert(s,a)=\{p\in\Delta(\states):A_{s,a}\,p\le b_{s,a}\}$ is a
bounded polytope (it is contained in the simplex $\Delta(\states)$).
Let $\vertices(\uncert(s,a))$ denote its finite set of vertices
(extreme points). The Minkowski--Weyl theorem for polytopes gives the
following standard fact, on which the whole reduction rests.

\begin{lemma}[Vertex representation]\label{lem:app:vertices}
Each polytope $\uncert(s,a)$ has finitely many vertices, and
\[
    \uncert(s,a)
    \;=\;
    \Big\{\, \textstyle\sum_{q\in\vertices(\uncert(s,a))} \lambda_q\, q
    \;:\; \lambda_q\ge 0,\ \textstyle\sum_q \lambda_q = 1 \,\Big\},
\]
i.e. every admissible distribution $p\in\uncert(s,a)$ is a convex
combination of vertices of $\uncert(s,a)$.
\end{lemma}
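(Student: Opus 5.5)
The plan is to derive the statement directly from the Minkowski--Weyl theorem, after first checking that $\uncert(s,a)$ is a bounded polyhedron in $\mathbb{R}^{\states}$. The set is the intersection of the simplex $\Delta(\states)$, itself a polyhedron given by $p\ge 0$ and $\sum_{s'}p_{s'}=1$, with the projection of the polyhedron $\{(x,y): A\,[x;y]\le b\}$. By Fourier--Motzkin elimination, the projection of a polyhedron onto a subset of coordinates is again a polyhedron. Eliminating the auxiliary variables $y$ therefore presents $\uncert(s,a)$ as $\{p\in\Delta(\states): A'_{s,a}p\le b'_{s,a}\}$ for some finite rational system $A'_{s,a},b'_{s,a}$. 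This system may be exponentially large, but only finiteness matters here. In the plain form used in this appendix, where there are no auxiliary variables, this step is immediate.

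Next comes boundedness: $\uncert(s,a)\subseteq\Delta(\states)\subseteq[0,1]^{\states}$, so the set is a polytope. It is nonempty, as assumed throughout since the environment must always have a choice.

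Then invoke the finiteness of extreme points. Every vertex of a polyhedron $\{p: Dp\le e\}$ in $\mathbb{R}^n$ is the unique solution of some subsystem of $n$ linearly independent constraints taken with equality. There are only finitely many such subsystems, so $\vertices(\uncert(s,a))$ is finite. Finally, apply the Minkowski--Weyl / Krein--Milman theorem for polytopes: a bounded polyhedron equals the convex hull of its extreme points. This gives the displayed identity. The inclusion $\supseteq$ is just convexity, since a polyhedron is convex and hence contains every convex combination of its vertices.

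No step is a real obstacle. The only point needing care is the projection step: one has to note that the existential quantification over $y$ does not break polyhedrality, which Fourier--Motzkin settles. For the inclusion $\subseteq$, I would give a short self-contained argument rather than cite a theorem. Induct on the dimension of the minimal face containing $p$. If $p$ is not a vertex, move along a direction in the null space of the tight constraints until a new constraint becomes tight in each of the two directions. Boundedness guarantees that both endpoints exist. Then $p$ is a convex combination of two points lying on strictly lower-dimensional faces, and the induction hypothesis applies to each.
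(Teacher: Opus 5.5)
Your proposal is correct and follows the same route as the paper, which simply invokes the Minkowski--Weyl theorem for polytopes. Your added details are sound. The Fourier--Motzkin step covers the auxiliary-variable presentation, which the paper only makes explicit later, in the proof of Lemma~\ref{lem:app:vertex-size}. Your face-dimension induction works because the constraint that becomes tight at each endpoint has nonzero inner product with the chosen direction, so it is independent of the constraints already tight at $p$ and the minimal face dimension strictly drops.
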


Since a convex combination of distributions is again a distribution,
each vertex $q\in\vertices(\uncert(s,a))$ is itself a distribution over
$\states$, and it is the extreme-point distribution the environment may
commit to. Lemma~\ref{lem:app:vertices} is the only place where
polytopicity is used: it is what makes the environment's continuous
choice reducible to a \emph{finite} one.

\subsection{The Vertex Game}

We build a turn-based stochastic parity game
$\vertexgame=(V,V_{\Diamond},V_{\Box},V_{\bigcirc},E,\coloring_{\vertexgame})$
with three kinds of vertices:
\begin{itemize}
    \item \emph{Agent vertices} $V_{\Diamond}=\{\,v_s : s\in\states\,\}$, one per state, owned by the agent (the maximiser);
    \item \emph{Environment vertices} $V_{\Box}=\{\,v_{s,a} : s\in\states,\ a\in\actions\,\}$, one per state-action pair, owned by the environment (the minimiser);
    \item \emph{Probabilistic vertices} $V_{\bigcirc}=\{\,v_{s,a,q} : s\in\states,\ a\in\actions,\ q\in\vertices(\uncert(s,a))\,\}$, one per chosen vertex of an uncertainty polytope.
\end{itemize}
The moves are as follows. From an agent vertex $v_s$ the agent chooses
an action $a\in\actions$ and moves to the environment vertex $v_{s,a}$.
From an environment vertex $v_{s,a}$ the environment chooses a vertex
$q\in\vertices(\uncert(s,a))$ and moves to the probabilistic vertex
$v_{s,a,q}$. From a probabilistic vertex $v_{s,a,q}$ the next vertex is
sampled: the game moves to the agent vertex $v_{s'}$ with probability
$q(s')$ for each $s'\in\states$. Thus
\[
    E \;=\;
    \{\,(v_s,v_{s,a}) : a\in\actions\,\}
    \cup
    \{\,(v_{s,a},v_{s,a,q}) : q\in\vertices(\uncert(s,a))\,\}
    \cup
    \{\,(v_{s,a,q},v_{s'}) : q(s')>0\,\}.
\]
The coloring $\coloring_{\vertexgame}$ assigns to each agent vertex the
color of its state, $\coloring_{\vertexgame}(v_s)=\coloring(s)$, and to
every environment and probabilistic vertex the least color $0$, which is
even and dominated by every color of $[d]$:
\[
    \coloring_{\vertexgame}(v_s)=\coloring(s),
    \qquad
    \coloring_{\vertexgame}(v_{s,a})=\coloring_{\vertexgame}(v_{s,a,q})=0 .
\]
A play of $\vertexgame$ satisfies the parity objective iff the largest
color occurring infinitely often is even. The game is finite: it has
$|\states|+|\states||\actions|+\sum_{s,a}|\vertices(\uncert(s,a))|$
vertices.

\subsection{Correspondence of Plays and Strategies}

Every play of $\vertexgame$ from $v_{\sinit}$ has the form
\[
    v_{s_0}\, v_{s_0,a_0}\, v_{s_0,a_0,q_0}\, v_{s_1}\, v_{s_1,a_1}\, v_{s_1,a_1,q_1}\, v_{s_2}\cdots ,
\]
alternating one agent vertex, one environment vertex and one
probabilistic vertex per round. Deleting the environment and
probabilistic vertices yields the \emph{projected path}
$\rho(\pi)=s_0 s_1 s_2\cdots\in\states^\omega$, an infinite path of
$\rmdp$. The projection $\rho$ is a colour-faithful correspondence.

\begin{lemma}[Play projection]\label{lem:app:play-proj}
For every play $\pi$ of $\vertexgame$, the maximal color occurring
infinitely often in $\pi$ equals the maximal color occurring infinitely
often in $\rho(\pi)$. Consequently $\pi$ satisfies $\parity(\coloring_{\vertexgame})$
iff $\rho(\pi)$ satisfies $\parity(\coloring)$.
\end{lemma}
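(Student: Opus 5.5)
The plan is to compare the set of colors occurring infinitely often in $\pi$ with the set occurring infinitely often in $\rho(\pi)$. The only difference between them is the color $0$ contributed by the environment and probabilistic vertices.

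First, I will use the structure of plays established above. Every play $\pi$ from $v_{\sinit}$ alternates through rounds of the form $v_{s_i}\, v_{s_i,a_i}\, v_{s_i,a_i,q_i}$. Hence the agent vertices of $\pi$ are exactly $v_{s_0}, v_{s_1}, \dots$, and they occur at positions $3i$. Since $\coloring_{\vertexgame}(v_{s_i})=\coloring(s_i)$, a color $c\in[d]$ is carried by infinitely many agent vertices of $\pi$ if and only if $c=\coloring(s_i)$ for infinitely many $i$. Because $\states$ is finite, this is equivalent to $c\in\{\coloring(s): s\in\Inf(\rho(\pi))\}$.

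Second, I will handle the remaining vertices. The environment and probabilistic vertices all carry color $0$, and they occur infinitely often. So the set of colors seen infinitely often in $\pi$ equals $\{\coloring(s): s\in\Inf(\rho(\pi))\}\cup\{0\}$. The set $\Inf(\rho(\pi))$ is nonempty, since $\rho(\pi)$ is an infinite sequence over a finite set, and all colors lie in $[d]$, so they are nonnegative. Adding $0$ therefore does not change the maximum, and the two maxima coincide. The parity equivalence then follows immediately from the definitions of $\parity(\coloring_{\vertexgame})$ and $\parity(\coloring)$.

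I do not expect a real obstacle here. The only point that needs care is that $0$ cannot become the maximum, which is handled by nonemptiness of $\Inf(\rho(\pi))$ together with nonnegativity of colors. I will also note that the alternation of vertex types is forced by the edge set $E$, so every play, not just plays consistent with particular strategies, has this form.
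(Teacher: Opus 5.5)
Your proposal is correct and follows the paper's own argument: the colors seen infinitely often in $\pi$ are those of $\rho(\pi)$ together with the auxiliary color $0$, and since $0$ is the least color it cannot change the maximum. Your extra remarks, that $\Inf(\rho(\pi))$ is nonempty, that finiteness of $\states$ turns ``infinitely many indices with color $c$'' into ``some state of color $c$ in $\Inf(\rho(\pi))$'', and that the alternation of vertex types is forced by $E$, only make explicit what the paper's proof leaves implicit.
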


\begin{proof}
Each round of $\pi$ contributes exactly one agent vertex $v_{s_i}$ with
color $\coloring(s_i)$ and two auxiliary vertices with color $0$. Hence
the multiset of colors seen infinitely often in $\pi$ is that of
$\rho(\pi)$ together with the color $0$ (seen infinitely often, as the
play is infinite). Since $0$ is the least color, it never changes the
maximum, so the maximal recurring color of $\pi$ equals that of
$\rho(\pi)$. The parity condition depends only on this maximum.
\end{proof}

We now match strategies. A (behavioural) agent strategy in
$\vertexgame$ chooses, given the history ending at an agent vertex
$v_s$, a distribution over actions; a positional (pure memoryless)
agent strategy is a map $\states\to\actions$. These are literally the
agent policies of $\rmdp$, with positional strategies corresponding to
positional agent policies. For the environment, at an environment
vertex $v_{s,a}$ a behavioural strategy chooses a distribution over the
vertices $\vertices(\uncert(s,a))$; the induced one-step distribution
over successor states $s'$ is
$\sum_{q}\lambda_q\, q(s')$, i.e. the convex combination
$\sum_q \lambda_q\, q\in\uncert(s,a)$ by
Lemma~\ref{lem:app:vertices}. The next lemma records that these convex
combinations range over exactly $\uncert(s,a)$, so environment
strategies of $\vertexgame$ and environment policies of $\rmdp$ realise
precisely the same one-step distributions.

\begin{lemma}[Environment moves]\label{lem:app:env-moves}
Fix $(s,a)$. The set of one-step distributions over $\states$ that the
environment can realise at $v_{s,a}$ in $\vertexgame$, namely
$\{\sum_q \lambda_q\, q : \lambda\in\Delta(\vertices(\uncert(s,a)))\}$, equals $\uncert(s,a)$. In particular, a \emph{pure} choice of a
single vertex $q$ realises the distribution $q$, and every distribution
$p\in\uncert(s,a)$ is realised by some (possibly randomised) choice.
\end{lemma}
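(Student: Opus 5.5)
The claim is a set equality, so I will prove both inclusions between $R:=\{\sum_q \lambda_q\, q : \lambda\in\Delta(\vertices(\uncert(s,a)))\}$ and $\uncert(s,a)$, with Lemma~\ref{lem:app:vertices} doing the essential work. First I will fix what the environment can realise at $v_{s,a}$. A behavioural environment strategy picks, after a history ending at $v_{s,a}$, a distribution $\lambda$ over the successors $v_{s,a,q}$, $q\in\vertices(\uncert(s,a))$. Each probabilistic vertex $v_{s,a,q}$ moves to $v_{s'}$ with probability $q(s')$. By the law of total probability, the next agent vertex is therefore $v_{s'}$ with probability $\sum_q \lambda_q\, q(s')$. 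So the realisable one-step distributions over $\states$ are exactly the elements of $R$.

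For $R\subseteq\uncert(s,a)$: every vertex $q$ of the polytope lies in $\uncert(s,a)$. The set $\uncert(s,a)$ is convex, being the projection of a polyhedron intersected with the simplex, and convexity survives projection. Hence every convex combination $\sum_q\lambda_q q$ stays in $\uncert(s,a)$. Alternatively, this direction is just the ``$\supseteq$'' half of the identity in Lemma~\ref{lem:app:vertices}.

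For $\uncert(s,a)\subseteq R$: take $p\in\uncert(s,a)$. Lemma~\ref{lem:app:vertices} gives weights $\lambda_q\ge0$ with $\sum_q\lambda_q=1$ and $p=\sum_q\lambda_q q$. The environment realises $p$ by choosing the randomised move $\lambda$ at $v_{s,a}$. The ``in particular'' clauses then follow directly. A Dirac $\lambda$ concentrated on a single vertex $q$ realises exactly $q$. Every $p$ is realised by the $\lambda$ just constructed, which in general is randomised.

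There is no real obstacle here; the only point needing care is Lemma~\ref{lem:app:vertices}, namely that the polytope has finitely many vertices and equals their convex hull. This needs $\uncert(s,a)$ to be nonempty, bounded and a polytope even when it is given with auxiliary variables. Boundedness holds because it lies in $\Delta(\states)$. Being a polytope holds because a projection of a polyhedron is a polyhedron (Fourier--Motzkin), so Minkowski--Weyl applies. I would remark on this so that the lemma covers the projected $L_1$-type descriptions as well.
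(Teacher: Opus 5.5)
Your proof is correct and follows the paper's own argument: convexity of $\uncert(s,a)$ gives $R\subseteq\uncert(s,a)$, the vertex representation of Lemma~\ref{lem:app:vertices} gives the reverse inclusion, and a Dirac weight on a single vertex gives the pure case. Your extra remarks are sound and only spell out what the paper takes for granted: the total-probability computation identifying the realisable distributions with $R$, and the Fourier--Motzkin justification that a description with auxiliary variables still yields a polytope.
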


\begin{proof}
Every convex combination of vertices lies in $\uncert(s,a)$ because the
polytope is convex; conversely, by Lemma~\ref{lem:app:vertices} every
$p\in\uncert(s,a)$ is such a convex combination. Committing to a single
vertex $q$ (i.e. $\lambda_q=1$) realises $q$ itself.
\end{proof}

Given a strategy profile in $\vertexgame$ and its counterpart in
$\rmdp$ realising the same one-step distributions at every vertex,
the two induce, via the cylinder construction, the same probability
distribution over projected paths; combined with
Lemma~\ref{lem:app:play-proj} this yields value preservation.

\begin{lemma}[Value preservation]\label{lem:app:value-pres}
For every $s\in\states$, the value of $\vertexgame$ at $v_s$ for the
objective $\parity(\coloring_{\vertexgame})$ equals
$\val^{\rmdp}_s(\spec)$. Moreover, for a fixed positional agent policy
$\agentpol$ (resp. positional environment policy $\envpol$) the value of
the corresponding one-player game equals
$\inf_{\envpol'}\Pr^{\agentpol,\envpol'}_s[\spec]$ (resp.
$\sup_{\agentpol'}\Pr^{\agentpol',\envpol}_s[\spec]$).
\end{lemma}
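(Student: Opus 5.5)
The plan is to build, for every strategy profile on one side, a profile on the other side that induces the same distribution over projected paths. Lemma~\ref{lem:app:play-proj} then transfers the parity probability, and the value equality follows by comparing the sup/inf over matched strategy classes.

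\emph{Step 1: from $\rmdp$ to $\vertexgame$.} Fix an agent policy $\agentpol$ and an environment policy $\envpol$ of $\rmdp$. The agent strategy in $\vertexgame$ is $\agentpol$ itself, read on the projected history. For the environment, at a history ending in $v_{s,a}$, we have $\envpol(h,s,a)\in\uncert(s,a)$. By Lemma~\ref{lem:app:env-moves} we choose weights $\lambda^{h}\in\Delta(\vertices(\uncert(s,a)))$ with $\sum_q\lambda^h_q q=\envpol(h,s,a)$, using a fixed measurable selection, e.g.\ a lexicographically chosen Carath\'eodory decomposition. Histories in $\vertexgame$ carry the chosen corners $q_i$, which the environment may use. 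The agent, however, must see only projected histories; because the game is turn-based with full information, it is legitimate for the agent's strategy to simply ignore the extra information.

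\emph{Step 2: equality of cylinder measures.} By induction on the length $n$, the probability in $\vertexgame$ that the projected prefix equals $s_0a_0\dots s_n$ equals $\Pr^{\agentpol,\envpol}_{s_0}$ of that cylinder. The inductive step multiplies the agent's action probability by $\sum_q\lambda_q q(s_{n+1})=\envpol(h,s_n,a_n)(s_{n+1})$. Carath\'eodory extension then gives equality on all measurable sets. Combined with Lemma~\ref{lem:app:play-proj}, the parity probabilities agree.

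\emph{Step 3: from $\vertexgame$ to $\rmdp$.} Conversely, take game strategies $(\agentpol_G,\envpol_G)$. These may depend on past corner choices, which is the main obstacle, since $\rmdp$-policies see only states and actions. I would handle this by conditioning. Define the $\rmdp$ agent policy as the conditional action distribution given the projected history. Define the environment policy at $(h,s,a)$ as $\sum_q \Pr[q\mid h,s,a]\,q$, which lies in $\uncert(s,a)$ by convexity, where the conditional probability is taken under the game measure. The same inductive cylinder computation then shows the projected measures coincide. To keep the sup/inf comparison clean, I would avoid this and note it suffices for one player's strategy to be fixed at a time. Specifically, the sup over agent strategies and inf over environment strategies in $\vertexgame$ can each be compared against the matching $\rmdp$ quantity using Step 1 in one direction and the conditioning construction in the other. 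This yields $\sup_{\agentpol}\inf_{\envpol}$ equality, since both games are determined; for $\rmdp$ we only need the sup-inf expression as defined.

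\emph{Step 4: the positional statement.} Fix a positional $\agentpol$. The game restricted to this agent strategy is a one-player (environment) game. Steps 1--3 applied with the agent fixed show that its value equals $\inf_{\envpol'}\Pr^{\agentpol,\envpol'}_s[\spec]$. The environment-fixed case is symmetric: a positional $\envpol$ with $\envpol(s,a)$ a corner, or any point realised by fixed weights, corresponds to a memoryless randomised game strategy, and the cylinder argument is unchanged.
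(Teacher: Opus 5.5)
Your route is the paper's: match strategies of $\vertexgame$ and $\rmdp$, show equal cylinder measures on projected paths, transfer parity via Lemma~\ref{lem:app:play-proj}, and take $\sup$/$\inf$. Steps~1, 2 and~4 are sound. You also rightly notice something the paper's proof passes over when it identifies game strategies ``literally'' with policies of $\rmdp$: strategies in $\vertexgame$ can depend on the realised corners.

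The gap is in how Step~3 handles this. Your conditioning construction maps a \emph{profile} $(\agentpol_G,\envpol_G)$ to a profile of $\rmdp$. The agent policy it returns for a corner-dependent $\agentpol_G$ depends on $\envpol_G$, so it says nothing about a fixed $\agentpol_G$ against all environments. Write $\Pr_{\vertexgame}$ for the probability of $\parity(\coloring_{\vertexgame})$ from $v_s$. Fixing one player at a time, you can prove $\inf_{\envpol_G}\Pr^{\agentpol,\envpol_G}_{\vertexgame}=\inf_{\envpol}\Pr^{\agentpol,\envpol}_s[\spec]$ for every agent policy $\agentpol$ of $\rmdp$. You can also prove $\sup_{\agentpol_G}\Pr^{\agentpol_G,\envpol}_{\vertexgame}=\sup_{\agentpol}\Pr^{\agentpol,\envpol}_s[\spec]$ for every environment policy $\envpol$, realised corner-blindly by your Step~1 weights. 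Both hold because the fixed side is corner-blind, so conditioning returns it unchanged. But these two identities only yield
\[
  \sup_{\agentpol}\inf_{\envpol}\Pr\nolimits^{\agentpol,\envpol}_s[\spec]
  \;\le\;\mathrm{val}_{\vertexgame}(v_s)\;\le\;
  \inf_{\envpol}\sup_{\agentpol}\Pr\nolimits^{\agentpol,\envpol}_s[\spec].
\]
Determinacy of $\vertexgame$ does nothing to close this. Closing it would need determinacy of $\rmdp$, which is exactly what Proposition~\ref{prop:polytopic} derives \emph{from} this lemma, so appealing to ``both games are determined'' is circular.

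The missing direction, $\mathrm{val}_{\vertexgame}(v_s)\le\val^{\rmdp}_s(\spec)$, says that seeing the environment's corner randomisation gives the agent no advantage. It needs an extra ingredient, which for parity is at hand. Finite turn-based stochastic parity games have optimal pure memoryless strategies, so the agent has an optimal strategy $\hat\agentpol$ in $\vertexgame$ that ignores corners. Then, by the first identity above,
$\mathrm{val}_{\vertexgame}(v_s)=\inf_{\envpol_G}\Pr^{\hat\agentpol,\envpol_G}_{\vertexgame}=\inf_{\envpol}\Pr^{\hat\agentpol,\envpol}_s[\spec]\le\val^{\rmdp}_s(\spec)$.
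With this added the argument is complete, and your Step~4 needs no change.
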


\begin{proof}
Fix any agent strategy and environment strategy in $\vertexgame$. By
Lemma~\ref{lem:app:env-moves} there are an agent policy and an
environment policy of $\rmdp$ that at every state (resp. state-action
pair) reproduce the same one-step distribution over successors, and
conversely every pair of policies of $\rmdp$ arises this way. Because
the one-step distributions agree at every vertex, the two induced
measures assign equal probability to every cylinder of projected paths,
so $\Pr$ of the event $\{\pi:\rho(\pi)\models\parity(\coloring)\}$ in
$\vertexgame$ equals $\Pr^{\agentpol,\envpol}_s[\spec]$ in $\rmdp$; by
Lemma~\ref{lem:app:play-proj} the former equals the probability of
$\parity(\coloring_{\vertexgame})$. Taking $\sup$ over agent strategies
and $\inf$ over environment strategies on both sides gives
$\mathrm{val}_{\vertexgame}(v_s)=\sup_{\agentpol}\inf_{\envpol}\Pr^{\agentpol,\envpol}_s[\spec]=\val^{\rmdp}_s(\spec)$.
The one-player statements follow by fixing the corresponding player's
strategy to the positional policy and repeating the argument.
\end{proof}

\subsection{Proof of Proposition~\ref{prop:polytopic}}

\begin{proof}[Proof of Proposition~\ref{prop:polytopic}]
Turn-based stochastic games with a parity objective are positionally
(pure-memoryless) determined: both players have optimal pure memoryless
strategies, and the game has a value satisfying
$\sup_{\agentpol}\inf_{\envpol}=\inf_{\envpol}\sup_{\agentpol}$
\cite{chatterjee2012survey}. Apply this to the vertex game
$\vertexgame$ of $\rmdp$. Let $\hat{\agentpol}$ and $\hat{\envpol}$ be
optimal pure memoryless strategies of the agent and the environment in
$\vertexgame$.

Map them back to $\rmdp$. The strategy $\hat{\agentpol}$ selects, at
each agent vertex $v_s$, a single action $\agentpol^\ast(s)\in\actions$;
this is a positional agent policy of $\rmdp$. The strategy
$\hat{\envpol}$ selects, at each environment vertex $v_{s,a}$, a single
vertex $q_{s,a}\in\vertices(\uncert(s,a))$; setting
$\envpol^\ast(s,a)=q_{s,a}\in\uncert(s,a)$ (a vertex distribution) gives
a positional environment policy of $\rmdp$. By
Lemma~\ref{lem:app:env-moves} these choices realise exactly the same
one-step distributions in $\rmdp$ as $\hat{\agentpol},\hat{\envpol}$ do
in $\vertexgame$, so by Lemma~\ref{lem:app:value-pres} they are value
preserving.

Optimality of
$\hat{\agentpol}$ in $\vertexgame$ means that against every environment
strategy it secures at least the game value, which by
Lemma~\ref{lem:app:value-pres} is $\val^{\rmdp}_s(\spec)$; translating
through the strategy correspondence,
$\inf_{\envpol}\Pr^{\agentpol^\ast,\envpol}_s[\spec]=\val^{\rmdp}_s(\spec)$
for all $s$. Symmetrically, optimality of $\hat{\envpol}$ gives
$\sup_{\agentpol}\Pr^{\agentpol,\envpol^\ast}_s[\spec]=\val^{\rmdp}_s(\spec)$
for all $s$. Hence
\[
    \inf_{\envpol}\Pr^{\agentpol^\ast,\envpol}_s[\spec]
    \;=\;\val^{\rmdp}_s(\spec)\;=\;
    \sup_{\agentpol}\Pr^{\agentpol,\envpol^\ast}_s[\spec]
    \qquad\text{for all }s\in\states,
\]
with $\agentpol^\ast$ and $\envpol^\ast$ positional, which is exactly
the positional-determinacy condition. In
particular the two orders of optimization coincide, so $\rmdp$ is
determined. This proves Proposition~\ref{prop:polytopic}.
\end{proof}

\subsection{Proof of Corollary~\ref{cor:np-conp}}
\label{app:np-conp}

Throughout this subsection, \textsc{RMDP-Parity} denotes the decision
problem of Corollary~\ref{cor:np-conp}: given a linearly defined RMDP
$\rmdp=(\states,\actions,\uncert)$, a coloring
$\coloring\colon\states\to[d]$, a state $s\in\states$ and a rational
threshold $\lambda\in[0,1]$ written in binary, decide whether
$\val^{\rmdp}_s(\parity(\coloring))\ge\lambda$. Analogously,
\textsc{SG-Parity} denotes the problem of deciding, for a finite
turn-based stochastic parity game $\mathcal{H}$ with rational transition
probabilities, a vertex $v$ of $\mathcal{H}$ and a rational
$\lambda\in[0,1]$, whether $\mathrm{val}_{\mathcal{H}}(v)\ge\lambda$. We
write $\|\rmdp\|$ for the bit-size of the encoding of $\rmdp$, and recall
from the preliminaries that each uncertainty set is presented as
\[
    \uncert(s,a)=\Big\{x\in\Delta(\states)\;\Big|\;\exists y\in\mathbb{R}^{k}\colon
    A_{s,a}\begin{bmatrix}x\\y\end{bmatrix}\le b_{s,a}\Big\},
\]
where we assume without loss of generality that the simplex constraints
defining $\Delta(\states)$ are among the rows of $A_{s,a}\,[x;y]\le
b_{s,a}$. The two assertions of Corollary~\ref{cor:np-conp} are proved
separately in Lemma~\ref{lem:app:np-conp} and
Lemma~\ref{lem:app:sg-hardness} below.

\subsubsection{Membership in $\mathit{NP}\cap\mathit{coNP}$.}

The certificates we use are positional policies. For the agent a
positional policy is a map $\states\to\actions$ and is trivially of size
$O(|\states|\log|\actions|)$. For the environment a positional policy
must name a distribution inside each uncertainty set, so we first record
that the distributions selected by the optimal environment policy
constructed in the proof of Proposition~\ref{prop:polytopic}
(vertices of the uncertainty polytopes) admit a compact encoding, even
though the number of such vertices may be exponential
(Remark~\ref{rem:app:size}).

\begin{lemma}[Vertices admit compact encodings]\label{lem:app:vertex-size}
Let $\uncert(s,a)$ be a linearly defined uncertainty set. Then every
$q\in\vertices(\uncert(s,a))$ is a rational vector of bit-size polynomial
in $\|\rmdp\|$. Moreover, given a rational vector $q\in\mathbb{Q}^{\states}$,
one can decide in time polynomial in $\|\rmdp\|$ and the bit-size of $q$
whether $q\in\uncert(s,a)$.
\end{lemma}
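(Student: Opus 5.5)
The plan is to treat $\uncert(s,a)$ as the projection onto the $x$-coordinates of the polyhedron
\[
Q=\{(x,y): A_{s,a}[x;y]\le b_{s,a}\},
\]
and to argue about the two claims separately. The decision claim is the easy one. Given a rational $q$, membership $q\in\uncert(s,a)$ holds iff the linear system $A_{s,a}[q;y]\le b_{s,a}$ in the variable $y$ is feasible. This is an LP feasibility question whose size is polynomial in $\|\rmdp\|$ and in the bit-size of $q$, so it is decidable in polynomial time by the ellipsoid or interior-point method.

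For the bit-size claim I first reduce to vertices of a polyhedron given explicitly. Let $q$ be a vertex of $\uncert(s,a)$. Then $q$ is the unique maximiser over $\uncert(s,a)$ of some linear functional $c\cdot x$. The maximisers of $c\cdot x$ over $Q$ form a face $F$ of $Q$ whose $x$-projection is exactly $\{q\}$, so every point of $F$ has $x=q$. Because $\uncert(s,a)\subseteq\Delta(\states)$ is bounded, $Q$ itself could still be unbounded in $y$. So I would either argue that $F$ contains a minimal face of $Q$, or intersect $F$ with a suitable box to make it bounded. Any minimal face of a polyhedron is an affine subspace of the form $\{z: A'z=b'\}$ for a subsystem $A'z\le b'$ of the rows.

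From such a minimal face I pick a basic solution $z=(x,y)$: a point determined by a nonsingular square subsystem of rows, taken after choosing a maximal set of linearly independent columns and setting the remaining coordinates to $0$. This $z$ lies in $F$, hence its $x$-part is $q$. By Cramer's rule, each coordinate of $z$ is a ratio of determinants of submatrices of $(A_{s,a},b_{s,a})$. Hadamard's bound then makes these determinants of bit-size polynomial in the encoding, and therefore $q$ has polynomial bit-size.

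The delicate step is the existence of a basic point in $F$ when $Q$ contains lines, i.e.\ has no vertices. I would handle it by noting that the lineality space of $Q$ must lie in the $y$-directions, since the $x$-part is bounded. I then either quotient it out or set the free $y$-coordinates to zero via the column-selection argument above, which preserves $x=q$ because every point of $F$ projects to $q$. This is the standard result that a nonempty face of a polyhedron contains a point of polynomially bounded size (Schrijver, Theory of Linear and Integer Programming), and I would cite it after checking that its hypotheses are met.
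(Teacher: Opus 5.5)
Your proposal is correct but reaches the bit-size claim by a different route from the paper; the membership test, via LP feasibility of $A_{s,a}[q;y]\le b_{s,a}$ in $y$, is exactly the paper's.

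\textbf{The paper's route.} It \emph{projects} first. By the Fourier--Motzkin/projection-cone lemma, $\uncert(s,a)$ is cut out in $x$-space by the inequalities $(\mu^{\top}A^{x})x\le\mu^{\top}b_{s,a}$, with $\mu$ ranging over the extreme rays of $K=\{\mu\ge 0:\mu^{\top}A^{y}=0\}$. Cramer's rule bounds the size of each extreme ray, and hence of each of these possibly exponentially many inequalities. A second application of Cramer's rule, to $|\states|$ linearly independent tight inequalities, then bounds the vertex $q$.

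\textbf{Your route.} You \emph{lift} instead. You expose $q$ by a functional $c$ and note that the face $F$ of $Q$ maximising $(c,0)\cdot z$ consists exactly of the points $(q,y)\in Q$. You take a minimal face $\{z: A'z=b'\}\subseteq F$ and read off $q$ as the $x$-part of a basic solution of a nonsingular subsystem of the original $(A_{s,a},b_{s,a})$.

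\textbf{What each buys.} Your argument works only with the given data and needs a single layer of determinant bounds. It avoids the projection cone and its extreme rays entirely. The paper's argument yields a by-product: the projected polytope has polynomial facet complexity in $x$-space, which is a statement about all of $\uncert(s,a)$ rather than only its vertices.

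\textbf{Two small remarks.}
\begin{itemize}
\item The ``intersect $F$ with a box'' alternative is unnecessary, and it would need its own justification for the size of the box.
\item The lineality discussion in your last paragraph is already covered by the column-selection step. A basic solution of $A'z=b'$ exists whether or not $Q$ has vertices, it lies in the minimal face, and so its $x$-part is $q$.
\end{itemize}
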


\begin{proof}
Write $A_{s,a}=[A^{x}\mid A^{y}]$ according to the split of the variables
into $x$ and $y$, and let
$Q=\{(x,y):A^{x}x+A^{y}y\le b_{s,a}\}$, so that $\uncert(s,a)$ is the
projection of $Q$ onto the $x$-coordinates. By the Fourier--Motzkin
projection lemma,
\[
    \uncert(s,a)=\{\,x\in\mathbb{R}^{\states}\;:\;
    (\mu^{\top}A^{x})\,x\le\mu^{\top}b_{s,a}\ \text{ for all }\mu\in K\,\},
\]
where $K=\{\mu\ge 0:\mu^{\top}A^{y}=0\}$ is the \emph{projection cone} of
$Q$, and it suffices to range over the finitely many extreme rays of $K$.
Each extreme ray of $K$ is a basic solution of a rational system whose
size is bounded by $\|\rmdp\|$, hence, by Cramer's rule, is a rational
vector of bit-size polynomial in $\|\rmdp\|$; consequently each
inequality $(\mu^{\top}A^{x})x\le\mu^{\top}b_{s,a}$ in the above
description has bit-size polynomial in $\|\rmdp\|$, although there may be
exponentially many of them. Now let $q$ be a vertex of $\uncert(s,a)$.
Since $\uncert(s,a)\subseteq\Delta(\states)$ is a polytope of dimension at
most $|\states|$, the vertex $q$ is the unique solution of a subsystem of
$|\states|$ linearly independent inequalities of the above description
taken with equality. Applying Cramer's rule once more to this subsystem,
whose entries are of bit-size polynomial in $\|\rmdp\|$, shows that $q$ is
rational of bit-size polynomial in $\|\rmdp\|$.

For the second claim, $q\in\uncert(s,a)$ holds iff the linear system
$A^{y}y\le b_{s,a}-A^{x}q$ in the unknown $y$ is feasible, which is an LP
feasibility test on rational data of size polynomial in $\|\rmdp\|$ and
the size of $q$, and is therefore decidable in polynomial time.
\end{proof}

\begin{lemma}\label{lem:app:np-conp}
$\textsc{RMDP-Parity}\in\mathit{NP}\cap\mathit{coNP}$.
\end{lemma}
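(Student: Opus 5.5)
The proof uses positional policies as certificates, supplied by Proposition~\ref{prop:polytopic}, and verifies each certificate with the polynomial-time RMC solver of Theorem~\ref{thm:rmc:parity-poly}.

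\emph{Membership in NP.} Given $(\rmdp,\coloring,s,\lambda)$, the certificate is a positional agent policy $\agentpol^\ast\colon\states\to\actions$, which has polynomial size. The verifier builds the induced RMC $\rmdp^{\agentpol^\ast}$ with $\uncert^{\agentpol^\ast}(s)=\uncert(s,\agentpol^\ast(s))$; it is still linearly defined and of size at most $\|\rmdp\|$. By Theorem~\ref{thm:rmc:parity-poly}, the verifier computes $\val^{\agentpol^\ast}_s=\inf_{\envpol}\Pr^{\agentpol^\ast,\envpol}_s[\spec]$ exactly in polynomial time and accepts iff it is at least $\lambda$.

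Soundness: $\val^{\rmdp}_s(\spec)\ge\val^{\agentpol^\ast}_s$ for every policy, since the value is a supremum over agent policies. Completeness: by Proposition~\ref{prop:polytopic} some positional $\agentpol^\ast$ attains $\val^{\rmdp}_s(\spec)$. Exactness of the comparison with $\lambda$ needs the value to be a rational of polynomial bit-size. This holds because the value is an optimum of the LP $(\cross)$, whose data are polynomial in $\|\rmdp\|$, so the optimum is rational of polynomial size.

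\emph{Membership in coNP.} Here the certificate is a positional environment policy $\envpol^\ast$ with $\envpol^\ast(s,a)\in\vertices(\uncert(s,a))$; Proposition~\ref{prop:polytopic} guarantees an optimal one of this form. By Lemma~\ref{lem:app:vertex-size}, each vertex has a polynomial-size rational encoding, and the verifier can check $\envpol^\ast(s,a)\in\uncert(s,a)$ in polynomial time. Membership suffices for soundness, since any admissible stationary choice gives an upper bound on the value. Fixing $\envpol^\ast$ yields an ordinary finite MDP with transitions $\envpol^\ast(s,a)$, controlled by the agent. The verifier computes its maximal parity probability $\sup_{\agentpol}\Pr^{\agentpol,\envpol^\ast}_s[\spec]$ in polynomial time using the classical algorithm for parity MDPs: compute maximal end-components, take the union of those with an even-winning sub-end-component, then solve reachability by an LP. It certifies a no-instance iff this value is $<\lambda$.

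Soundness: $\val^{\rmdp}_s(\spec)\le\sup_{\agentpol}\Pr^{\agentpol,\envpol^\ast}_s[\spec]$, because $\val^{\rmdp}_s(\spec)=\sup_\agentpol\inf_\envpol\le\inf_\envpol\sup_\agentpol$, and the latter is at most the value against the fixed $\envpol^\ast$. Completeness again follows from the equality in Proposition~\ref{prop:polytopic}.

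The main obstacle is the coNP side. An environment certificate must be a specific point of a polytope given only as a projection with auxiliary variables, so one has to argue that an optimal choice exists with polynomial bit-size and that this choice can be checked efficiently. I expect Lemma~\ref{lem:app:vertex-size} to resolve both points. The remaining care is ensuring that a possibly non-optimal certificate never produces a wrong rejection, which the inequality $\sup\inf\le\inf\sup$ handles.
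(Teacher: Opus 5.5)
Your proposal is correct and follows the paper's proof essentially step for step. For NP you use positional agent policies checked by the polynomial-time RMC solver; for coNP you use positional vertex-choosing environment policies, which are compact by Lemma~\ref{lem:app:vertex-size} and checked by the polynomial-time parity-MDP algorithm; soundness comes from $\sup\inf \le \inf\sup$ and completeness from Proposition~\ref{prop:polytopic}. The only cosmetic difference is that you justify the polynomial bit-size of $\val^{\agentpol^\ast}_s$ via the LP $(\cross)$, whereas the paper cites it as the output of a polynomial-time computation.
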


\begin{proof}
We show that both \textsc{RMDP-Parity} and its complement are in
$\mathit{NP}$.

\emph{Membership in $\mathit{NP}$.} The certificate is a positional agent
policy $\agentpol\colon\states\to\actions$, of size polynomial in
$\|\rmdp\|$. The verifier constructs the induced RMC
$\rmdp^{\agentpol}=(\states,\uncert^{\agentpol})$ with
$\uncert^{\agentpol}(s)=\uncert(s,\agentpol(s))$; this is again linearly
defined, is obtained from $\rmdp$ by discarding the uncertainty sets of
the unselected actions, and hence is computable in polynomial time. By
Theorem~\ref{thm:rmc:parity-poly} the verifier then computes
$\val^{\agentpol}_s=\inf_{\envpol}\Pr^{\agentpol,\envpol}_s[\spec]$ in
time polynomial in $\|\rmdp\|$; in particular $\val^{\agentpol}_s$ is a
rational of bit-size polynomial in $\|\rmdp\|$, being the output of a
polynomial-time computation, so the verifier can compare it with
$\lambda$ exactly in polynomial time. It accepts iff
$\val^{\agentpol}_s\ge\lambda$.

Soundness: for every positional agent policy $\agentpol$ we have
$\val^{\agentpol}_s\le\sup_{\agentpol'}\inf_{\envpol}\Pr^{\agentpol',\envpol}_s[\spec]=\val^{\rmdp}_s(\spec)$,
so an accepting certificate witnesses
$\val^{\rmdp}_s(\spec)\ge\lambda$. Completeness: if
$\val^{\rmdp}_s(\spec)\ge\lambda$, then the positional policy
$\agentpol^{\ast}$ of Proposition~\ref{prop:polytopic} satisfies
$\val^{\agentpol^{\ast}}_s=\val^{\rmdp}_s(\spec)\ge\lambda$ and is
accepted.

\emph{Membership in $\mathit{coNP}$.} We give an $\mathit{NP}$ procedure
for the complement, i.e. for deciding
$\val^{\rmdp}_s(\spec)<\lambda$. The certificate is a positional
environment policy $\envpol$, presented as a table of rational
distributions $\envpol(s,a)\in\uncert(s,a)$ for all
$(s,a)\in\states\times\actions$; the verifier first checks
$\envpol(s,a)\in\uncert(s,a)$ for every pair, which by
Lemma~\ref{lem:app:vertex-size} takes polynomial time, and rejects the
certificate otherwise. Fixing $\envpol$ leaves the MDP
$\rmdp^{\envpol}=(\states,\actions,\trans)$ with
$\trans(s,a)=\envpol(s,a)$, whose transition probabilities are exactly
the entries of the certificate and whose size is therefore polynomial in
the size of the certificate. Since agent policies of $\rmdp$ played
against the fixed $\envpol$ are precisely the policies of
$\rmdp^{\envpol}$, we have
$\sup_{\agentpol}\Pr^{\agentpol,\envpol}_s[\spec]=\val^{\rmdp^{\envpol}}_s(\parity(\coloring))$.
The quantitative parity problem on MDPs is solvable in polynomial
time~\cite{chatterjee2012survey}: one computes the maximal
end-component decomposition, retains those end-components in which the
agent can enforce an even dominant colour, and solves a maximal
reachability LP for their union; in particular the value is a rational of
bit-size polynomial in the size of $\rmdp^{\envpol}$. The verifier
computes this value and accepts iff it is smaller than $\lambda$.

Soundness: for every positional environment policy $\envpol$ we have
$\sup_{\agentpol}\Pr^{\agentpol,\envpol}_s[\spec]\ge\sup_{\agentpol}\inf_{\envpol'}\Pr^{\agentpol,\envpol'}_s[\spec]=\val^{\rmdp}_s(\spec)$,
so an accepting certificate witnesses
$\val^{\rmdp}_s(\spec)<\lambda$. Completeness: if
$\val^{\rmdp}_s(\spec)<\lambda$, take the positional environment policy
$\envpol^{\ast}$ constructed in the proof of
Proposition~\ref{prop:polytopic}. It satisfies
$\sup_{\agentpol}\Pr^{\agentpol,\envpol^{\ast}}_s[\spec]=\val^{\rmdp}_s(\spec)<\lambda$
and, crucially, it selects at each pair $(s,a)$ a \emph{vertex}
$q_{s,a}\in\vertices(\uncert(s,a))$; by
Lemma~\ref{lem:app:vertex-size} each $q_{s,a}$ is rational of bit-size
polynomial in $\|\rmdp\|$, so $\envpol^{\ast}$ is a certificate of
polynomial size and is accepted.
\end{proof}

We stress that neither direction materialises the vertex game
$\vertexgame$: the $\mathit{NP}$ verifier runs the polynomial-time LP-based
RMC solver of Section~\ref{sec:rmc} on the compact facet description, and
the $\mathit{coNP}$ verifier only ever writes down one vertex per
state-action pair rather than all of them.

\subsubsection{Hardness.}

For the lower bound we exhibit the converse of the vertex-game
construction: stochastic games are a special case of linearly defined
RMDPs, in which the environment's uncertainty set is the convex hull of
the Dirac distributions on the available successors.

\begin{lemma}\label{lem:app:sg-hardness}
$\textsc{SG-Parity}$ reduces to $\textsc{RMDP-Parity}$ in polynomial
time. Hence $\textsc{RMDP-Parity}$ is at least as hard as
$\textsc{SG-Parity}$.
\end{lemma}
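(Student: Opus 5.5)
The reduction I would give is the direct converse of the vertex game. Start from a turn-based stochastic parity game $\mathcal{H}$ with vertex sets $V_{\Diamond}$ (maximiser), $V_{\Box}$ (minimiser) and $V_{\bigcirc}$ (probabilistic), edges $E$ and colouring $\coloring_{\mathcal{H}}$. Build an RMDP $\rmdp$ whose state set is $V$ and whose coloring is $\coloring_{\mathcal{H}}$. The three vertex kinds are handled as follows.

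\begin{itemize}
\item A maximiser vertex $v$ with successors $u_1,\dots,u_k$ gets one action per successor. The uncertainty set of action $u_i$ is the singleton $\{\delta_{u_i}\}$.
\item A minimiser vertex $v$ gets a single action. Its uncertainty set is the convex hull of $\{\delta_u : (v,u)\in E\}$, written as $\{x\in\Delta(\states) : x_w = 0 \text{ for all } w \text{ with } (v,w)\notin E\}$.
\item A probabilistic vertex gets a single action. Its uncertainty set is the singleton containing its given rational distribution.
\end{itemize}

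Every uncertainty set is then cut out by simplex constraints plus equalities with rational coefficients, with no auxiliary variables. All of them are linearly defined, and the encoding is polynomial in $\|\mathcal{H}\|$. The instance is $(\rmdp,\coloring_{\mathcal{H}},v,\lambda)$, with the same vertex $v$ and threshold $\lambda$.

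The main step is to show that values are preserved. Plays of $\rmdp$ and plays of $\mathcal{H}$ are literally the same sequences of vertices, and the colouring is identical, so no projection argument is needed. Strategies correspond as follows.
\begin{itemize}
\item An agent policy of $\rmdp$ chooses, at maximiser states, a (possibly randomised) action. Each action deterministically moves to one successor, so the agent policy is a maximiser strategy of $\mathcal{H}$, and conversely. At other states the agent has a single action and no choice.
\item An environment policy chooses, at minimiser states, a distribution supported on the successors, which is exactly a behavioural minimiser strategy of $\mathcal{H}$. At all other states its choice is forced, since the uncertainty set is a singleton.
\end{itemize}
This is the same correspondence as Lemma~\ref{lem:app:env-moves}, with vertices $\delta_u$. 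Corresponding strategy profiles induce the same probability on every cylinder, so $\Pr^{\agentpol,\envpol}_v[\parity(\coloring_{\mathcal{H}})]$ equals the game probability. Taking $\sup\inf$ on both sides gives $\val^{\rmdp}_v = \mathrm{val}_{\mathcal{H}}(v)$.

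One subtlety, and the step I expect to need care, is that in the RMDP the agent acts before the environment at each step, whereas in $\mathcal{H}$ players act only at their own vertices. Because every state has exactly one nontrivial chooser, the interleaving is harmless. Histories in $\rmdp$ additionally record actions, but the actions at maximiser states are determined by the next state, and every other state has a single action. History-dependent strategies therefore translate both ways without loss.

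The other point to check is that the value in $\mathcal{H}$ is defined with behavioural strategies. If it is instead defined with pure strategies, positional determinacy of stochastic parity games makes the two definitions coincide. Polynomial time is immediate from the construction, which completes the reduction.
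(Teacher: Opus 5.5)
Your reduction is the paper's, up to the padding noted below. The difference lies in how you prove value equality.

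\textbf{The paper's route.} The paper does not compare $\rmdp_{\mathcal H}$ with $\mathcal H$ directly. It builds the vertex game $\vertexgame$ of $\rmdp_{\mathcal H}$ and shows $\vertices(\uncert(v,a_i))=\{\delta_{v^1},\dots,\delta_{v^{\deg(v)}}\}$ at minimiser vertices, by affine independence of distinct unit vectors. It then notes that $\vertexgame$ is $\mathcal H$ with edges subdivided by colour-$0$ vertices, and chains Lemma~\ref{lem:app:play-proj} with Lemma~\ref{lem:app:value-pres}.

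\textbf{Your route.} Your direct identification of plays and strategies needs neither the vertex computation nor the projection. The cost is that you redo by hand the cylinder argument the paper cites. It is the more transparent of the two.

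\textbf{A detail to tighten.} The paper also passes over this point, with its ``dummy choice'' remark. In the paper's model $\uncert$ is defined on all of $\states\times\actions$, so your state-dependent action sets must be padded by duplicating uncertainty sets. After padding, minimiser and probabilistic states no longer have a single action, and the environment may condition on which duplicate the agent played. Your ``histories translate without loss'' step therefore needs a short argument in each direction.
\begin{itemize}
\item For $\val^{\rmdp}_v\ge\mathrm{val}_{\mathcal H}(v)$: translate maximiser strategies using only $a_1,\dots,a_{\deg(v)}$ at maximiser states and $a_1$ elsewhere. Then the action history is a function of the state history, so environment policies are exactly minimiser strategies of $\mathcal H$.
\item For $\le$: let the environment play an optimal positional minimiser strategy of $\mathcal H$, regardless of the action. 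The agent then faces a parity MDP with duplicated actions. Its value is attained by positional policies, and these induce the same Markov chains as positional maximiser strategies of $\mathcal H$ against that minimiser strategy.
\end{itemize}
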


\begin{proof}
Let $\mathcal{H}=(V,V_{\Diamond},V_{\Box},V_{\bigcirc},E,\coloring_{\mathcal{H}})$
be a turn-based stochastic parity game with a rational transition
distribution $p_{v}\in\Delta(V)$ at every
$v\in V_{\bigcirc}$, together with a vertex $v_0$ and a rational
$\lambda$. We build a linearly defined RMDP
$\rmdp_{\mathcal{H}}=(V,\actions,\uncert)$ with
$\actions=\{a_1,\dots,a_{m}\}$, where $m$ is the maximum
out-degree in $\mathcal{H}$, and coloring
$\coloring=\coloring_{\mathcal{H}}$, as follows. Fix for each vertex $v$
an enumeration $v^{1},\dots,v^{\deg(v)}$ of its successors.
\begin{compactitem}
\item For $v\in V_{\Diamond}$ (an agent vertex) and $i\le\deg(v)$, put
$\uncert(v,a_i)=\{\delta_{v^{i}}\}$, the singleton containing the Dirac
distribution on the $i$-th successor.
\item For $v\in V_{\Box}$ (an environment vertex), put
$\uncert(v,a_i)=\mathrm{conv}\{\delta_{v^{1}},\dots,\delta_{v^{\deg(v)}}\}
=\{p\in\Delta(V):p(u)=0\text{ for all }u\text{ with }(v,u)\notin E\}$
for every $i$.
\item For $v\in V_{\bigcirc}$ (a probabilistic vertex), put
$\uncert(v,a_i)=\{p_{v}\}$ for every $i$.
\end{compactitem}
For $v\in V_{\Diamond}$ and $i>\deg(v)$ we set
$\uncert(v,a_i)=\uncert(v,a_1)$, so that the action set is uniform and no
action is spurious. Every uncertainty set above is cut out by linear
equalities and the simplex constraints, hence is linearly defined without
auxiliary variables, and $\rmdp_{\mathcal{H}}$ is computable from
$\mathcal{H}$ in polynomial time.

It remains to show
$\val^{\rmdp_{\mathcal{H}}}_{v}(\parity(\coloring))=\mathrm{val}_{\mathcal{H}}(v)$
for every $v\in V$, from which the reduction follows by asking
\textsc{RMDP-Parity} about $\rmdp_{\mathcal{H}}$, $v_0$ and $\lambda$.
Consider the vertex game $\vertexgame$ of $\rmdp_{\mathcal{H}}$
constructed above. For $v\in V_{\Box}$ the distributions
$\delta_{v^{1}},\dots,\delta_{v^{\deg(v)}}$ are distinct unit vectors and
therefore affinely independent, so each of them is an extreme point of
their convex hull and no other point is:
$\vertices(\uncert(v,a_i))=\{\delta_{v^{1}},\dots,\delta_{v^{\deg(v)}}\}$.
For $v\in V_{\Diamond}\cup V_{\bigcirc}$ the uncertainty sets are
singletons and thus have a single vertex. Hence in $\vertexgame$: at
$v\in V_{\Diamond}$ the agent chooses one of the successors of $v$ and
the play proceeds deterministically to it; at $v\in V_{\Box}$ the agent
has no meaningful choice, the environment chooses a vertex
$\delta_{v^{i}}$, and the play proceeds deterministically to $v^{i}$; at
$v\in V_{\bigcirc}$ neither player has a choice and the next vertex is
sampled from $p_{v}$. Thus $\vertexgame$ is
exactly $\mathcal{H}$ with every edge subdivided by at most two auxiliary
vertices of colour $0$, and with a dummy choice inserted at the vertices
of $V_{\Box}\cup V_{\bigcirc}$. Since $0$ is dominated by every colour of
$[d]$, the argument of Lemma~\ref{lem:app:play-proj} applies verbatim:
the maximal colour occurring infinitely often along a play of
$\vertexgame$ equals that of the corresponding play of $\mathcal{H}$, and
the strategy correspondence of Lemma~\ref{lem:app:value-pres} maps
strategies of the two games to one another while preserving the induced
distributions over projected paths. Consequently
$\mathrm{val}_{\vertexgame}(v)=\mathrm{val}_{\mathcal{H}}(v)$, and by
Lemma~\ref{lem:app:value-pres} applied to $\rmdp_{\mathcal{H}}$ we get
$\val^{\rmdp_{\mathcal{H}}}_{v}(\parity(\coloring))=\mathrm{val}_{\vertexgame}(v)=\mathrm{val}_{\mathcal{H}}(v)$,
as required.
\end{proof}

Lemma~\ref{lem:app:np-conp} and Lemma~\ref{lem:app:sg-hardness} together
prove Corollary~\ref{cor:np-conp}. Two consequences are worth recording.
First, since no polynomial-time algorithm for \textsc{SG-Parity} is
known (the best known bounds are randomised
sub-exponential~\cite{chatterjee2005algorithms}) a polynomial-time
algorithm for the quantitative parity problem on linearly defined RMDPs
would resolve a long-standing open problem; this is the sense in which
the exponential iteration bound of Theorem~\ref{cor:total} is not an
artefact of our analysis. Second, membership in
$\mathit{NP}\cap\mathit{coNP}$ implies that \textsc{RMDP-Parity} is not
$\mathit{NP}$-hard unless $\mathit{NP}=\mathit{coNP}$, so the source of
hardness is not $\mathit{NP}$-hardness but the same
``$\mathit{NP}\cap\mathit{coNP}$ barrier'' as for stochastic games.

\begin{remark}[The RMC specialisation]\label{rem:app:rmc-mdp}
When $|\actions|=1$ the agent has no choice and the agent vertices merge
with the (unique) environment vertices, so $\vertexgame$ becomes a
\emph{one-player} stochastic game controlled by the environment, i.e. an
MDP whose actions at state $s$ are the vertices $\vertices(\uncert(s))$
and whose transition under vertex $q$ is $q$ itself. Every environment
policy of the RMC is a policy of this MDP whose one-step choice is a
convex combination of vertex actions, and conversely, so the two induce
the same distributions over plays and the parity values coincide
(Lemma~\ref{lem:app:value-pres}). Since parity MDPs admit optimal
positional policies, the environment has a positional optimal policy
selecting one vertex of $\uncert(s)$ at each state. This is precisely
the reduction invoked in the proof of Lemma~\ref{lem:rmc:parity} and
in Section~\ref{sec:RMC:parity}.
\end{remark}

\begin{remark}[Size of the reduction]\label{rem:app:size}
The vertex game $\vertexgame$ is value-equivalent to $\rmdp$ and could
in principle be solved directly by a stochastic-game (resp. MDP) parity
solver, computing $\val^{\rmdp}(\spec)$ exactly. However its size is
governed by $\sum_{s,a}|\vertices(\uncert(s,a))|$, and a polytope
described by polynomially many facets in dimension $|\states|$ can have
$|\vertices(\uncert(s,a))|$ exponential in $|\states|$ (for instance the
$L_1$ ball, whose vertex count grows with $|\states|$). Thus
$\vertexgame$ may be exponentially larger than $\rmdp$, and even writing
it down need not be possible in polynomial space. This is exactly the
inefficiency that motivates the polynomial time and space, LP-based algorithms of
Section~\ref{sec:rmc} and Section~\ref{sec:rmdp}, which operate on the compact facet description of
the polytopes rather than on their vertices.
\end{remark}

\section{Proofs of Section~\ref{sec:rmdp}} \label{app:rmdp}

This appendix provides the proofs of the results of
Section~\ref{sec:rmdp}. Throughout we use the notation and standing
objects of that section: the linearly defined RMDP
$\rmdp=(\states,\actions,\uncert)$, the coloring $\coloring$ and
objective $\spec=\parity(\coloring)$, the induced RMC
$\rmdp^{\agentpol}$ and its value vector $\val^{\agentpol}$, the value
classes $\vc{\agentpol}{r}$, the one-step optima
$\nu(s,a)=\min_{p\in\uncert(s,a)}p\cdot\val^{\agentpol}$, tight actions
$\tightacts{\val^{\agentpol}}{s}$, tight faces
$\tightset{\val^{\agentpol}}{s}{a}$, the improvable set $I$, and the trap
value-class RMDPs $\rmdp_r$ over $U_r\cup\{\bot\}$ (where the agent is
restricted to tight actions, the environment to tight faces, and all mass
leaving $U_r$ is collapsed onto the losing sink $\bot$ by the linear map
$\kappa_r$) together with their colorings $\coloring_r$, almost-sure
regions $W_r=\AS_{\rmdp_r}(\parity(\coloring_r))\cap U_r$ and positional
almost-sure winning policies $\agentpol_r$ on $W_r$. Recall that
$\agentpol_r(s)\in\tightacts{\val^{\agentpol}}{s}$ for every $s\in W_r$
by construction, since $\rmdp_r$ offers no other actions.

We use freely the fact, established in Section~\ref{sec:rmdp}, that
$p\cdot\uvector=\uvector_s$ for every tight action $a\in A^{\uvector}(s)$
and every $p$ in the tight face $\tightset{\uvector}{s}{a}$, the support property of $W_r$ recorded in
Lemma~\ref{lem:as-facts} below, and the elementary facts about one-step
minima: since each uncertainty set is a nonempty compact polytope and
$p\mapsto p\cdot\uvector$ is linear, every infimum
$\inf_{p\in\uncert(s,a)}p\cdot\uvector$ is attained, and the set of
minimisers is the nonempty face of $\uncert(s,a)$ obtained by adjoining
the single linear inequality $p\cdot\uvector\le\nu_{\uvector}(s,a)$,
whose vertices are vertices of $\uncert(s,a)$. In particular every
$\arg\min$ below is nonempty, and so is every uncertainty set of
$\rmdp_r$, being the image of a nonempty set under $\kappa_r$. Hence,
every $\rmdp_r$ is a legitimate linearly defined RMDP and
Proposition~\ref{prop:polytopic} applies to it.

\paragraph{Markov-chain preliminaries.}
When both policies are positional the pair $(\agentpol,\envpol)$
induces a finite Markov chain over $\states$ with transition matrix
$P(s,t)=\envpol(s,\agentpol(s))(t)$. We use the standard theory of
finite Markov chains \cite{model-checking-book}: with probability $1$ the play
eventually enters a closed recurrent class $\recurrent$ and visits all
its states infinitely often, so $\Inf(\pi)=\recurrent$ almost surely
conditioned on entering $\recurrent$. Call $\recurrent$ \emph{even} if
$\max\coloring(\recurrent)$ is even and \emph{odd} otherwise;
conditioned on entering $\recurrent$, the play satisfies $\spec$ with
probability $1$ if $\recurrent$ is even and $0$ if it is odd. Hence,
writing $\lozenge\recurrent$ for the event of being absorbed in
$\recurrent$,
\begin{equation}\label{eq:absorption}
  \Pr^{\agentpol,\envpol}_s[\spec]
  \;=\;\sum_{\recurrent\ \text{even}}
  \Pr^{\agentpol,\envpol}_s[\lozenge\recurrent].
\end{equation}
We also record the \emph{harmonicity} of the satisfaction probability of
a prefix-independent objective under a positional pair: writing
$w_s:=\Pr^{\agentpol,\envpol}_s[\spec]$,
\begin{equation}\label{eq:harmonic}
  w_s\;=\;\sum_{t\in\states}P(s,t)\,w_t,
  \qquad\text{i.e.}\qquad w=Pw .
\end{equation}
This is immediate from the Markov property together with
prefix-independence of $\spec$: conditioned on the first transition
$s\to t$, the remaining play is distributed as a play from $t$ under the
same positional pair, and deleting the one-step prefix does not change
whether the play satisfies $\spec$.

\subsection{Optimality conditions}

Proposition~\ref{prop:polytopic} yields the usual one-step optimality
(Bellman) conditions. Note that, by compactness of the uncertainty sets,
the inner operators below are minima and not merely infima.

\begin{proposition}[Optimality conditions]\label{prop:bellman}
For every $s\in\states$,
\begin{equation}\label{eq:bellman}
  \val^{\rmdp}_s(\spec)\;=\;\max_{a\in\actions}\ \min_{p\in\uncert(s,a)}
  \ p\cdot\val^{\rmdp}(\spec).
\end{equation}
Moreover, for the RMC $\rmdp^{\agentpol}$ induced by a positional agent
policy $\agentpol$,
\begin{equation}\label{eq:rmc-bellman}
  \val^{\agentpol}_s\;=\;\min_{p\in\uncert(s,\agentpol(s))}
  p\cdot\val^{\agentpol}\qquad\text{for every }s\in\states,
\end{equation}
and the minimum in \eqref{eq:rmc-bellman} is attained
\emph{simultaneously at all states} by the positional environment policy
of Proposition~\ref{prop:polytopic}.
\end{proposition}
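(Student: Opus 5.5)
The plan is to derive both identities from the positional determinacy of Proposition~\ref{prop:polytopic}, using the harmonicity identity \eqref{eq:harmonic} for positional pairs. Write $\val=\val^{\rmdp}(\spec)$ and let $\agentpol^\ast,\envpol^\ast$ be the positional optimal policies. I would prove \eqref{eq:rmc-bellman} first, since \eqref{eq:bellman} will follow from it together with a short one-step deviation argument.

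\emph{RMC identity.} Fix a positional $\agentpol$ and apply Proposition~\ref{prop:polytopic} to $\rmdp^{\agentpol}$, which is again linearly defined. This yields a positional environment policy $\envpol^{\agentpol}$ with $\Pr^{\agentpol,\envpol^{\agentpol}}_s[\spec]=\val^{\agentpol}_s$ for all $s$ simultaneously. Harmonicity under this pair gives
\[
\val^{\agentpol}_s = p^\ast\cdot\val^{\agentpol},\qquad p^\ast=\envpol^{\agentpol}(s,\agentpol(s)),
\]
and hence $\val^{\agentpol}_s\ge\min_{p\in\uncert(s,\agentpol(s))}p\cdot\val^{\agentpol}$. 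For the reverse inequality, take any $p\in\uncert(s,\agentpol(s))$. Consider the environment policy that plays $p$ at the first step and then, from the successor $t$, plays $\envpol^{\agentpol}$. Since $\spec$ is prefix-independent, this policy yields exactly $p\cdot\val^{\agentpol}$. Because $\val^{\agentpol}_s$ is an infimum over all policies, $\val^{\agentpol}_s\le p\cdot\val^{\agentpol}$. Combining the two bounds gives equality, and the minimum is attained at $p^\ast$ at every $s$ simultaneously, as claimed.

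\emph{RMDP identity.} For the inequality $\le$, apply \eqref{eq:rmc-bellman} to $\agentpol^\ast$, using $\val^{\agentpol^\ast}=\val$:
\[
\val_s=\min_{p\in\uncert(s,\agentpol^\ast(s))}p\cdot\val\le\max_a\min_p p\cdot\val .
\]
For $\ge$, fix an action $a$ and let the agent play $a$ first and then $\agentpol^\ast$. Against any environment policy whose first-step choice is $p\in\uncert(s,a)$, the continuation guarantees at least $\val_t$ from each successor $t$. Here I would use that $\agentpol^\ast$ is optimal against all, including history-dependent, environment policies, and that shifting by one step preserves this by prefix-independence. So the deviating agent policy secures $\inf_p p\cdot\val=\min_p p\cdot\val$, and since $\val_s$ is a supremum, $\val_s\ge\min_p p\cdot\val$ for every $a$. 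Compactness of the polytopes makes all optima attained.

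The main subtlety is the one-step deviation arguments. One has to check that the concatenated policies are legitimate general policies, and that the value of a shifted play from $t$ equals the value from $t$. The first needs only that policies may depend on history. The second follows from prefix-independence of $\spec$ together with the fact that $\val^{\agentpol}$ and $\val$ are defined as optima over all, history-dependent, policies, which makes both claims routine.
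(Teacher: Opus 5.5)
Your proof is correct. For the RMC identity \eqref{eq:rmc-bellman} and the $(\le)$ half of \eqref{eq:bellman} it is the paper's argument. That argument is harmonicity under the simultaneously optimal positional environment policy, a one-step environment deviation, and then \eqref{eq:rmc-bellman} applied to $\agentpol^\ast$.

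The $(\ge)$ half takes the dual route.
\begin{itemize}
\item \emph{The paper's route.} It fixes the optimal positional environment policy $\envpol^\ast$ of Proposition~\ref{prop:polytopic} and regards $\rmdp$ under $\envpol^\ast$ as an ordinary MDP. It takes a positional optimal agent policy $\bar{\agentpol}$ of that MDP and evaluates ``play $a$, then $\bar{\agentpol}$'' against $\envpol^\ast$. From $\val_s=\sup_{\agentpol}\Pr^{\agentpol,\envpol^\ast}_s[\spec]$ this gives $\val_s\ge\envpol^\ast(s,a)\cdot\val\ge\min_{p\in\uncert(s,a)}p\cdot\val$.
\item \emph{Your route.} You use the agent half, $\inf_{\envpol}\Pr^{\agentpol^\ast,\envpol}_t[\spec]=\val_t$. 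You let ``play $a$, then $\agentpol^\ast$'' face an arbitrary, possibly history-dependent, environment policy, and bound its payoff below by $p\cdot\val$, where $p$ is the first-step choice.
\end{itemize}

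What your version buys is economy. It needs no positional-optimality result for parity MDPs, and not even positionality of $\agentpol^\ast$. Choosing $\varepsilon$-optimal continuations from each successor gives $\val_s\ge\min_p p\cdot\val-\varepsilon$, so this inequality holds without any determinacy. The price is that the continuation environment is a one-step shift of a history-dependent policy. You correctly note that this shift is admissible and that prefix-independence handles it.

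The paper's version keeps every continuation positional. Its decomposition is therefore just the Markov property, and it parallels the RMC argument exactly.
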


\begin{proof}[Proof of Proposition~\ref{prop:bellman}]
Both identities peel off the first transition out of $s$ and identify
the value of the remaining play with the value vector again. This is
legitimate because parity is \emph{prefix-independent}, i.e. whether a
path satisfies $\parity(\coloring)$ depends only on $\Inf(\pi)$, so
deleting or prepending a finite prefix never changes membership.

\emph{Proof of \eqref{eq:rmc-bellman}.} Write $u:=\val^{\agentpol}$. By
the last clause of Proposition~\ref{prop:polytopic} there is a
positional environment policy $\envpol^\ast$ of $\rmdp^{\agentpol}$ with
$\Pr^{\envpol^\ast}_t[\spec]=u_t$ for every $t$. Applying
\eqref{eq:harmonic} to the chain it induces gives
$u_s=\envpol^\ast(s)\cdot u\ge\min_{p\in\uncert(s,\agentpol(s))}p\cdot
u$. Conversely, fix $p\in\uncert(s,\agentpol(s))$ and let $\envpol_p$ be
the environment policy that plays $p$ on the first step at $s$ and
follows $\envpol^\ast$ afterwards. It is admissible because
$p\in\uncert(s,\agentpol(s))$. By prefix-independence,
$\Pr^{\envpol_p}_s[\spec]
=\sum_{t}p(t)\,\Pr^{\envpol^\ast}_t[\spec]=p\cdot u$, whence
$u_s=\inf_{\envpol}\Pr^{\envpol}_s[\spec]\le p\cdot u$. Minimising over
$p$ gives $u_s\le\min_p p\cdot u$, and the two inequalities give
\eqref{eq:rmc-bellman}. The final clause of the statement is the first
display: $\envpol^\ast(s)$ attains the minimum at every $s$ at once.

\emph{Proof of \eqref{eq:bellman}.} Write $v:=\val^{\rmdp}(\spec)$.

$(\le)$ By Proposition~\ref{prop:polytopic} there is a positional
agent policy $\agentpol^\ast$ with $\val^{\agentpol^\ast}=v$. Applying
\eqref{eq:rmc-bellman} to $\agentpol^\ast$ at $s$, with
$a^\ast=\agentpol^\ast(s)$,
\[
  v_s=\min_{p\in\uncert(s,a^\ast)}p\cdot v
  \;\le\;\max_{a\in\actions}\min_{p\in\uncert(s,a)}p\cdot v .
\]

$(\ge)$ By Proposition~\ref{prop:polytopic} fix a positional
environment policy $\envpol^\ast$ with
$\sup_{\agentpol}\Pr^{\agentpol,\envpol^\ast}_t[\spec]=v_t$ for every
$t$. Fixing $\envpol^\ast$ turns $\rmdp$ into an ordinary MDP with
transition function $\delta(t,a)=\envpol^\ast(t,a)$ whose parity value at
$t$ is $v_t$. This MDP has a positional
optimal agent policy $\bar{\agentpol}$, so
$\Pr^{\bar{\agentpol},\envpol^\ast}_t[\spec]=v_t$ for every $t$. Fix
$a\in\actions$ and let $\agentpol_a$ play $a$ on the first step at $s$
and follow $\bar{\agentpol}$ afterwards. By prefix-independence,
\[
  \Pr^{\agentpol_a,\envpol^\ast}_s[\spec]
  =\sum_{t\in\states}\envpol^\ast(s,a)(t)\,
    \Pr^{\bar{\agentpol},\envpol^\ast}_t[\spec]
  =\envpol^\ast(s,a)\cdot v
  \;\ge\;\min_{p\in\uncert(s,a)}p\cdot v .
\]
Since $v_s=\sup_{\agentpol}\Pr^{\agentpol,\envpol^\ast}_s[\spec]
\ge\Pr^{\agentpol_a,\envpol^\ast}_s[\spec]$, we get
$v_s\ge\min_{p\in\uncert(s,a)}p\cdot v$ for every $a$, and taking the
maximum over the finite set $\actions$ gives $(\ge)$.
\end{proof}

\subsection{Formal Definition of Value-Class RMDPs}
\label{sec:app:value-class}

Throughout this subsection we fix a positional agent policy $\agentpol$,
write $\uvector:=\val^{\agentpol}$ for the value vector of the induced
RMC $\rmdp^{\agentpol}$, and fix a value $r$ of $\uvector$ with
$U_r:=\vc{\agentpol}{r}\neq\emptyset$. Recall from
Section~\ref{sec:rmdp} that an action $a$ is \emph{tight} at $s$ if
$\nu(s,a)=\uvector_s$, that $\tightacts{\uvector}{s}$ denotes the set of
tight actions at $s$, and that $\tightset{\uvector}{s}{a}$ denotes the
\emph{tight face} at $(s,a)$.

The construction of $\rmdp_r$ restricts $\rmdp$ to the states of $U_r$,
to the tight actions, and to the tight faces, and merges everything
outside $U_r$ into a single fresh state. The merging is carried out by
the following map.

\begin{definition}[Collapse map]
\label{def:collapse}
Let $\bot\notin\states$ be a fresh state and let
$c_r:\states\to U_r\cup\{\bot\}$ fix every state of $U_r$ and send every
state of $\states\setminus U_r$ to $\bot$. The \emph{collapse map}
$\kappa_r$ is the push-forward of distributions along $c_r$, i.e.\
$\kappa_r:\Delta(\states)\to\Delta(U_r\cup\{\bot\})$ with
\[
  \kappa_r(p)(t)=p(t)\quad(t\in U_r),
  \qquad
  \kappa_r(p)(\bot)=\!\!\sum_{t\in\states\setminus U_r}\!\!p(t).
\]
We extend $\kappa_r$ to sets of distributions by taking images,
$\kappa_r(X)=\{\kappa_r(p)\mid p\in X\}$.
\end{definition}

Thus $\kappa_r$ leaves the mass inside the value class where it is and
lumps all remaining mass onto $\bot$. The two cases of
Definition~\ref{def:collapse} are exhaustive and disjoint, so
$\kappa_r(p)$ is again a distribution. Note that $\kappa_r$ is the
restriction to $\Delta(\states)$ of a linear map
$\mathbb{R}^{\states}\to\mathbb{R}^{U_r\cup\{\bot\}}$; in particular it
maps polytopes to polytopes, and it is in general not injective, since it
forgets how the escaping mass is distributed over $\states\setminus U_r$.

\begin{definition}[Value-class RMDP]
\label{def:value-class-rmdp}
The \emph{value-class RMDP} of $\agentpol$ at $r$ is the RMDP
$\rmdp_r=(\states_r,\actions_r,\uncert_r)$ with coloring
$\coloring_r:\states_r\to[d+1]$ given by:
\begin{enumerate}
  \item $\states_r=U_r\uplus\{\bot\}$ and
    $\actions_r=\actions\uplus\{a_\bot\}$;
  \item for $s\in U_r$ and $a\in\tightacts{\uvector}{s}$,
    \[
      \uncert_r(s,a)=\kappa_r\!\left(\tightset{\uvector}{s}{a}\right);
    \]
  \item $\uncert_r(s,a)=\{\delta_\bot\}$ for $s\in U_r$ and
    $a\notin\tightacts{\uvector}{s}$, and
    $\uncert_r(\bot,a)=\{\delta_\bot\}$ for every $a\in\actions_r$;
  \item $\coloring_r(s)=\coloring(s)$ for $s\in U_r$, and
    $\coloring_r(\bot)=2\lceil d/2\rceil+1$.
\end{enumerate}
\end{definition}

The state $\bot$ is absorbing and carries an odd color dominating every
color of $[d]$, so any play reaching $\bot$ visits only $\bot$ from then
on and violates $\parity(\coloring_r)$. Leaving the value class is
therefore losing for the agent in $\rmdp_r$, irrespective of the values
in $\rmdp$ of the states left for; likewise, by item~(3), playing a
non-tight action is losing. Consequently an optimal agent policy of
$\rmdp_r$ plays only tight actions on its almost-sure winning region, and
we may and do treat $\tightacts{\uvector}{s}$ as the actions genuinely
available at $s\in U_r$.

\begin{lemma}[Well-definedness]
\label{lem:mr-welldefined}
Every uncertainty set of $\rmdp_r$ is a nonempty polytope, and $\rmdp_r$
is a linearly defined RMDP of description size $O(|\rmdp|)$. Moreover
$\tightacts{\uvector}{s}\neq\emptyset$ for every $s\in U_r$, and
$\sum_{r}|\rmdp_r|=O(|\rmdp|)$, the sum ranging over the values of
$\uvector$.
\end{lemma}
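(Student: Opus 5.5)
We prove the four claims of Lemma~\ref{lem:mr-welldefined} in order: nonemptiness and polytopality of the uncertainty sets, linear definability with size bounds, existence of tight actions, and the additive size bound over value classes.

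\textbf{Nonemptiness and polytopality.} For $s\in U_r$ and a tight action $a$, the tight face $\tightset{\uvector}{s}{a}$ is the set of minimisers of $p\mapsto p\cdot\uvector$ over $\uncert(s,a)$. As recalled at the start of this appendix, it is a nonempty face of the polytope $\uncert(s,a)$, because the minimum of a linear function over a nonempty compact polytope is attained. Its image under the linear map $\kappa_r$ is therefore a nonempty polytope. The remaining uncertainty sets, those at non-tight actions and at $\bot$, are the singleton $\{\delta_\bot\}$, which is trivially a nonempty polytope.

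\textbf{Linear definability and size.} Let $\uncert(s,a)=\{x\in\Delta(\states)\mid\exists y\colon A[x;y]\le b\}$. Then the tight face is
\[
  \{x\in\Delta(\states)\mid \exists y\colon A[x;y]\le b,\ x\cdot\uvector\le \uvector_s\},
\]
obtained by adjoining the single row $x\cdot\uvector\le\uvector_s$. Here $\uvector_s=\nu(s,a)$ by tightness, and $\uvector$ has polynomial bit-size by Theorem~\ref{thm:rmc:parity-poly}. The collapse is expressed by promoting $x$ to an auxiliary variable and introducing new distribution coordinates $z\in\Delta(U_r\cup\{\bot\})$ with the equalities
\[
  z_t=x_t \quad (t\in U_r),\qquad z_\bot=\sum_{t\notin U_r}x_t .
\]
This shows that $\kappa_r$ of a linearly defined set is again linearly defined, since projections of projections are projections. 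The description adds $O(|\states|)$ rows to the one for $(s,a)$. Counting the rows of $\uvector$ with its polynomial bit-size, each $(s,a)$ contributes $O(|\rmdp|_{s,a})$, so $\rmdp_r$ has size $O(|\rmdp|)$ under the convention that $\uvector$ is part of the input size.

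\textbf{Existence of tight actions.} For $s\in U_r$, take $a=\agentpol(s)$. By \eqref{eq:rmc-bellman}, $\nu(s,\agentpol(s))=\min_{p\in\uncert(s,\agentpol(s))}p\cdot\uvector=\uvector_s$, so $\agentpol(s)$ is tight and $\tightacts{\uvector}{s}\neq\emptyset$.

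\textbf{The sum bound.} The value classes $U_r$ partition $\states$, and $\rmdp_r$ contains only the pairs $(s,a)$ with $s\in U_r$, plus constant-size $\bot$ parts. Summing the per-pair bounds over $r$ therefore counts each $(s,a)$ of $\rmdp$ exactly once. The only subtle term is the $O(|\states|)$ collapse rows, which appear once per pair; they are dominated by the simplex constraints already present in the description of $\uncert(s,a)$. The remaining bookkeeping, together with the $O(|\states|)$ contributions of $\bot$, is where care is needed to obtain $O(|\rmdp|)$ rather than $O(|\states|\cdot|\rmdp|)$. I expect this accounting to be the main obstacle; the other steps are direct.
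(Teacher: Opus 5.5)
Your proposal is correct and follows essentially the paper's route: the tight face is the face of $\uncert(s,a)$ cut out by the single inequality $p\cdot\uvector\le\uvector_s$, its image under the linear map $\kappa_r$ is presented by the lifted system in $(q,p)$ (your $(z,x)$), tightness of $\agentpol(s)$ comes from \eqref{eq:rmc-bellman} (which, unlike the paper's phrasing, does not need the hypothesis that no quantitative improvement exists), and the sum bound comes from disjointness of the value classes. The accounting you flag as the main obstacle is not really one. Non-tight singletons at $s\in U_r$ are charged to their pair $(s,a)$ exactly like tight ones, while the $\bot$ state costs $O(|\actions|(|U_r|+1))$ per class. The paper's observation $\sum_r(|U_r|+1)\le 2|\states|$ then bounds the extra cost by $O(|\states|\cdot|\actions|)\subseteq O(|\rmdp|)$.
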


\begin{proof}
The policy $\agentpol$ cannot be improved quantitatively, so, the action $\agentpol(s)$ is tight at every
$s\in\states$, so $\tightacts{\uvector}{s}\neq\emptyset$. For
$a\in\tightacts{\uvector}{s}$ the tight face
$\tightset{\uvector}{s}{a}$ is the face of $\uncert(s,a)$ cut out by the
single inequality $p\cdot\uvector\le\nu(s,a)$, hence a nonempty polytope,
and $\uncert_r(s,a)$ is its image under the linear map $\kappa_r$, hence
again a nonempty polytope. For linear definability,
\[
  \uncert_r(s,a)=\Big\{q\ \Big|\
    \exists p:\ A_{s,a}\,p\le b_{s,a},\
    p\cdot\uvector\le\uvector_s,\
    q=\kappa_r(p)\Big\},
\]
which presents $\uncert_r(s,a)$ as the projection onto the
$q$-coordinates of a polyhedron in the variables $(q,p)$, obtained from
the description of $\uncert(s,a)$ by adjoining one inequality and the
defining equalities of $\kappa_r$. Note that only $\uvector$ enters this
description; the one-step optima $\nu(s,a)$ are used solely in the
tightness test of items~(2) and~(3). Hence $\rmdp_r$ has description
size $O(|\rmdp|)$. Finally the value classes are pairwise disjoint and
$\states_r=U_r\uplus\{\bot\}$, so
$\sum_{r}|\states_r|\le|\states|+|\states|$ and the total description
size of all value-class RMDPs is $O(|\rmdp|)$.
\end{proof}

Since Lemma~\ref{lem:mr-welldefined} makes $\rmdp_r$ a linearly defined
RMDP, Proposition~\ref{prop:polytopic} applies to it, as do the results
of Section~\ref{sec:rmc} instantiated at a fixed positional agent policy.
We write $W_r=\AS_{\rmdp_r}(\parity(\coloring_r))\cap U_r$ for its
almost-sure winning region and $\agentpol_r$ for a corresponding
positional almost-sure winning policy.

\subsection{Transfer between the RMDP and its trap value-class RMDPs}
The trap value-class RMDP $\rmdp_r$ lives over $U_r\cup\{\bot\}$, so its
one-step choices are distributions over $U_r\cup\{\bot\}$ and not
elements of $\uncert(s,a)\subseteq\Delta(\states)$. The following lemma
does the translation once and for all. It is used at every point below
where a play of $\rmdp$ is recognised as a play of $\rmdp_r$ or
conversely. Recall the collapse map
\[
  \kappa_r:\Delta(\states)\to\Delta(U_r\cup\{\bot\}),\qquad
  \kappa_r(p)(t)=p(t)\ \ (t\in U_r),\qquad
  \kappa_r(p)(\bot)=p(\states\setminus U_r),
\]
and that $\uncert_r(s,a)=\kappa_r(\tightset{\val^{\agentpol}}{s}{a})$
for $s\in U_r$ and $a$ tight at $s$.

\begin{lemma}[Transfer]\label{lem:transfer}
Let $\uvector=\val^{\agentpol}$, let $r<1$ with $U_r\neq\emptyset$, let
$s\in U_r$ and let $a\in\tightacts{\uvector}{s}$. Then:
\begin{enumerate}
\item \emph{(Push-forward.)} For every
  $p\in\tightset{\uvector}{s}{a}$ we have
  $\kappa_r(p)\in\uncert_r(s,a)$.
\item \emph{(Pull-back.)} For every $q\in\uncert_r(s,a)$ there is
  $p\in\tightset{\uvector}{s}{a}$ with $\kappa_r(p)=q$. Such a $p$ is a
  legal environment choice of $\rmdp$ at $(s,a)$ and satisfies
  $p\cdot\uvector=\uvector_s$.
\item \emph{(Plays.)} Let $\agentpol'$ be a positional agent policy of
  $\rmdp$ that plays tight actions at every state of $U_r$, and let
  $X\subseteq U_r$. Suppose $\envpol$ is a positional environment policy
  of $\rmdp$ such that $\envpol(t,\agentpol'(t))$ lies in the tight face
  at $(t,\agentpol'(t))$ and is supported in $X$.
  Then $\kappa_r\circ\envpol$ restricted to $X$ is a legal environment
  policy of $\rmdp_r$, $\agentpol'$ restricted to $X$ is a legal agent
  policy of $\rmdp_r$, and for every $x\in X$ the two induced measures
  on plays coincide: the play of $\rmdp$ from $x$ never leaves $X$ and
  the corresponding play of $\rmdp_r$ never visits $\bot$.
\end{enumerate}
\end{lemma}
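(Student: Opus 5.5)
The three parts are essentially unfoldings of Definition~\ref{def:collapse} and Definition~\ref{def:value-class-rmdp}, so I will prove them in order, each building on the previous one.

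For the push-forward claim (1), I simply note that $\uncert_r(s,a)$ is defined as the image $\kappa_r(\tightset{\uvector}{s}{a})$ for $s\in U_r$ and tight $a$. Hence $p$ in the tight face gives $\kappa_r(p)$ in $\uncert_r(s,a)$ by the definition of image.

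For the pull-back claim (2), I again unfold the image: $q\in\uncert_r(s,a)$ means $q=\kappa_r(p)$ for some $p\in\tightset{\uvector}{s}{a}$. Since the tight face is a subset of $\uncert(s,a)$, such a $p$ is a legal choice in $\rmdp$. For the equality $p\cdot\uvector=\uvector_s$, I use that $a$ is tight, so $\nu(s,a)=\uvector_s$ and every $p\in\uncert(s,a)$ satisfies $p\cdot\uvector\ge\uvector_s$. The tight face consists of the minimisers, cut out by $p\cdot\uvector\le\nu(s,a)=\uvector_s$, which forces equality. This is also the fact the paper says it uses freely.

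For the plays claim (3), I first check legality of the policies.
\begin{itemize}
\item For $t\in X$, the choice $\envpol(t,\agentpol'(t))$ lies in the tight face, so by (1) $\kappa_r(\envpol(t,\agentpol'(t)))\in\uncert_r(t,\agentpol'(t))$.
\item The action $\agentpol'(t)$ is tight, hence available in $\rmdp_r$ (item~(2) of Definition~\ref{def:value-class-rmdp} rather than the $\delta_\bot$ case).
\end{itemize}
Next, the support of $\envpol(t,\agentpol'(t))$ is contained in $X\subseteq U_r$, so $\kappa_r$ acts as the identity on it: it puts mass $0$ on $\bot$ and agrees with the original distribution on $U_r$. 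Therefore both induced Markov chains restricted to $X$ have the same transition matrix $P(t,t')=\envpol(t,\agentpol'(t))(t')$ for $t,t'\in X$, and $X$ is closed in both. Finally, I argue by induction on length that cylinder probabilities of finite prefixes agree and that every prefix stays in $X$ with probability one. By the cylinder construction the two measures then coincide, the $\rmdp$-play never leaves $X$, and the $\rmdp_r$-play never visits $\bot$.

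No step is a real obstacle. The only care needed is in (3), where the policies are only partially defined on $X$. I would extend them arbitrarily outside $X$, for instance by taking $\delta_\bot$ at non-$X$ states in $\rmdp_r$, and observe that this does not matter because $X$ is closed and so the play never reaches those states.
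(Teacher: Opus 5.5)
Your proposal is correct and matches the paper's proof essentially step for step. Parts (1) and (2) are read off from $\uncert_r(s,a)$ being the image $\kappa_r(\tightset{\uvector}{s}{a})$ together with tightness of $a$. Part (3) reduces, as in the paper, to the fact that $\kappa_r$ fixes distributions supported in $X\subseteq U_r$, so the two kernels agree on the closed set $X$ and hence so do the cylinder measures. Your explicit extension of the policies outside $X$, and your citation of (1) rather than (2) for the legality of $\kappa_r\circ\envpol$, are if anything slightly more careful than the paper's write-up.
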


\begin{proof}[Proof of Lemma~\ref{lem:transfer}]
(1) is the definition of $\uncert_r(s,a)$
as the image of the tight face. (2) holds because a set-image consists
exactly of the points having a preimage. The two properties of $p$ are
$\tightset{\uvector}{s}{a}\subseteq\uncert(s,a)$ and the definition of
the tight face together with tightness of $a$. For~(3), we check the three assertions in turn. The agent's actions are
tight, hence available in $\rmdp_r$. The environment's choices lie in
$\uncert_r$ by~(2). These choices are supported in $X\subseteq U_r$, so
$\kappa_r(\envpol(t,\agentpol'(t)))=\envpol(t,\agentpol'(t))$ by~(1).
The two transition kernels therefore agree on $X$, and neither play ever
leaves $X$. Equal kernels on $X$ give equal
measures on cylinders, hence equal measures on plays.
\end{proof}

Note that, by part~(4), on plays that stay in $U_r$ the models $\rmdp$
and $\rmdp_r$ are literally the same Markov chain, and $\coloring_r$
agrees with $\coloring$ there. So, such a play satisfies $\spec$ if and
only if it satisfies $\parity(\coloring_r)$. This is the only fact about
the collapse that the correctness proofs use.

\subsection{Markov-chain lemmas}

\begin{lemma}[(Sub/super)martingale decomposition]
\label{lem:mart}
Let $(\agentpol,\envpol)$ be positional, inducing a finite Markov chain
with matrix $P$, and let $\uvector\in[0,1]^{\states}$.
\begin{enumerate}
\item If $(P\uvector)(s)\ge\uvector(s)$ for all $s$, then $\uvector$ is
  constant on every closed recurrent class $\recurrent$. Write
  $\uvector_{\recurrent}$ for this value. Moreover,
  $\uvector(s)\le\sum_{\recurrent}
  \Pr_s[\lozenge\recurrent]\,\uvector_{\recurrent}$ for all $s$.
\item Symmetrically, if $(P\uvector)(s)\le\uvector(s)$ for all $s$,
  then $\uvector$ is constant on every closed recurrent class and
  $\uvector(s)\ge\sum_{\recurrent}
  \Pr_s[\lozenge\recurrent]\,\uvector_{\recurrent}$.
\end{enumerate}
Moreover, in either case the defining inequality is an \emph{equality}
at every state of every closed recurrent class.
\end{lemma}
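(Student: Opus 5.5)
We prove part~1 by a martingale argument on the finite chain $P$; part~2 then follows by applying part~1 to $\mathbf{1}-\uvector\in[0,1]^{\states}$, since $P$ is stochastic and so $(P(\mathbf{1}-\uvector))(s)\ge(\mathbf{1}-\uvector)(s)$ iff $(P\uvector)(s)\le\uvector(s)$. Throughout, let $X_n$ be the state at time $n$ and $\mathcal{F}_n$ the history up to $n$. Under the hypothesis $P\uvector\ge\uvector$, the Markov property gives $\mathbb{E}[\uvector(X_{n+1})\mid\mathcal{F}_n]=(P\uvector)(X_n)\ge\uvector(X_n)$. Hence $M_n:=\uvector(X_n)$ is a bounded submartingale with values in $[0,1]$.

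The first step is the claim that the defining inequality is an equality on each closed recurrent class, and that $\uvector$ is constant there. We settle both at once with a stationary-distribution argument. Fix a closed recurrent class $\recurrent$ and let $\mu$ be its unique stationary distribution; it has full support on $\recurrent$ because the class is irreducible. Since $\recurrent$ is closed, $P$ restricted to $\recurrent$ is stochastic, and so $\sum_{s\in\recurrent}\mu(s)\big((P\uvector)(s)-\uvector(s)\big)=\mu P\uvector-\mu\uvector=0$. Every summand is nonnegative and every weight $\mu(s)$ is positive, so $(P\uvector)(s)=\uvector(s)$ for all $s\in\recurrent$. This gives the equality clause.

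Constancy then follows from a maximum principle. Let $m=\max_{\recurrent}\uvector$ and $S=\{s\in\recurrent:\uvector(s)=m\}$. At $s\in S$ we have $m=(P\uvector)(s)$, which is an average of values that are all at most $m$. Every successor of $s$ therefore lies in $S$, so $S$ is closed inside the irreducible class $\recurrent$. Hence $S=\recurrent$, and $\uvector$ takes the single value $\uvector_{\recurrent}:=m$ on $\recurrent$.

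For the inequality $\uvector(s)\le\sum_{\recurrent}\Pr_s[\lozenge\recurrent]\,\uvector_{\recurrent}$, the submartingale property gives $\uvector(s)\le\mathbb{E}_s[\uvector(X_n)]$ for every $n$. By the finite Markov chain facts recalled above, $X_n$ almost surely enters some closed recurrent class $\recurrent$ and stays there. On that event $\uvector(X_n)=\uvector_{\recurrent}$ for all large $n$, so $\uvector(X_n)$ converges almost surely to $\sum_{\recurrent}\mathbf{1}[\lozenge\recurrent]\,\uvector_{\recurrent}$. Bounded convergence then yields $\lim_n\mathbb{E}_s[\uvector(X_n)]=\sum_{\recurrent}\Pr_s[\lozenge\recurrent]\,\uvector_{\recurrent}$, and the bound follows.

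The only delicate point is that constancy on $\recurrent$ relies on the equality clause. That is why the stationary-measure argument comes first, and it requires $\recurrent$ to be closed so that $\mu P=\mu$ holds on $\recurrent$. Once this is in place, the remaining steps are routine.
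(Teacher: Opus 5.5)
Your proof is correct and is essentially the paper's argument: a stationary-distribution computation gives $P\uvector=\uvector$ on each closed recurrent class, harmonicity on an irreducible class then gives constancy, and a bounded submartingale plus bounded convergence gives the absorption inequality. Where the paper makes a one-line appeal to harmonicity, you spell it out with a maximum principle, and you derive part~2 from part~1 via $\mathbf{1}-\uvector$ (implicitly using $\sum_{\recurrent}\Pr_s[\lozenge\recurrent]=1$) in place of the paper's ``symmetric'' remark.
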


\begin{proof}[Proof of Lemma~\ref{lem:mart}]
We prove (1); (2) is symmetric. Let $\recurrent$ be a closed recurrent
class with stationary distribution $\mu$, so $\mu>0$ on $\recurrent$ and
$\mu$ is supported on $\recurrent$. Since $\recurrent$ is closed,
$\sum_{x\in\recurrent}\mu(x)(P\uvector)(x)
=\sum_{y\in\recurrent}\uvector(y)\sum_{x\in\recurrent}\mu(x)P(x,y)
=\sum_{y\in\recurrent}\mu(y)\uvector(y)$ by stationarity. Comparing with
the pointwise inequality $(P\uvector)(x)\ge\uvector(x)$ and using
$\mu>0$, every such inequality is an \emph{equality} on $\recurrent$;
thus $\uvector$ restricted to $\recurrent$ is harmonic for an
irreducible finite chain and is therefore constant. For the inequality,
$\uvector(X_n)$ is a bounded submartingale, so
$\uvector(s)\le\mathbb{E}[\uvector(X_n)]$ for all $n$. The play is
absorbed in some $\recurrent$ almost surely and
$\uvector\equiv\uvector_{\recurrent}$ there, so
$\uvector(X_n)\to\uvector_{\recurrent(\pi)}$ almost surely, and bounded
convergence gives $\uvector(s)\le\sum_{\recurrent}
\Pr_s[\lozenge\recurrent]\,\uvector_{\recurrent}$.
\end{proof}

The next lemma states that a policy that is almost-sure winning for a
prefix-independent objective on the whole almost-sure region can never
be forced to leave it.

\begin{lemma}[Invariance of almost-sure regions]\label{lem:invariance}
Let $\rmdp'$ be an RMDP, let $W=\AS_{\rmdp'}(\spec')$ for a
prefix-independent objective $\spec'$, and let $\agentpol_W$ be an agent
policy that is almost-sure winning from every state of $W$. Then for
every $w\in W$ and every environment policy $\envpol$,
$\Pr^{\agentpol_W,\envpol}_w[\forall n:X_n\in W]=1$.
\end{lemma}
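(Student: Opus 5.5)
Let $w\in W$, fix an environment policy $\envpol$, and write $\Pr=\Pr^{\agentpol_W,\envpol}_w$. I will argue by contradiction. Suppose that with positive probability the play leaves $W$. Then there is a finite history $h=x_0x_1\cdots x_n$ with $x_0=w$, $x_0,\dots,x_{n-1}\in W$, $x_n\notin W$ and $\Pr[\mathrm{Cyl}(h)]>0$. The aim is to show that this forces the state $x_n$ into $W$. There are two ingredients: prefix-independence of $\spec'$ and the definition of $W=\AS_{\rmdp'}(\spec')$ as the set of states from which some agent policy wins with probability one against every environment.

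\textbf{Step 1.} Since $\agentpol_W$ is almost-sure winning from $w$ against every environment policy, in particular against $\envpol$, we have $\Pr[\spec']=1$. Because $\Pr[\mathrm{Cyl}(h)]>0$, this gives $\Pr[\spec'\mid \mathrm{Cyl}(h)]=1$.

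To make this usable against every environment, I compare $\envpol$ with deviations from it. Let $\envpol'$ be an arbitrary environment policy, and let $\envpol_h$ agree with $\envpol$ everywhere except on histories extending $h$, where it plays $\envpol'$ applied to the suffix started at $x_n$. Then $\Pr^{\agentpol_W,\envpol_h}_w[\mathrm{Cyl}(h)]=\Pr[\mathrm{Cyl}(h)]>0$, since both policies coincide along the prefix. Applying the same conditioning argument to $\envpol_h$ yields $\Pr^{\agentpol_W,\envpol_h}_w[\spec'\mid\mathrm{Cyl}(h)]=1$.

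\textbf{Step 2.} Define the residual agent policy $\agentpol_W^h(h')=\agentpol_W(h\cdot h')$ for histories $h'$ starting at $x_n$. Conditioned on $\mathrm{Cyl}(h)$, the play under $(\agentpol_W,\envpol_h)$ from $w$, with the prefix removed, is distributed exactly as the play under $(\agentpol_W^h,\envpol')$ from $x_n$. This follows by the cylinder construction, since both measures agree on every cylinder. By prefix-independence, $\spec'$ holds for the full play iff it holds for the suffix. Hence $\Pr^{\agentpol_W^h,\envpol'}_{x_n}[\spec']=1$.

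Since $\envpol'$ was arbitrary, the single agent policy $\agentpol_W^h$ wins almost surely from $x_n$ against every environment policy. Therefore $x_n\in W$, contradicting $x_n\notin W$.

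\textbf{Step 3.} There are only countably many finite histories. So the event $\exists n: X_n\notin W$ is a countable union of the cylinders considered above, each of which has probability zero. This gives $\Pr[\forall n:X_n\in W]=1$.

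The \textbf{main obstacle} is the measure-theoretic bookkeeping in Step 2. The identification of the conditional measure with the residual measure has to be done carefully for history-dependent environment policies that may observe actions; I would verify it directly on cylinders. A further point is that the winning policy from $x_n$ may have to be the history-dependent $\agentpol_W^h$. This is harmless, because $\AS$ is defined by quantifying over all agent policies.
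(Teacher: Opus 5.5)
Your proof is correct and follows the paper's argument essentially step for step. The paper likewise fixes a positive-probability history $h$ ending at a state outside $W$ and splices an arbitrary $\envpol'$ after $h$. It then conditions on $h$ and uses prefix-independence to conclude that the shifted policy $(\agentpol_W)|_h$ wins almost surely from the exit state, which contradicts that state lying outside $W$. One small point of presentation: your countable-union step and the requirement that $h$ record the agent's actions (so that the residual policy $\agentpol_W^h$ is well defined when $\agentpol_W$ randomizes) are bookkeeping the paper leaves implicit, so take $h$ to include actions rather than writing it as a bare state sequence.
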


\begin{proof}[Proof of Lemma~\ref{lem:invariance}]
Suppose towards a contradiction that under some $\envpol$ the play from
$w$ leaves $W$ with positive probability, and fix a history $h$ of
positive probability ending in a state $s'\notin W$. Let $\envpol'$ be
any environment policy and let $\envpol''$ be the environment policy
that behaves as $\envpol$ along $h$ and as $\envpol'$ afterwards. Then
$h$ still has positive probability under
$(\agentpol_W,\envpol'')$. Since $\agentpol_W$ is almost-sure winning
from $w$, $\Pr^{\agentpol_W,\envpol''}_w[\spec']=1$, and conditioning on
the positive-probability event $h$ and using prefix-independence of
$\spec'$ gives
$\Pr^{(\agentpol_W)|_h,\envpol'}_{s'}[\spec']=1$. As $\envpol'$ was
arbitrary, the shifted policy $(\agentpol_W)|_h$ is almost-sure winning
from $s'$, i.e.\ $s'\in W$, a contradiction.
\end{proof}

Applied to the trap value-class RMDPs $\rmdp_r$, Lemma~\ref{lem:invariance}
yields the property of $W_r$ announced in Section~\ref{sec:rmdp} and
used throughout the correctness proofs below: the environment choices
available in $\rmdp_r$ cannot push the play out of $W_r$. Since
$\bot\notin W_r$, this single statement also says that the play cannot
leave the value class $U_r$.

\begin{lemma}[Support in $W_r$]\label{lem:as-facts}
Assume $I=\emptyset$, and let $r<1$ with $U_r\neq\emptyset$. Then
$\mathrm{supp}(q)\subseteq W_r$ for every $s\in W_r$ and every
$q\in\uncert_r(s,\agentpol_r(s))$. In particular $q(\bot)=0$, so no
environment choice of $\rmdp_r$ available against $\agentpol_r$ leaves
$U_r$.
\end{lemma}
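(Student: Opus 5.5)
The plan is to derive the statement from Lemma~\ref{lem:invariance}, applied to the value-class RMDP $\rmdp_r$ with the prefix-independent objective $\parity(\coloring_r)$. By Lemma~\ref{lem:mr-welldefined}, $\rmdp_r$ is a legitimate linearly defined RMDP, so that lemma applies. Let $W'=\AS_{\rmdp_r}(\parity(\coloring_r))$. Since $\bot$ is absorbing with a dominating odd colour, $\bot\notin W'$. Hence $W'=W_r$ exactly, not merely up to the intersection with $U_r$. We take $\agentpol_r$ to be a positional policy that is almost-sure winning from every state of $W_r$, as in the definition of $W_r$. Lemma~\ref{lem:invariance} then gives, for every $s\in W_r$ and every environment policy of $\rmdp_r$, that the play under $\agentpol_r$ stays in $W_r$ forever with probability one.

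Next I convert this path-level invariance into the one-step support claim. Fix $s\in W_r$ and $q\in\uncert_r(s,\agentpol_r(s))$. Consider the environment policy $\envpol_q$ of $\rmdp_r$ that plays $q$ at the first step from $s$ and behaves arbitrarily afterwards. This choice is legal, because $q$ lies in the uncertainty set of the action $\agentpol_r(s)$ actually chosen by the agent. Under $(\agentpol_r,\envpol_q)$ from $s$, the state at time $1$ is distributed exactly as $q$. By invariance, $\Pr[X_1\in W_r]=1$, so $q(t)=0$ for every $t\notin W_r$. This gives $\supp(q)\subseteq W_r$. Since $\bot\notin W_r$, in particular $q(\bot)=0$. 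By the definition of $\kappa_r$, $q(\bot)$ is exactly the mass that the preimage $p$ in the tight face places outside $U_r$, and the transfer Lemma~\ref{lem:transfer}(2) lets us pass to such preimages. So no environment choice of $\rmdp$ compatible with $\agentpol_r$ at $s$ leaves $U_r$.

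One side point needs checking: that $\agentpol_r(s)$ is a tight action. If it were not, item~(3) of Definition~\ref{def:value-class-rmdp} would give $\uncert_r=\{\delta_\bot\}$, and playing it would lose immediately, contradicting $s\in W_r$. This is consistent with the remark preceding Lemma~\ref{lem:transfer}. The hypothesis $I=\emptyset$ is used only through Lemma~\ref{lem:mr-welldefined}, which guarantees that tight actions exist and that $\rmdp_r$ is well defined.

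The main subtlety I expect is ensuring that $\agentpol_r$ is almost-sure winning uniformly from all of $W_r$, rather than from each state separately with possibly different policies. Lemma~\ref{lem:invariance} requires a single such policy. My first approach is to appeal to Proposition~\ref{prop:polytopic} for $\rmdp_r$: a positional optimal agent policy achieves value $1$ at every state of value $1$. I then need that, for parity, almost-sure winning coincides with value $1$ under positional optimal play. If that identification is delicate, I would instead use the positional almost-sure policy returned by the procedure of~\citet{AsadiCGKS26}, which is uniform on the whole almost-sure region by construction.
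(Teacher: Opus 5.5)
Your proposal is correct and follows the paper's proof: you apply Lemma~\ref{lem:invariance} to $\rmdp_r$ and then use the one-step deviation that plays $q$ at $s$. Your observation that $\bot\notin\AS_{\rmdp_r}(\parity(\coloring_r))$, so that this region equals $W_r$, tidies a step the paper glosses. One caution concerns your extra remark about $\rmdp$, which the lemma does not need: it holds only for choices on the tight face $\tightset{\val^{\agentpol}}{s}{\agentpol_r(s)}$, and it follows from the push-forward part~(1) of Lemma~\ref{lem:transfer}, not part~(2). Choices off the tight face can leave $U_r$, which is exactly the ``loose'' case in the proof of Lemma~\ref{lem:qual}.
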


\begin{proof}[Proof of Lemma~\ref{lem:as-facts}]
The objective $\parity(\coloring_r)$ is prefix-independent and
$\agentpol_r$ is almost-sure winning from every state of $W_r\subseteq
\AS_{\rmdp_r}(\parity(\coloring_r))$, so Lemma~\ref{lem:invariance}
applied to $\rmdp_r$ gives
$\Pr^{\agentpol_r,\envpol}_s[\forall n:X_n\in W_r]=1$ for every
environment policy $\envpol$ of $\rmdp_r$. Were some
$q\in\uncert_r(s,\agentpol_r(s))$ to place positive mass outside $W_r$,
the environment policy playing $q$ at $s$ would leave $W_r$ with
positive probability.
\end{proof}

\begin{remark}\label{rem:tight-by-construction}
The policy $\agentpol_r$ plays only \emph{tight}
actions on $W_r$: the
only actions available at $s\in U_r$ in $\rmdp_r$ are the tight ones
$\tightacts{\val^{\agentpol}}{s}$, which form a nonempty set because
$\agentpol(s)$ is tight by \eqref{eq:rmc-bellman}.
\end{remark}

\subsection{Environment spoiling}

\begin{restatable}[Environment spoiling on on value-class RMDPs]{proposition}{spoilprop}
\label{prop:spoil}
Let $r<1$ with $U_r\neq\emptyset$ and suppose
$W_r=\AS_{\rmdp_r}(\parity(\coloring_r))=\emptyset$. Then the
environment has a positional policy $\envpol_r$ on $\rmdp_r$ such that
$\Pr^{\agentpol'',\envpol_r}_s[\parity(\coloring_r)]=0$ for every
agent policy $\agentpol''$ and every state $s$.
\end{restatable}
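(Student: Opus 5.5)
The plan is to obtain the spoiling policy from positional determinacy of $\rmdp_r$ (Proposition~\ref{prop:polytopic}, which applies by Lemma~\ref{lem:mr-welldefined}), combined with the standard fact that, for parity objectives in finite turn-based stochastic games, an empty almost-sure region forces value $0$ everywhere.

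\textbf{Step 1: reduce to the vertex game.} Pass to the vertex game $\vertexgame_r$ of $\rmdp_r$ constructed in Appendix~\ref{app:polytopic}. This is a finite turn-based stochastic parity game. By Lemma~\ref{lem:app:value-pres} its values coincide with those of $\rmdp_r$. By the strategy correspondence, the almost-sure winning region of the agent in $\vertexgame_r$, projected to agent vertices, equals $\AS_{\rmdp_r}(\parity(\coloring_r))$, which is empty by hypothesis. (The vertex $\bot$ is losing anyway, since it is absorbing with a dominating odd color.)

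\textbf{Step 2: empty almost-sure region implies value $0$.} I will invoke the classical result for turn-based stochastic parity games: if the maximizer's almost-sure winning region is empty, then every vertex has value $0$. One way to see this is that positive-value vertices give rise to a value class of maximal value $r^\ast>0$. Within this class, the maximizer's optimal positional strategy, restricted to the top value class, yields almost-sure winning. This is the standard argument from \cite{chatterjee2012survey}.

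If I prefer to stay inside the paper, the same argument can be run directly. Fix optimal positional policies $(\agentpol^\ast,\envpol^\ast)$ of $\rmdp_r$ and let $v^\ast=\max_s \val^{\rmdp_r}_s>0$. In the Markov chain induced by $(\agentpol^\ast,\envpol)$, for any environment policy $\envpol$, the set of states with value $v^\ast$ must be closed: the Bellman equation (Proposition~\ref{prop:bellman}) forces every environment choice to stay inside it. Then Lemma~\ref{lem:mart} shows that every recurrent class reachable from this top class is even. This gives almost-sure winning there, contradicting $W_r=\emptyset$. Hence $\val^{\rmdp_r}\equiv 0$.

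\textbf{Step 3: extract the policy.} Take $\envpol_r:=\envpol^\ast$, the positional optimal environment policy from Proposition~\ref{prop:polytopic} applied to $\rmdp_r$. It satisfies
\[
\sup_{\agentpol''}\Pr^{\agentpol'',\envpol_r}_s[\parity(\coloring_r)]=\val^{\rmdp_r}_s=0
\]
for every $s$. This gives the claim for every (not necessarily positional) agent policy $\agentpol''$.

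\textbf{Main obstacle.} The hardest step is Step 2. A state of value $v^\ast$ may use an action whose worst case is not uniquely attained, so the environment might leak mass toward lower values unless the agent's optimal action is chosen correctly. The argument must show that fixing $\agentpol^\ast$ makes leaving the top class impossible rather than merely non-optimal for the environment. The approach is to check that $\agentpol^\ast$ is optimal, so against any choice $p$ we have $p\cdot\val\ge v^\ast$ with $\val\le v^\ast$, which forces $\supp(p)$ to lie inside the top class. Then an odd recurrent class inside the top class would have value $0\ne v^\ast$, a contradiction. If this direct route becomes delicate, I will fall back on citing the game-theoretic result through the vertex-game reduction.
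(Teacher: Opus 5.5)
Your proposal is correct and has the same outer structure as the paper: apply Proposition~\ref{prop:polytopic} to $\rmdp_r$, show $\val^{\rmdp_r}\equiv 0$ by extracting an almost-sure winning state from any positive value, and take $\envpol_r$ to be the positional optimal environment policy. The key step is argued differently.

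The paper uses the \emph{pair} $(\agentpol^\ast,\envpol_r)$. This positional pair realises $\val^{\rmdp_r}_s$ exactly at every $s$. So positive value at $s_0$ means some even recurrent class $\recurrent$ of the induced chain is reached, and every $x\in\recurrent$ has $\val^{\rmdp_r}_x=\Pr^{\agentpol^\ast,\envpol_r}_x[\parity(\coloring_r)]=1$. Since $\agentpol^\ast$ secures the value, it is almost-sure winning from $x$.

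Your in-paper route uses only $\agentpol^\ast$. By \eqref{eq:rmc-bellman} applied to $\agentpol^\ast$, whose value vector is $\val^{\rmdp_r}$, every $p\in\uncert_r(s,\agentpol^\ast(s))$ at a top-class state satisfies $p\cdot\val^{\rmdp_r}\ge v^\ast$, which forces its support into the top class. An odd recurrent class inside the top class would then give probability $0<v^\ast$ against some positional environment, contradicting the guarantee of $\agentpol^\ast$. This is sound, with two small corrections:
\begin{itemize}
\item The evenness comes from that guarantee, not from Lemma~\ref{lem:mart}.
\item To pass from positional environment policies to all of them, you need positional attainment in the RMC $\rmdp_r^{\agentpol^\ast}$, via Proposition~\ref{prop:polytopic}.
\end{itemize}

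The paper's argument is shorter and never meets the leakage issue you flag as the main obstacle, because it needs no closure property. Yours pays for the Bellman/closure step but proves more: the top value class is closed under $\agentpol^\ast$ and almost-surely won, so a positive maximal value must equal $1$. Once you use the in-paper argument, Step~1 and the external game-theoretic citation are unnecessary.
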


\begin{proof}[Proof of Proposition~\ref{prop:spoil}]
Write $\spec_r:=\parity(\coloring_r)$.
The instance $\rmdp_r$ is a linearly defined RMDP: at $s\in U_r$ and $a$
tight, $\uncert_r(s,a)=\kappa_r(\tightset{\val^{\agentpol}}{s}{a})$ is
the image under a linear map of the face of $\uncert(s,a)$ cut out by
one additional linear inequality, hence a nonempty polytope, presented
as the projection of a linear system in the variables $(q,p)$. Also,
$\uncert_r(\bot,a)=\{\delta_{\bot}\}$. Proposition~\ref{prop:polytopic} requires only that the uncertainty sets
be nonempty bounded polytopes, and hence applies to $\rmdp_r$ with
objective $\spec_r$. It yields that there exists a positional agent policy
$\agentpol^\ast$ and a positional environment policy $\envpol_r$ such that
$\Pr^{\agentpol^\ast,\envpol_r}_s[\spec_r]=\val^{\rmdp_r}_s(\spec_r)$.
We claim $\val^{\rmdp_r}_s(\spec_r)=0$ for every $s$. Suppose not, and
fix $s_0$ with $\val^{\rmdp_r}_{s_0}(\spec_r)>0$. Being positional, $(\agentpol^\ast,\envpol_r)$ induces a
finite Markov chain over $U_r\cup\{\bot\}$, so by
\eqref{eq:absorption} applied with the coloring $\coloring_r$,
\[
  \val^{\rmdp_r}_s(\spec_r)=\Pr^{\agentpol^\ast,\envpol_r}_s[\spec_r]
  =\sum_{\recurrent\ \text{even}}\Pr_s[\lozenge\recurrent],
\]
where ``even'' refers to $\max\coloring_r(\recurrent)$.
As $\val^{\rmdp_r}_{s_0}(\spec_r)>0$, some even class $\recurrent$ is
reached with positive probability. Fix $x\in\recurrent$. From $x$ the
chain stays in $\recurrent$ almost surely, so
$\val^{\rmdp_r}_x(\spec_r)=\Pr^{\agentpol^\ast,\envpol_r}_x[\spec_r]=1$.
But $\agentpol^\ast$ secures the value, so
$\inf_{\envpol}\Pr^{\agentpol^\ast,\envpol}_x[\spec_r]
=\val^{\rmdp_r}_x(\spec_r)=1$, i.e.
$\Pr^{\agentpol^\ast,\envpol}_x[\spec_r]=1$ for \emph{every}
environment policy $\envpol$ of $\rmdp_r$. Thus $\agentpol^\ast$ is
almost-sure winning from $x$ in $\rmdp_r$, so
$x\in\AS_{\rmdp_r}(\spec_r)$, contradicting
$\AS_{\rmdp_r}(\spec_r)=\emptyset$.

Therefore $\val^{\rmdp_r}_s(\spec_r)=0$ for all $s$, and since
$\envpol_r$ attains the value, $\Pr^{\agentpol'',\envpol_r}_s[\parity(\coloring_r)]=0$ for every
agent policy $\agentpol''$ and every state $s$.
\end{proof}

\subsection{Correctness of the improvement steps}


\begin{proof}[Proof of Lemma~\ref{lem:value}]
Write $\uvector:=\val^{\agentpol}$. Fix a positional environment policy
$\envpol$ and let $P$ be the matrix of the chain induced by
$(\agentpol',\envpol)$. For $s\notin I$ we have
$\agentpol'(s)=\agentpol(s)$ and, by \eqref{eq:rmc-bellman},
$(P\uvector)(s)=\envpol(s,\agentpol(s))\cdot\uvector
\ge\min_{p\in\uncert(s,\agentpol(s))}p\cdot\uvector=\uvector_s$. For
$s\in I$, $(P\uvector)(s)\ge\min_{p\in\uncert(s,\agentpol'(s))}
p\cdot\uvector=\nu(s,\agentpol'(s))=\max_a\nu(s,a)>\uvector_s$, since
$\agentpol'(s)$ maximises $\nu(s,\cdot)$. Hence $\uvector$ is a
submartingale for $P$, and by Lemma~\ref{lem:mart}(1) it is constant on
every closed recurrent class $\recurrent$ with
$\uvector(s)\le\sum_{\recurrent}
\Pr_s[\lozenge\recurrent]\,\uvector_{\recurrent}$.

First, no closed recurrent class meets $I$: on $\recurrent$ the
submartingale inequalities are equalities (Lemma~\ref{lem:mart}),
whereas at states of $I$ the inequality is strict. Hence
$\agentpol'=\agentpol$ on every closed recurrent class $\recurrent$.

Second, every closed recurrent class $\recurrent$ with
$\uvector_{\recurrent}>0$ is even. Let $x\in\recurrent$ and let
$\widetilde{\envpol}$ be any environment policy of $\rmdp^{\agentpol}$
that agrees with $\envpol$ on $\recurrent$. Since $\recurrent$ is
closed under the kernels of $(\agentpol,\envpol)$ and
$\agentpol'=\agentpol$ there, the play from $x$ under
$(\agentpol,\widetilde{\envpol})$ stays in $\recurrent$ forever and
satisfies $\spec$ with probability $1$ if $\recurrent$ is even and $0$
if odd. But
$\Pr^{\agentpol,\widetilde{\envpol}}_x[\spec]\ge
\inf_{\envpol''}\Pr^{\agentpol,\envpol''}_x[\spec]
=\val^{\agentpol}_x=\uvector_{\recurrent}>0$, so $\recurrent$ is even.

Combining with \eqref{eq:absorption} and using that odd classes have
$\uvector_{\recurrent}=0$,
\[
  \Pr^{\agentpol',\envpol}_s[\spec]
  =\sum_{\recurrent\ \text{even}}\Pr_s[\lozenge\recurrent]
  \ge\sum_{\recurrent}\Pr_s[\lozenge\recurrent]\,\uvector_{\recurrent}
  \ge\uvector_s .
\]
Taking the infimum over positional $\envpol$, which by
Proposition~\ref{prop:polytopic} applied to the RMC
$\rmdp^{\agentpol'}$ computes $\val^{\agentpol'}$, gives
$\val^{\agentpol'}\ge\val^{\agentpol}$.

For strictness at $s\in I$: by \eqref{eq:rmc-bellman} applied to
$\rmdp^{\agentpol'}$ and monotonicity of
$\uvector\mapsto\min_p p\cdot\uvector$,
\begin{align*}
  \val^{\agentpol'}_s=\min_{p\in\uncert(s,\agentpol'(s))}
  p\cdot\val^{\agentpol'}
  \ \ge\ \min_{p\in\uncert(s,\agentpol'(s))}p\cdot\val^{\agentpol}\\
  =\max_{a}\min_{p\in\uncert(s,a)}p\cdot\val^{\agentpol}
  >\val^{\agentpol}_s,
\end{align*}
where the last equality uses that $\agentpol'(s)$ attains the outer
maximum defining $I$.
\end{proof}


\begin{proof}[Proof of Lemma~\ref{lem:qual}]
Write $\uvector:=\val^{\agentpol}$. Since $I=\emptyset$,
\begin{equation}\label{eq:fix-i}
  \uvector_s=\max_{a}\min_{p\in\uncert(s,a)}p\cdot\uvector
  \qquad\text{for all }s,
\end{equation}
and the action $\agentpol(s)$ attains it by \eqref{eq:rmc-bellman};
hence every action played by $\agentpol'$ is tight: $\agentpol(s)$ is
tight at $s$ for every $s$, and $\agentpol_r(s)$ is tight at $s$ for
every $s\in W_r$ by construction of $\rmdp_r$
(Remark~\ref{rem:tight-by-construction}).

\emph{Soundness.} Fix a positional environment policy $\envpol$ of
$\rmdp$ with matrix $P$ (for policy $\agentpol'$). Tightness gives
$(P\uvector)(s)\ge\uvector_s$ for all $s$, so Lemma~\ref{lem:mart}(1)
applies. It suffices to show that every closed recurrent class
$\recurrent$ with $\uvector_{\recurrent}>0$ is even. The bound
$\Pr^{\agentpol',\envpol}_s[\spec]\ge\uvector_s$ then follows exactly as
in Lemma~\ref{lem:value}, and taking the infimum over positional
$\envpol$ gives $\val^{\agentpol'}\ge\val^{\agentpol}$.

Let $\recurrent$ be a closed recurrent class and put
$r:=\uvector_{\recurrent}$. Since $\uvector\equiv r$ on $\recurrent$ we
have $\recurrent\subseteq U_r$, so $\coloring_r$ agrees with $\coloring$
on $\Inf(\pi)=\recurrent$. Moreover, by the last clause of Lemma~\ref{lem:mart} the submartingale
inequality is an equality at every $x\in\recurrent$, that is,
$\envpol(x,\agentpol'(x))\cdot\uvector=\uvector_x$. Since
$\agentpol'(x)$ is tight, this places $\envpol(x,\agentpol'(x))$ in the
tight face $\tightset{\uvector}{x}{\agentpol'(x)}$. It is moreover
supported in $\recurrent$, as $\recurrent$ is closed. Hence
Lemma~\ref{lem:transfer}(3), applied with $X=\recurrent$, identifies the
play of $\rmdp$ from any $x\in\recurrent$ under $(\agentpol',\envpol)$
with a play of $\rmdp_r$ under
$(\agentpol',\kappa_r\circ\envpol)$ that never visits $\bot$. Two cases.

\emph{Case 1: $\recurrent\cap W_r=\emptyset$.} Then
$\agentpol'=\agentpol$ on $\recurrent$, and the argument of
Lemma~\ref{lem:value} shows that if $r>0$ then $\recurrent$ is even.

\emph{Case 2: $\recurrent\cap W_r\neq\emptyset$.} Let
$x\in\recurrent\cap W_r$. At states of $W_r$ the policy $\agentpol'$
plays $\agentpol_r$, and by the previous paragraph the environment's
choices on $\recurrent$ are, after $\kappa_r$, choices of $\rmdp_r$;
hence Lemma~\ref{lem:as-facts} applies and every successor of a state of
$\recurrent\cap W_r$ lies in $W_r$, while lying in $\recurrent$ because
$\recurrent$ is closed. Thus the play from $x$ never leaves
$\recurrent\cap W_r$, and since $\recurrent$ is a recurrent class every
state of $\recurrent$ is visited almost surely, so
$\recurrent\subseteq W_r$ and $\agentpol'$ follows $\agentpol_r$
throughout $\recurrent$. Consequently the play from $x$ is a play of $\rmdp_r$ consistent with
$\agentpol_r$, started in
$x\in\AS_{\rmdp_r}(\parity(\coloring_r))$. It therefore satisfies
$\parity(\coloring_r)$ almost surely. Now $\Inf(\pi)=\recurrent$ almost
surely, and $\coloring_r=\coloring$ on $\recurrent$. Hence
$\max\coloring(\recurrent)$ is even, i.e.\ $\recurrent$ is even.

This
proves the claim, and exactly as in Lemma~\ref{lem:value} we conclude
$\val^{\agentpol'}\ge\val^{\agentpol}$.

\emph{Strictness on $W$.} Let $x\in W_r$, so $\uvector_x=r<1$. By
Proposition~\ref{prop:polytopic} applied to the RMC
$\rmdp^{\agentpol'}$, fix a positional environment policy
$\envpol^\ast$ attaining $\val^{\agentpol'}$ simultaneously at all
states, i.e.\
$\val^{\agentpol'}_t=\Pr^{\agentpol',\envpol^\ast}_t[\spec]$ for all
$t$. Call a state $t$ \emph{loose} if
$\envpol^\ast(t,\agentpol'(t))\cdot\uvector>\uvector_t$ and let $L$ be the set of
loose states. Observe that since $\agentpol'(t)$ is tight, a state $t$ is \emph{loose} if
$\envpol^\ast(t,\agentpol'(t))$ lies off the tight face
$\tightset{\uvector}{t}{\agentpol'(t)}$. Let $T$ be the first time the play started at $x$ visits
$L$. Note that $\{T=\infty\}$ is exactly the event that the play never
visits $L$.

We first show that $\Pr[\spec\mid T=\infty]=1$ whenever
$\Pr[T=\infty]>0$. Before the play visits $L$, the environment's choices
lie on tight faces, so their $\kappa_r$-images are available in
$\rmdp_r$ by Lemma~\ref{lem:transfer}(2). Since
$\agentpol'=\agentpol_r$ on $W_r$, Lemma~\ref{lem:as-facts} keeps the
play inside $W_r\subseteq U_r$. In particular $X_n\in W_r$ for every
$n\le T$, and for every $n$ when $T=\infty$. Define an environment policy
$\widetilde{\envpol}$ of $\rmdp_r$ by
$\widetilde{\envpol}(t,a):=\kappa_r(\envpol^\ast(t,a))$ whenever $t\in
W_r\setminus L$ and $a$ is tight at $t$ (legal precisely because $t$
is not loose) and by an arbitrary element of $\uncert_r(t,a)$
otherwise. By Lemma~\ref{lem:transfer}(3) the chains induced by
$(\agentpol',\envpol^\ast)$ in $\rmdp$ and by
$(\agentpol',\widetilde{\envpol})$ in $\rmdp_r$ assign the same measure
to every set of paths from $x$ avoiding $L$, since their kernels agree
at every state of $W_r\setminus L$. Under
$(\agentpol_r,\widetilde{\envpol})$ the play from $x$ stays in $W_r$ and
satisfies $\parity(\coloring_r)$ almost surely, hence also $\spec$
almost surely, as it never
visits $\bot$ and $\coloring_r=\coloring$ on $U_r$. Therefore
$\Pr^{\agentpol',\envpol^\ast}_x[\{T=\infty\}\cap\neg\spec]
=\Pr^{\agentpol',\widetilde{\envpol}}_x[\{T=\infty\}\cap\neg\spec]=0$,
which is the claim.

If $L=\emptyset$ then $T=\infty$ almost surely and
$\val^{\agentpol'}_x=\Pr^{\agentpol',\envpol^\ast}_x[\spec]=1>r$, as
required. Otherwise set
\[
  \varepsilon:=\min\Big\{\,\envpol^\ast(t,\agentpol'(t))\cdot\uvector
  -\uvector_t\ \Big|\ t\in L\,\Big\}\;>\;0,
\]
a minimum over at most $|\states|$ states. On $\{T<\infty\}$ we have $X_T\in W_r\cap L$, hence
$\uvector(X_T)=r$, and the step taken at time $T$ gives
\[
  \mathbb{E}[\uvector(X_{T+1})\mid\,T<\infty]
  =\envpol^\ast(X_T,\agentpol'(X_T))\cdot\uvector\ \ge\ r+\varepsilon .
\]
By the strong Markov property, prefix-independence of $\spec$, and
$\Pr^{\agentpol',\envpol^\ast}_t[\spec]=\val^{\agentpol'}_t
\ge\val^{\agentpol}_t=\uvector_t$ (soundness) for the positional
continuation from any state $t$,
\[
  \Pr[\spec\mid T<\infty]
  =\mathbb{E}\big[\val^{\agentpol'}(X_{T+1})\mid T<\infty\big]
  \ \ge\ \mathbb{E}\big[\uvector(X_{T+1})\mid T<\infty\big]
  \ \ge\ r+\varepsilon.
\]
Combining, since
$\val^{\agentpol'}_x=\Pr^{\agentpol',\envpol^\ast}_x[\spec]$,
\[
  \val^{\agentpol'}_x
  \ \ge\ \Pr[T=\infty]\cdot1+\Pr[T<\infty]\cdot(r+\varepsilon)
  \ \ge\ \min\{1,\,r+\varepsilon\}\ >\ r=\uvector_x ,
\]
the last inequality because $r<1$.
\end{proof}

\subsection{Optimality of fixpoints, termination, and complexity}

\begin{restatable}[Optimality of fixpoints]{theorem}{fixpointthm}
\label{thm:fixpoint}
Let $\agentpol$ be a positional agent policy whose value vector
$\val^{\agentpol}$ satisfies
\begin{enumerate}
\item[(i)] $\val^{\agentpol}_s=\max_{a}\min_{p\in\uncert(s,a)}
  p\cdot\val^{\agentpol}$ for all $s$ (equivalently $I=\emptyset$), and
\item[(ii)] $W_r=\AS_{\rmdp_r}(\parity(\coloring_r))=\emptyset$ for
  every value $r<1$ of $\val^{\agentpol}$.
\end{enumerate}
Then $\val^{\agentpol}=\val^{\rmdp}(\spec)$ and $\agentpol$ is optimal.
Moreover an optimal positional environment policy exists, obtained by
combining, on each value class of value $<1$, the environment's
spoiling policies of the value-class RMDPs
(Proposition~\ref{prop:spoil}) with value-minimising choices elsewhere.
\end{restatable}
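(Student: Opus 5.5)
We prove the two inequalities $\val^{\agentpol}\le\val^{\rmdp}(\spec)$ and $\val^{\rmdp}(\spec)\le\val^{\agentpol}$ separately. The first is immediate, since $\val^{\agentpol}_s=\inf_{\envpol}\Pr^{\agentpol,\envpol}_s[\spec]\le\sup_{\agentpol''}\inf_{\envpol}\Pr^{\agentpol'',\envpol}_s[\spec]$. For the second, write $\uvector:=\val^{\agentpol}$. The plan is to build one positional environment policy $\envpol^\ast$ of $\rmdp$ such that $\sup_{\agentpol''}\Pr^{\agentpol'',\envpol^\ast}_s[\spec]\le\uvector_s$ for every $s$. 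This yields both optimality of $\agentpol$ and the claimed optimal environment policy. Fixing $\envpol^\ast$ turns $\rmdp$ into an ordinary MDP, and parity MDPs have positional optimal policies. So it suffices to bound $\Pr^{\agentpol'',\envpol^\ast}_s[\spec]$ for positional $\agentpol''$ only.

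\emph{Construction of $\envpol^\ast$.} Fix a state $s$ and an action $a$. If $a\notin\tightacts{\uvector}{s}$, let $\envpol^\ast(s,a)$ be a minimiser of $p\cdot\uvector$ over $\uncert(s,a)$. Since $I=\emptyset$ by (i), $\nu(s,a)\le\uvector_s$, and non-tightness makes this strict, so $\envpol^\ast(s,a)\cdot\uvector<\uvector_s$. If $a$ is tight and $s\in U_r$ with $r<1$, apply Proposition~\ref{prop:spoil} (available by (ii)) to get the positional spoiling policy $\envpol_r$ of $\rmdp_r$. Then use Lemma~\ref{lem:transfer}(2) to pull back $\envpol_r(s,a)\in\uncert_r(s,a)$ to some $p\in\tightset{\uvector}{s}{a}$ with $\kappa_r(p)=\envpol_r(s,a)$, and set $\envpol^\ast(s,a):=p$; in particular $p\cdot\uvector=\uvector_s$. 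If $a$ is tight and $\uvector_s=1$, let $\envpol^\ast(s,a)$ be any element of the tight face. In every case $\envpol^\ast(s,a)\cdot\uvector\le\uvector_s$, with equality exactly at tight actions.

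\emph{Bounding a fixed positional $\agentpol''$.} Let $P$ be the matrix of the chain induced by $(\agentpol'',\envpol^\ast)$. Then $P\uvector\le\uvector$, so Lemma~\ref{lem:mart}(2) applies: $\uvector$ is constant, equal to some $\uvector_{\recurrent}$, on each closed recurrent class $\recurrent$, equality holds on $\recurrent$, and $\uvector_s\ge\sum_{\recurrent}\Pr_s[\lozenge\recurrent]\,\uvector_{\recurrent}$. By \eqref{eq:absorption}, $\Pr^{\agentpol'',\envpol^\ast}_s[\spec]=\sum_{\recurrent\text{ even}}\Pr_s[\lozenge\recurrent]$. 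It therefore suffices to show that every even class $\recurrent$ has $\uvector_{\recurrent}=1$, since then the sum is at most $\sum_{\recurrent}\Pr_s[\lozenge\recurrent]\,\uvector_{\recurrent}\le\uvector_s$.

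\emph{Key step: every even class has value $1$.} This is where the main difficulty lies. Suppose $\recurrent$ is even with $r:=\uvector_{\recurrent}<1$, so $\recurrent\subseteq U_r$. Equality of $P\uvector$ and $\uvector$ on $\recurrent$ forces $\agentpol''(x)$ to be tight at every $x\in\recurrent$, because non-tight actions give strict inequality. Hence the choices there are the pulled-back spoiling choices, which lie on tight faces and are supported in the closed set $\recurrent$. Lemma~\ref{lem:transfer}(3) with $X=\recurrent$ now shows two things. First, $\agentpol''$ restricted to $\recurrent$ is legal in $\rmdp_r$. Second, the chain it induces there against $\kappa_r\circ\envpol^\ast=\envpol_r$ (recall $\kappa_r(p)=p$ for $p$ supported in $U_r$) coincides with the chain in $\rmdp$. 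In $\rmdp_r$ this play satisfies $\parity(\coloring_r)$ with probability $0$ by Proposition~\ref{prop:spoil}. In $\rmdp$ it visits all of $\recurrent$ infinitely often and is therefore winning almost surely, since $\recurrent$ is even and $\coloring_r=\coloring$ on $U_r$. This is a contradiction.

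The delicate points are checking that the pull-back is well defined simultaneously for all tight actions, and that $\envpol_r$ being spoiling against \emph{every} agent policy of $\rmdp_r$ covers the restriction of an arbitrary $\agentpol''$. The latter holds because that restriction uses only tight actions on $\recurrent$.
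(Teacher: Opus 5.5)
Your proof is correct and follows essentially the same route as the paper. The shared ingredients are:
\begin{itemize}
\item the same environment policy: pulled-back spoiling choices of $\envpol_r$ on tight faces inside value classes with $r<1$, and value-minimising choices elsewhere (for tight actions at value-$1$ states this is exactly your ``any element of the tight face'');
\item the same reduction to positional agent policies;
\item the same supermartingale bound via Lemma~\ref{lem:mart}(2);
\item the same contradiction via Lemma~\ref{lem:transfer}(3) and Proposition~\ref{prop:spoil}, showing that every even recurrent class has value $1$.
\end{itemize}
You also observe that the transfer only needs tightness on $\recurrent$, not on all of $U_r$; the paper leaves this small point implicit.
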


\begin{proof}[Proof of Theorem~\ref{thm:fixpoint}]
Write $\uvector:=\val^{\agentpol}$. Since $\uvector$ is guaranteed by
$\agentpol$ against every environment policy, $\val^{\rmdp}(\spec)\ge
\uvector$ pointwise, it remains to find a single environment policy
$\widehat{\envpol}$ with
\begin{equation}\label{eq:cap}
  \Pr^{\agentpol',\widehat{\envpol}}_s[\spec]\le\uvector_s
  \qquad\text{for every agent policy }\agentpol'\text{ and every }s.
\end{equation}
Indeed, \eqref{eq:cap} gives $\val^{\rmdp}_s(\spec)\le\uvector_s=
\val^{\agentpol}_s\le\val^{\rmdp}_s(\spec)$, so
$\val^{\agentpol}=\val^{\rmdp}(\spec)$, the policy $\agentpol$ is
optimal (it attains $\uvector$), and $\widehat{\envpol}$ attains
$\inf_{\envpol}\sup_{\agentpol'}$ and is an optimal environment policy
by Proposition~\ref{prop:polytopic}.

\emph{Construction of $\widehat{\envpol}$.} By (ii) and
Proposition~\ref{prop:spoil}, for every value $r<1$ of $\uvector$ the
environment has a positional policy $\envpol_r$ \emph{on $\rmdp_r$}
that spoils the trap objective uniformly:
$\Pr^{\agentpol'',\envpol_r}_s[\parity(\coloring_r)]=0$ for every agent
policy $\agentpol''$ and every state $s$ of $\rmdp_r$. Equivalently,
against $\envpol_r$ the play in $\rmdp_r$ almost surely either reaches
$\bot$ or remains in $U_r$ forever
and violates the original parity condition there. Define the positional
environment policy $\widehat{\envpol}$ of $\rmdp$ at each pair $(s,a)$:
\begin{itemize}
\item if $s\in U_r$ for some $r<1$ and $a$ is tight at $s$: by
      Lemma~\ref{lem:transfer}(2) pick
      $p_{s,a}\in\tightset{\uvector}{s}{a}$ with
      $\kappa_r(p_{s,a})=\envpol_r(s,a)$ and let
      $\widehat{\envpol}(s,a):=p_{s,a}$. This is the \emph{uncollapsed}
      form of the choice prescribed by $\envpol_r$, and it is a legal
      environment choice of $\rmdp$;
\item otherwise ($a$ not tight at $s$, or $\uvector_s=1$): let
      $\widehat{\envpol}(s,a)\in\arg\min_{p\in\uncert(s,a)}
      p\cdot\uvector$, which is nonempty because $\uncert(s,a)$ is a
      nonempty compact polytope.
\end{itemize}
In all cases $\widehat{\envpol}(s,a)\cdot\uvector\le\uvector_s$: for a
tight action, by definition,
$p_{s,a}\cdot\uvector=\uvector_s$, the dot product being taken over the
\emph{original} vector $\uvector$, i.e.\ before the collapse and hence
including the states outside $U_r$. For a non-tight action
$\min_{p}p\cdot\uvector<\uvector_s$ by (i). Finally, if $\uvector_s=1$ then
trivially $p\cdot\uvector\le1$. 

\emph{Proof of \eqref{eq:cap}.} Fix a starting state $s$. With
$\widehat{\envpol}$ fixed and positional, the agent faces an ordinary
MDP, which by Remark~\ref{rem:app:rmc-mdp} admits a positional optimal
agent policy for the parity objective. So, it suffices to bound
$\Pr^{\agentpol',\widehat{\envpol}}_s[\spec]$ for positional
$\agentpol'$. Let $P$ be the matrix of the finite Markov chain induced
by $(\agentpol',\widehat{\envpol})$. By what was discussed before and the definition of $\widehat{\envpol}$,
$(P\uvector)(t)\le\uvector_t$ for all $t$, so $\uvector$ is a
supermartingale and by Lemma~\ref{lem:mart}(2) it is constant on every
closed recurrent class $\recurrent$, with
$\uvector_s\ge\sum_{\recurrent}
\Pr_s[\lozenge\recurrent]\,\uvector_{\recurrent}$.

\emph{Claim: every even closed recurrent class $\recurrent$ has
$\uvector_{\recurrent}=1$.} Suppose instead $\uvector_{\recurrent}=r<1$;
then $\recurrent\subseteq U_r$. On $\recurrent$ the supermartingale
inequalities are equalities, so for every $x\in\recurrent$ the action
$a=\agentpol'(x)$ is tight (a non-tight action gives a strict decrease)
and hence $\widehat{\envpol}(x,a)=p_{x,a}$ is the uncollapsed tight-face
choice prescribed by $\envpol_r$. As $\recurrent$ is closed, these
choices are supported in $\recurrent\subseteq U_r$, so
Lemma~\ref{lem:transfer}(3) with $X=\recurrent$ identifies the play from
$x$ under $(\agentpol',\widehat{\envpol})$, which stays in
$\recurrent$ forever with probability $1$, with a play of $\rmdp_r$
under $\envpol_r$ that never reaches $\bot$. By the spoiling guarantee
of $\envpol_r$ such a play violates $\parity(\coloring_r)$ almost
surely, and since it never leaves $U_r$, where $\coloring_r=\coloring$,
it violates $\spec$ almost surely. As $\Inf(\pi)=\recurrent$ almost
surely, $\max\coloring(\recurrent)$ is odd, contradicting that
$\recurrent$ is even. This proves the claim.

By \eqref{eq:absorption} and the claim,
\[
  \Pr^{\agentpol',\widehat{\envpol}}_s[\spec]
  =\sum_{\recurrent\text{ even}}\Pr_s[\lozenge\recurrent]
  =\sum_{\recurrent\text{ even}}\Pr_s[\lozenge\recurrent]\,
    \uvector_{\recurrent}
  \le\sum_{\recurrent}\Pr_s[\lozenge\recurrent]\,\uvector_{\recurrent}
  \le\uvector_s,
\]
which is \eqref{eq:cap}. The optimal environment policy
$\widehat{\envpol}$ is positional and has exactly the announced
structure: uncollapsed tight-face (spoiling) choices inside each value
class of value $<1$, and value-minimising choices elsewhere.
\end{proof}


\begin{proof}[Proof of Theorem~\ref{cor:total}]
If $\ProfSwitch(\rmdp,\agentpol)\neq\agentpol$, then by
Lemma~\ref{lem:value} or Lemma~\ref{lem:qual} we have
$\val^{\agentpol'}\ge\val^{\agentpol}$ with strict inequality in at
least one coordinate. The value vectors are thus strictly increasing in
the product order along the run, so no positional policy is visited
twice and the loop terminates within the number of positional agent
policies, $|\actions|^{|\states|}$. On termination $\agentpol$ is a
fixpoint: $I=\emptyset$ and $W_r=\emptyset$ for every value $r<1$, the conditions of Theorem~\ref{thm:fixpoint}. By Theorem~\ref{thm:fixpoint} the
returned policy is optimal and
$\val^{\agentpol}=\val^{\rmdp}(\spec)$.
\end{proof}

\begin{restatable}[Complexity]{theorem}{complexitythm}
\label{thm:complexity}
$\StratImp$ performs at most $|\actions|^{|\states|}$ iterations. One
iteration solves $O(|\states|^3+|\states|\cdot|\actions|)$ linear
programs, each of size polynomial in $|\rmdp|$, performs at most
$|\states|$ almost-sure parity computations on polytopic RMDPs of
description size $O(|\rmdp|)$, and performs
$O(|\states|^2\cdot|\actions|)$ additional bookkeeping. Consequently,
writing $T_{\mathrm{AS}}(n)$ for the cost of one almost-sure parity
computation on an instance of size $n$, one iteration runs in time
$\mathrm{poly}(|\rmdp|)+|\states|\cdot
T_{\mathrm{AS}}(O(|\rmdp|))$, and the whole algorithm in time
$|\actions|^{|\states|}\cdot\big(\mathrm{poly}(|\rmdp|)
+|\states|\cdot T_{\mathrm{AS}}(O(|\rmdp|))\big)$.
\end{restatable}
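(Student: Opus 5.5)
The bound has three parts: the iteration count, the work inside one call to $\ProfSwitch$, and their product.

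\emph{Iteration count.} This is Theorem~\ref{cor:total}, which I will cite. Each non-fixpoint call strictly increases the value vector in the product order, by Lemma~\ref{lem:value} and Lemma~\ref{lem:qual}. Hence no positional policy is visited twice, and there are at most $|\actions|^{|\states|}$ iterations.

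\emph{Cost of one iteration.} I will go through Algorithm~\ref{alg:switch} line by line.

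Line~\ref{alg:switch:solvermc} solves the RMC $\rmdp^{\agentpol}$ via Theorem~\ref{thm:rmc:parity-poly}. By Lemma~\ref{thm:rmc:almost-sure-parity} and the combined level-by-level count in its proof, $\asoddsat$ uses $O(|\states|^3)$ LP queries. The safety step then solves $(\cross)$ once per source state, which is $O(|\states|)$ further LPs. The preprocessing sets $\states^{=0},\states^{=1}$ of \citep{AsadiCGKS26} run in polynomial time; if they are not LP-based, I charge them to the $\mathrm{poly}(|\rmdp|)$ term.

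Computing $\nu(s,a)$ takes one LP per pair $(s,a)$, so $|\states|\cdot|\actions|$ LPs. Each of these LPs has the linearly defined description of $\uncert(s,a)$ as constraints and the fixed rational vector $\val^{\agentpol}$ in its objective. The vector $\val^{\agentpol}$ is the output of polynomially many polynomial-size LPs, so its entries have polynomial bit-size, and every LP in this step has size polynomial in $|\rmdp|$.

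The remaining work is bookkeeping:
\begin{itemize}
\item forming $I$ and the argmax takes $O(|\states||\actions|)$ comparisons;
\item grouping states into value classes takes $O(|\states|^2)$ comparisons;
\item assembling the tight-action sets takes $O(|\states||\actions|)$;
\item writing down each $\rmdp_r$ takes $O(|\states|^2|\actions|)$ in total.
\end{itemize}
Together these fit within the claimed $O(|\states|^2|\actions|)$ bound.

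For the qualitative step, there are at most $|\states|$ nonempty value classes with $r<1$, so at most $|\states|$ calls to $\textsc{AlmostSureParity}$. Lemma~\ref{lem:mr-welldefined} shows that each $\rmdp_r$ is linearly defined (polytopic) with description size $O(|\rmdp|)$. Each call therefore costs at most $T_{\mathrm{AS}}(O(|\rmdp|))$, assuming $T_{\mathrm{AS}}$ is monotone. Assembling $\agentpol'$ costs $O(|\states|)$.

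\emph{Total.} Adding these gives per-iteration time $\mathrm{poly}(|\rmdp|)+|\states|\cdot T_{\mathrm{AS}}(O(|\rmdp|))$. Multiplying by the iteration bound gives the stated total time.

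\emph{Main obstacle.} The delicate point is the bit-size bookkeeping: showing that the LPs built from $\val^{\agentpol}$ stay polynomial in $|\rmdp|$ rather than growing across iterations. The answer is that $\val^{\agentpol}$ is recomputed from scratch in every iteration, from $\rmdp$ and a positional $\agentpol$ only. Its bit-size is thus bounded by a fixed polynomial in $|\rmdp|$, uniformly over all iterations, using the vertex-size argument of Lemma~\ref{lem:app:vertex-size} applied to the LP optima. The same uniformity is what lets $\rmdp_r$ keep size $O(|\rmdp|)$, up to the polynomial bit-length of the coefficients of $\uvector$.
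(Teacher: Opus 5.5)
Your proposal is correct and follows essentially the same route as the paper: cite Theorem~\ref{cor:total} for the iteration bound, count the $O(|\states|^3)$ LPs of the RMC solver (via the combined level-by-level count of Lemma~\ref{thm:rmc:almost-sure-parity}) plus the $|\states|\cdot|\actions|$ one-step LPs, bound the almost-sure calls by the number of value classes using Lemma~\ref{lem:mr-welldefined}, and multiply. You add one point the paper's proof only asserts: $\val^{\agentpol}$ has bit-size polynomial in $|\rmdp|$ uniformly over all iterations, because it is the optimum of a polynomial-size LP built only from $\rmdp$ and the positional $\agentpol$, and this is what keeps the one-step LPs and the sets $\uncert_r(s,a)$ of polynomial size.
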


\begin{proof}[Proof of Theorem~\ref{thm:complexity}]
The iteration bound is Theorem~\ref{cor:total}. Consider one call to
$\ProfSwitch$.

Line~1 computes the parity value vector of the linearly defined RMC
$\rmdp^{\agentpol}$. By Theorem~\ref{thm:rmc:parity-poly} and its proof
this takes $O(|\states|^3)$ linear programs: computing the almost-sure
violation set $W_{\mathrm{odd}}=\asoddsat(\states)$ performs
$O(|\states|^3)$ feasibility and maximisation programs
(Lemma~\ref{thm:rmc:almost-sure-parity}), and the safety
program~$(\star)$, which has $O(|\states|^2)$ variables and
polynomially many constraints, is solved once per source state. Note
that the cubic bound of Lemma~\ref{thm:rmc:almost-sure-parity} covers
the \emph{whole} nested recursion of $\asoddsat$ and $\mecdecomp$, and
is not the cost of one MEC decomposition multiplied by the number of
$\asoddsat$ nodes: taking the two recursions together, there are at
most $|\states|$ levels, the sets processed at any fixed level are
pairwise disjoint, and one invocation of $\mecdecomp(B)$ costs only
$O(|B|^2)$ linear programs
(Theorem~\ref{thm:rmc:mec-decomp}), so each level costs
$O(|\states|^2)$ programs and the total is $O(|\states|^3)$ rather than
$O(|\states|^4)$, See the proof of
Lemma~\ref{thm:rmc:almost-sure-parity} in
Appendix~\ref{app:rmc:parity}. Lines~2--4 solve exactly
$|\states|\cdot|\actions|$ linear programs, one per pair $(s,a)$, each
of them a minimisation of a linear functional over the facet description
of $\uncert(s,a)$. Line~5 and the switch of line~8 are
$O(|\states|\cdot|\actions|)$ comparisons. This accounts for the
$O(|\states|^3+|\states|\cdot|\actions|)$ programs of the statement;
each has size polynomial in $|\rmdp|$ and is therefore solvable in time
polynomial in $|\rmdp|$.

The loop of lines~11--15 is entered at most $|\states|$ times, once per
nonempty value class of value $<1$. Each iteration of the loop assembles
$\rmdp_r$ and $\coloring_r$ and performs one almost-sure parity
computation. The assembly needs no further optimisation. Indeed
$\rmdp_r$ has state set $U_r\cup\{\bot\}$. The actions available at
$s\in U_r$ are the tight ones, obtained by comparing the already
computed $\nu(s,a)$ with $\val^{\agentpol}_s$. The colors are inherited
from $\coloring$ on $U_r$, with the single fresh odd color
$2\lceil d/2\rceil+1$ at $\bot$ and each uncertainty set is
\[
  \uncert_r(s,a)=\Big\{\,q\ \Big|\ \exists p:\ A_{s,a}\,p\le b_{s,a},
  \ \ p\cdot\val^{\agentpol}\le\val^{\agentpol}_s,\ \
  q=\kappa_r(p)\,\Big\},
\]
a linear system in $(q,p)$ built from the description of $\uncert(s,a)$
by adding one inequality and the defining equalities of $\kappa_r$. Note
that the optima $\nu(s,a)$ enter only through the tightness test: the
uncertainty sets themselves are described using $\val^{\agentpol}$
alone.

Two consequences for the size. First, each $\rmdp_r$ has description
size $O(|\rmdp|)$ and is written down in time $O(|\rmdp|)$. Second, and
more sharply, the value classes are pairwise disjoint and $\rmdp_r$
contains only the states of $U_r$, so
$\sum_{r<1}|\rmdp_r|=O(|\rmdp|)$: the instances handed to the
almost-sure parity procedure during one iteration have \emph{total}
description size $O(|\rmdp|)$. Hence the cost of the loop is
$\sum_{r<1}T_{\mathrm{AS}}(|\rmdp_r|)\le
|\states|\cdot T_{\mathrm{AS}}(O(|\rmdp|))$ plus
$O(|\rmdp|)$ bookkeeping. Grouping the identification of the value
classes, the patching of line~17 and the comparisons above gives the
stated $O(|\states|^2\cdot|\actions|)$ bookkeeping, and multiplying by
the iteration bound gives the overall running time.
\end{proof}

\begin{remark}[Space of the almost-sure subroutine]
\label{rem:as-space}
The almost-sure parity procedure of~\citet{AsadiCGKS26} runs in
polynomial space, and it is this property that underlies the third item
of Theorem~\ref{cor:total}. Concretely, the procedure is a recursion of
depth at most $|\states|$ that stores, at each frame, only a subset of
the state space together with $O(\log|\states|)$ counters. That is,
$O(|\states|)$ bits per frame and $O(|\states|^2)$ bits over the whole
stack, and whose only other work consists of feasibility and
optimisation queries against the linear descriptions of the uncertainty
sets. Each such query is a linear program of size polynomial in the
encoding of the instance, hence solvable in polynomial time and polynomial space, and the space it occupies is
released before the next query is issued. The recursion therefore uses
polynomial space in total, even though its running time is only
quasi-polynomial. 
\end{remark}

\begin{remark}\label{sec:rmdp:reach}
A reachability objective $\reach(\target)$ is the parity
objective of the coloring that assigns color $2$ to the
(absorbing) target states and color $1$ elsewhere, so our algorithm
applies unchanged. For this coloring the qualitative improvement never
happens: every value class with $r < 1$ contains no target state, so
every state of $U_r$ has odd color $1$ in $\coloring$ and every state
outside $U_r$ has the fresh dominating odd color in $\coloring_r$; hence
all states are odd-colored under $\coloring_r$ and
$\AS_{\rmdp_r}(\parity(\coloring_r)) = \emptyset$ automatically.
\end{remark}

\section{Additional Experimental Details}
\label{app:experiments}

\paragraph{Benchmarks.}
We evaluate both algorithms on three benchmarks. Garnet and Inventory
Management support both objectives; Frozen Lake supports only
reachability.

\begin{itemize}
\item \textbf{Garnet.}
The $N$ states are partitioned into $k+1$ disjoint subsets
$G_0,\dots,G_k$ (we fix $k=5$), with $G_0$ holding half of the states
and containing the initial state $s_0$. Every state has three actions,
each leading to $m=\lceil\sqrt{N}\rceil$ distinct successors drawn
uniformly at random without replacement. For a state in $G_i$ with
$i>0$, all successors are drawn from within $G_i$, so the subset is
never left. For a state in $G_0$, the successors are drawn from within
$G_0$, but each action additionally diverts a small random fraction of
its probability mass to a uniformly random state of a uniformly random
subset among $G_1,\dots,G_k$; this is how an action taken in $G_0$ can
leave $G_0$ and enter another subset. The nominal transition
distribution over the successors of every state--action pair is drawn
uniformly from the probability simplex (i.e., $\mathrm{Dirichlet}(1,\dots,1)$),
and is then wrapped in an $L_\infty$ or $L_1$ uncertainty ball of radius
$0.5/m$ or $0.1$, respectively.

For the reachability objective, each subset independently contains
one target state and one trap state, and the goal is to reach the
target state of some subset. For the parity objective, every state
is assigned a color (priority) drawn uniformly from $\{1,\dots,|G_i|\}$,
and the goal is to satisfy the induced parity condition. In the
scaling experiments the branching factor is coupled to the size,
$m=\lceil\sqrt{N}\rceil$, so that larger instances are also denser.

\item \textbf{Inventory Management.}
The state is the stock level $s\in\{0,\dots,N\}$ of a warehouse of
capacity $N$. At each step the agent orders $a\in\{0,\dots,a_{\max}\}$
new units, the demand $d\in\{0,\dots,d_{\max}\}$ materializes, and the
stock moves deterministically to $s'=s+a-d$. If the demand exceeds the
available stock ($d>s+a$), or the delivery pushes the stock over the
capacity ($s+a-d>N$), the run instead ends in one of two absorbing
failure states. We fix $d_{\max}=\lceil\sqrt{N}\rceil$ and
$a_{\max}=\lceil 0.75\,d_{\max}\rceil$, so the number of possible
demand values, and hence the branching factor, is
$m=d_{\max}+1=\Theta(\sqrt{N})$. The nominal demand distribution over
$\{0,\dots,d_{\max}\}$ is a lightly perturbed uniform (each outcome is
weighted by an independent $U[0.75,1.25]$ factor and the weights are
then normalized), drawn independently per state, and is wrapped in an
$L_\infty$ or $L_1$ uncertainty ball of radius $\delta=0.25/m$ or $0.1$,
respectively.

Both objectives use a threshold $K$ drawn from $U[0.35,0.65]\cdot N$.
Starting from $s_0=0$, the reachability objective is to bring the stock
up to $s\ge K$, and the parity objective is to keep the stock at
$s\ge K$ infinitely often. Both objectives also share a second
threshold $T$, drawn uniformly from $\{d_{\max},\dots,K\}$, which
governs a larger order option: once the stock reaches $s\ge T$, an
additional order of $d_{\max}$ units becomes available to the agent on
top of the amounts $\{0,\dots,a_{\max}\}$.

\item \textbf{Frozen Lake.}
The agent walks on a $k\times k$ grid from the top-left cell toward the
bottom-right target cell, so the state count is $N=k^2$. Each move goes
in the intended direction or in one of the two perpendicular directions
with probability $\tfrac13$ each, giving a branching factor of $m\le3$
(cut down at the grid's border). A random $20\%$ of the cells are
holes, which are absorbing failure states. The nominal move
distribution of each state is wrapped in an $L_\infty$ or $L_1$
uncertainty ball whose radius is drawn per state from $U[0,0.125]$ or
$U[0,0.25]$, respectively. Because the branching factor stays bounded
by $3$ regardless of $N$, this benchmark scales to much larger state
spaces than the other two. On Frozen Lake we experiment only with the
reachability objective (reaching the target cell).
\end{itemize}

\begin{remark}
The bounds of Theorem~\ref{thm:rmc:mec-decomp} and
Theorem~\ref{thm:complexity} are worst-case upper bounds on the
\emph{number} of primitive queries against the uncertainty sets, and on
all of our benchmarks the number actually issued is considerably smaller.
The gap is not incidental, and it is not a property of the particular
instances: it follows from the branching factor, and it applies to any
family of linearly defined RMDPs whose uncertainty sets are described
over few coordinates. We record it here because it is what reconciles the
cubic bound of Theorem~\ref{thm:complexity} with the measured solve times
of Figure~\ref{fig:scaling}.
\end{remark}

\begin{remark}\label{rem:app:iterations}
Figure~\ref{fig:iterations} records how many iterations our algorithm
and the baseline take on each benchmark. For both of them, the observed
counts remain well below the theoretical worst case, which is
exponential in the number of states. This agrees with what is commonly
reported for policy iteration on stochastic games, where the iteration
count is typically polynomial in practice even though no subexponential
bound is known. The practical behaviour of our algorithms is thus
considerably better than the worst-case analysis predicts.
\end{remark}

\begin{figure}[t]
    \centering
    \includegraphics[width=0.7\textwidth]{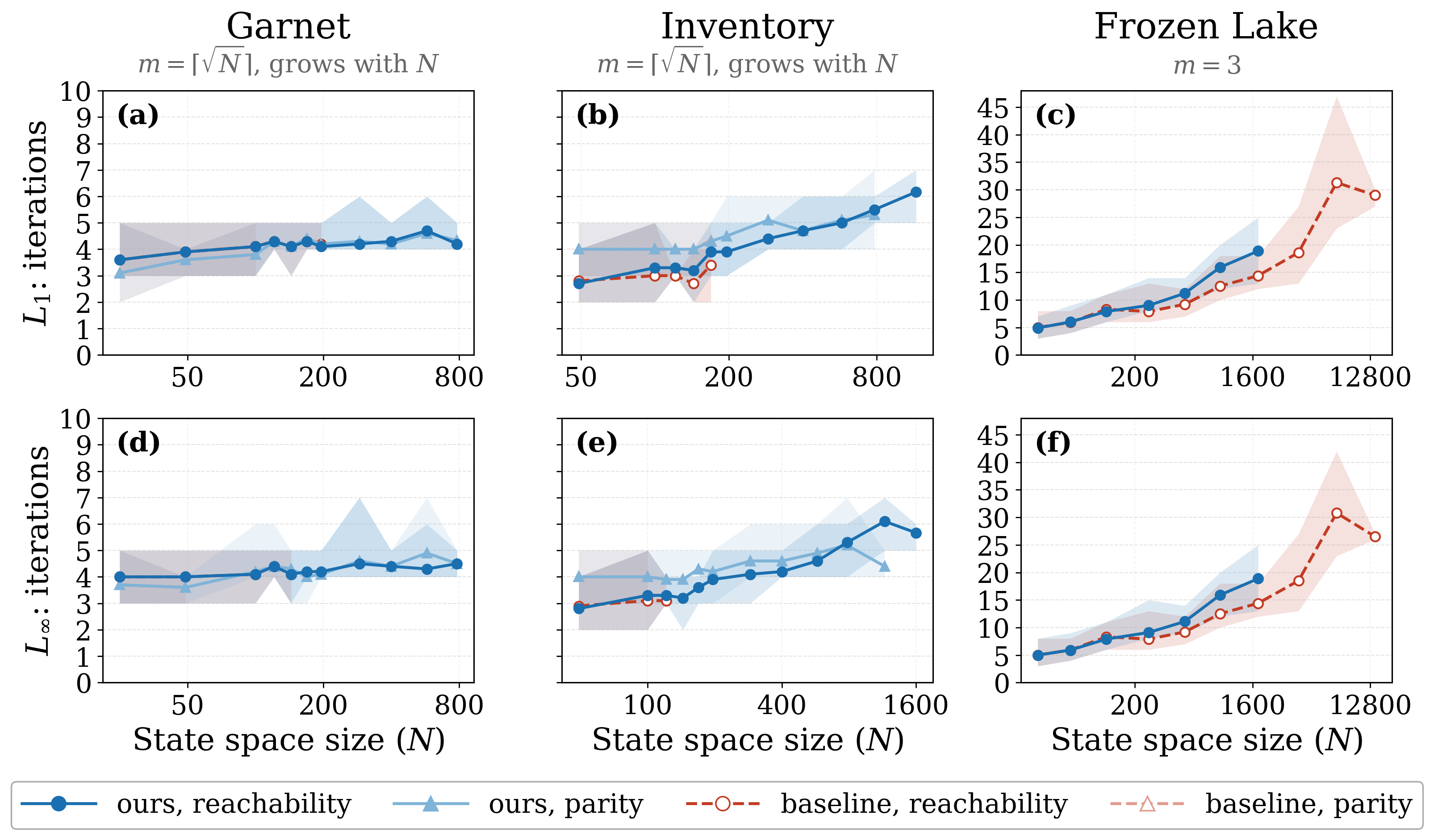}
    \caption{Comparison between number of iterations of the policy iteration algorithm on our benchmarks. The number of iterations is much smaller than the theoretical worst-case bound.}
    \label{fig:iterations}
\end{figure}
\end{document}